\definecolor{Gray}{gray}{0.92}
\newtheorem{theorem}{Theorem}[section]
\newtheorem{lemma}[theorem]{Lemma}
\renewcommand{\mathbf}{\boldsymbol}
\newcommand{\mb}{\mathbf}
\newcommand{\mc}{\mathcal}
\newcommand{\bb}{\mathbb}
\newcommand{\set}[1]{\left\{ #1 \right\}}
\newcommand{\reals}{\bb R}
\newcommand{\eps}{\varepsilon}
\newcommand{\R}{\reals}
\newcommand{\paren}{\pqty}
\newcommand{\brac}{\bqty}
\DeclareMathOperator{\sign}{sign}
\DeclareMathOperator{\st}{s.t.}
\DeclareMathOperator*{\argmax}{arg\,max}
\DeclareMathOperator*{\argmin}{arg\,min}
\newcommand{\wh}{\widehat}
\newcommand{\T}{\intercal}
\def\eqref#1{equation~\ref{#1}}
\def\1{\bm{1}}
\DeclareMathAlphabet{\mathsfit}{\encodingdefault}{\sfdefault}{m}{sl}
\SetMathAlphabet{\mathsfit}{bold}{\encodingdefault}{\sfdefault}{bx}{n}
\def\gD{{\mathcal{D}}}
\def\gX{{\mathcal{X}}}
\def\gY{{\mathcal{Y}}}
\newcommand{\hy}[1]{\textcolor{black}{#1}}
\title{Selective Classification Under Distribution Shifts}
\author{\name Hengyue Liang \email liang656@umn.edu \\
      \addr Department of Electrical and Computer Engineering\\
            University of Minnesota
      \AND
      \name Le Peng \email peng0347@umn.edu \\
      \addr Department of Computer Science and Engineering \\
            University of Minnesota
      \AND
      \name Ju Sun \email jusun@umn.edu\\
      \addr Department of Computer Science and Engineering \\
            University of Minnesota}
\begin{document}

\maketitle

\begin{abstract}
In selective classification (SC), a classifier abstains from making predictions that are likely to be wrong to avoid excessive errors. To deploy imperfect classifiers---either due to intrinsic statistical noise of data or for robustness issue of the classifier or beyond---in high-stakes scenarios, SC appears to be an attractive and necessary path to follow. Despite decades of research in SC, most previous SC methods still focus on the ideal statistical setting only, i.e., the data distribution at deployment is the same as that of training, although practical data can come from the wild. To bridge this gap, in this paper, we propose an SC framework that takes into account distribution shifts, termed \emph{generalized selective classification}, that covers label-shifted (or out-of-distribution) and covariate-shifted samples, in addition to typical in-distribution samples, \emph{the first of its kind} in the SC literature. We focus on non-training-based confidence-score functions for generalized SC on deep learning (DL) classifiers, and propose two novel margin-based score functions. Through extensive analysis and experiments, we show that our proposed score functions are more effective and reliable than the existing ones for generalized SC on a variety of classification tasks and DL classifiers. The code is available at \url{https://github.com/sun-umn/sc_with_distshift}.  

\end{abstract}

\vspace{-1em}
\section{Introduction}
\label{Sec: introduction} 
In practice, classifiers almost never have perfect accuracy. Although modern classifiers powered by deep neural networks (DNNs) typically achieve higher accuracy than the classical ones, they are known to be unrobust: perturbations of inputs that are inconsequential to human decision making can easily alter DNN classifiers' predictions~\citep{carlini2019evaluating, croce2020robustbench,hendrycks2018benchmarking,liang2023optimization}, and more generally, shifts in data distribution in deployment from that in training often cause systematic classification errors. These classification errors, regardless of their source, are rarely acceptable for high-stakes applications, such as disease diagnosis in healthcare. 

To achieve minimal and controllable levels of classification error so that imperfect and unrobust classifiers can be deployed for high-stakes applications, a promising approach is \emph{selective classification} (SC): samples that are likely to be misclassified are selected, excluded from prediction, and deferred to human decision makers, so that the classification performance on the remaining samples reaches the desired level~\citep{chow1970optimum,franc2023optimal,geifman2017selective}. For example, by flagging and passing uncertain patient cases that it tends to mistake on to human doctors, an intelligent medical agent can make confident and correct diagnoses for the rest. This ``conservative'' classification framework not only saves doctors' efforts, but also avoids liability due to the agent's mistakes. 

Consider a multiclass classification problem with input space $\gX \in \R^n$, label space $\gY = \set{1, \dots, K}$, and training distribution $\mc D_{\mc X, \mc Y}$ on $\mc X \times \mc Y$. For any classifier $f:\mc X \to \mc Y$, there are many potential causes of classification errors. In this paper, we focus on three types of errors that are commonly encountered in practice and are studied extensively, but mostly separately, in the literature. 
\begin{itemize}[leftmargin=1em,nosep]
    \item \texttt{Type A errors}: errors made on \emph{in-distribution} (In-D) samples, i.e., those samples drawn from $\mc D_{\mc X, \mc Y}$. These are classification errors discussed in typical statistical learning frameworks~\citep{mohri2018foundations}; 
    
    \item \texttt{Type B errors}: errors made on \emph{label-shifted} samples, i.e., those samples with groundtruth labels not from $\gY$. Since $f$ assigns labels from $\gY$ only, it always errs on these samples; 
    
    \item \texttt{Type C errors}: errors made on \emph{covariate-shifted} samples, i.e., samples drawn from a different input distribution $\mc D'_{\mc X}$ where $\mc D'_{\mc X} \neq \mc D_{\mc X}$ but with groundtruth labels from $\mc Y$.  
\end{itemize} 
It is clear that in practical deployment of classifiers, samples can come from the wild, and hence \texttt{Type A}, \texttt{Type B} and \texttt{Type C} errors can coexist. In order to ensure the reliable deployment of classifiers in high-stakes applications, we must control the three types of errors, \emph{jointly}. Unfortunately, previous research falls short of a unified treatment of these errors. Classical SC \citep{chow1970optimum} focuses on rejecting samples that cause In-D errors \texttt{(Type A)}, whereas the current \emph{out-of-distribution (OOD) detection} research \citep{yang2021generalized,park2023nearest} focuses on detecting label-shifted samples \texttt{(Type B)}. Although \citet{hendrycks2016baseline,granese2021doctor,xia2022augmenting,kim2023unified} have advocated the simultaneous detection of samples that cause \texttt{Type A} and \texttt{Type B} errors, their approaches still treat the problem as consisting of two \emph{separate} tasks, reflected in their \emph{separate and independent} performance evaluation on OOD detection and SC. Regarding the challenge posed by \texttt{Type C} errors, existing work \citep{hendrycks2018benchmarking, croce2020robustbench} focuses primarily on obtaining classifiers that are more robust to covariate shifts, not on rejecting potentially misclassified samples due to covariate shifts---the latter, to the best of our knowledge, has not yet been explicitly considered, not to mention joint rejection together with \texttt{Type A} and \texttt{Type B} errors. 

In this paper, our goal is to close the gap and consider, \emph{for the first time}, rejecting all three types of errors in a unified framework. For brevity, we use the umbrella term \emph{distribution shifts} to cover both label shifts and covariate shifts, which are perhaps the most commonly seen types of distribution shifts, with the caveat that practical distribution shifts can also be induced by other sources. So, we call the unified framework considered in this paper \emph{selective classification under distribution shifts}, or \emph{generalized selective classification}. Another key desideratum is practicality.  With the increasing popularity of foundation models and associated downstream few-shot learners~\citep{brown2020language,radford2021learning,yuan2021florence}, accessing massive original training data becomes increasingly more difficult. Moreover, there are numerous high-stakes domains where training data are typically protected due to privacy concerns, such as healthcare and finance. These applied scenarios call for \emph{SC strategies that can work with given pretrained classifiers and do not require access to the training data}, which will be our focus in this paper. \textbf{Our contributions} include: 
\begin{itemize}[leftmargin=1em,nosep]
    \item We \hy{advocate} a new SC framework, \emph{generalized selective classification}, which rejects samples that could cause \texttt{Type A}, \texttt{Type B} and \texttt{Type C} errors \emph{jointly}, to improve classification performance over the non-rejected samples. With careful review and reasoning, we argue that generalized SC covers and unifies the scope of the existing OOD detection and SC, if \emph{the goal is to achieve reliable classification on the selected samples}. (\cref{sec:sc_dist_shift,Subsec: eva SC OOD})
    
    \item Focused on non-training-based (or post-hoc) SC settings, we identify a critical scale-sensitivity issue of several SC confidence scores based on softmax responses (\cref{subsec: SR-based score is problematic}) which are popularly used and reported to be the state-of-the-art (SOTA) methods in the existing SC literature \citep{geifman2017selective,feng2023towards}.  
    
    \item We propose two confidence scores based on the raw logits (v.s. the normalized logits, i.e., softmax responses), inspired by the notion of margins (\cref{subsec: DNN margins as confidence scores}). Through careful analysis (\cref{Sec: analysis of selective scores}) and extensive experiments (\cref{Sec: Experiments}), we show that our margin-based confidence scores are more reliable for generalized SC on various dataset-classifier combinations, even under moderate distribution shifts. 
\end{itemize}

\vspace{-0.5em}
\section{Technical background and related work}
\label{Sec: related works}
\subsection{Selective classification (SC)}
\label{subsec: selective classifiers definition}
Consider a multiclass classification problem with input space $\gX \in \R^n$,  label space $\gY = \set{1, \dots, K}$, and data distribution $\mc D_{\mc X, \mc Y}$ on $\mc X \times \mc Y$. A selective classifier $(f, g)$ consists of a predictor $f: \gX \rightarrow \R^K$ and a selector $g: \gX \rightarrow \{ 0, 1 \}$ and works as follows: 
\begin{align}
\label{eq: def of selective classifier}
    (f, g)(\mb x) \triangleq 
    \begin{cases}
        f(\mb x) &  \text{if}\; g(\mb  x) = 1  \\
        \text{abstain}  &  \text{if} \; g(\mb x) = 0
    \end{cases}, 
\end{align}
for any input $\mb x \in \mc X$. Typical selectors $g$ take the form:  
\begin{equation}
\label{eq: def selector}
    g_{s, \gamma}(\mb x) = \mathbbm{1}[s(\mb x) > \gamma],
\end{equation}
where $s(\mb x)$ is a \emph{confidence-score} function, and $\gamma$ is a tunable threshold for selection.

\subsection{Prior work in SC}
\label{subsec: DNN confidence scores}
For a given selective classifier $
(f, g_{s, \gamma})$, its SC performance is often characterized by two quantities: 
\begin{gather}
\begin{aligned}
\text{(\textbf{coverage})} \; \phi_{s, \gamma} & = \bb E_{\mc D_{\mc X, \mc Y}} \brac{ g_{s, \gamma}(\mb x)}, \quad & \text{(higher the better)}, \\
\text{
(\textbf{selection risk})} \; R_{s, \gamma} & = \bb E_{\mc D_{\mc X, \mc Y}} \brac{\ell(f(\mb x), y) g_{s, \gamma} (\mb x)} / \phi_{s, \gamma}, \quad & \text{(lower the better)},  
\label{eq:selection_risk_def}
\end{aligned}   
\end{gather}
Because a high coverage typically comes with a high selection risk, there is always a need for risk-coverage tradeoff in SC. Most of the existing work considers $\ell$ to be the standard $0/1$ classification loss~\citep{chow1970optimum, el2010foundations, geifman2018bias}, and we also follow this convention in this paper. A classical cost-based formulation is to optimize the risk-coverage (RC) tradeoff~\citep{chow1970optimum}

\begin{equation}
\label{eq: cost based formulation}
\min\nolimits_{f, g_{s, \gamma}} \; \bb E_{\mc D_{\mc X, \mc Y}} \brac{\ell(f(\mb x), y)
 g_{s, \gamma} (\mb x)} + \eps \bb E_{\mc D_{\mc X, \mc Y}} \brac{ 1 - g_{s, \gamma}(\mb x)}
\quad \equiv \quad \min\nolimits_{f, g_{s, \gamma}} \; R_{s,\gamma} \phi_{s, \gamma} - \eps \phi_{s, \gamma} 
,  
\end{equation}
where $\eps \in [0, 1] $ is the cost of making a rejection. The optimal selective classifier for this formulation is~\citep{chow1970optimum,franc2023optimal}: 
\begin{gather}
\begin{aligned}
\label{eq: optimal SC rule}
    f^*  = \argmin\nolimits_{\wh{y} \in \mc Y} \sum\nolimits_{y \in \mc Y} p(y | \mb x) \ell\paren{\wh{y}, y}, \quad \text{and} \quad 
    g^*  =  \mathbbm{1}[-\min\nolimits_{\wh{
y} \in \mc Y} \sum\nolimits_{y \in \mc Y} p(y | \mb x) \ell\paren{\wh{y}, y} > -
\eps],  
\end{aligned}
\end{gather}
where $f^*$ is the Bayes optimal classifier and depends on the posterior probabilities $p(y | \mb x)$ for all $y \in \mc Y$, which are hard to obtain in practice. Moreover, solutions to two constrained formulations for the RC tradeoff, 
\begin{equation}
\label{Eq. constrained specification of SC 1}
\min\nolimits_{f, g_{s,\gamma}} R_{s,\gamma},  \; \st \; \phi_{s,\gamma} \ge \omega \quad 
\text{and} \quad \max\nolimits_{f, g_{s,\gamma}} \phi_{s,\gamma},  \; \st \; R_{s,\gamma} \le \lambda,  
\end{equation}
also depend on the posterior probabilities~\citep{pietraszek2005optimizing,geifman2017selective,franc2023optimal,el2010foundations}.

\paragraph{Training-based scores}
Due to the intractability of true posterior probabilities in practice, many previous methods focus on learning effective confidence-score functions from training data. They require access to training data and learn parametric score functions, often under cost-based/constrained formulations and their variants for the RC tradeoff. This learning problem can be formulated together with~\citep{chow1970optimum,pietraszek2005optimizing,grandvalet2008support,el2010foundations,cortes2016boosting,geifman2019selectivenet,liu2019deep,huang2022self,gal2016dropout,lakshminarayanan2017simple, geifman2018bias,maddox2019simple,dusenberry2020efficient,lei2014classification,villmann2016self,corbiere2019addressing} or separately from training the classifier~\citep{jiang2018trust,fisch2022calibrated,franc2023optimal}. However, \citet{feng2023towards} has recently shown that these training-based scores do not outperform simple non-training-based scores described below. 

\paragraph{Manually designed (non-training-based) scores}
This family works with any given classifier and does not assume access to the training set. This is particularly attractive when it comes to modern pretrained large DNN models, e.g., CLIP~\citep{radford2021learning}, Florence~\citep{yuan2021florence}, and GPTs~\citep{brown2020language}, for which obtaining the original training data and performing retraining are prohibitively expensive, if not impossible, to typical users. \cref{alg: confidence-based sc} shows a typical use case of SC with non-training-based scores. Different confidence scores have been proposed in the literature. For example, for support vector machines (SVMs), confidence margin (the difference of the top two raw logits) has been used as a confidence score~\citep{fumera2002support,franc2023optimal}; see also \cref{subsec: DNN margins as confidence scores}. For DNN models, \emph{which is our focus}, confidence scores are popularly defined over the \emph{softmax responses} (SRs). Assume that $\mb z \in \R^K$ contains the raw logits (RLs) and $\sigma$ is the softmax activation. The following three confidence-score functions 
\begin{gather}
\label{eq: SR based SC scores}
   \begin{aligned}
    SR_{\text{max}} (\mb z) \triangleq \;  \max_i \sigma(z_i), \quad 
    SR_{\text{doctor}} (\mb z) \triangleq \;   1 - 1/ \norm{\sigma(
\mb z)}^2_2, \quad 
    SR_{\text{ent}} (\mb z) \triangleq \;  \sum\nolimits_i \sigma(z_i)\log \sigma(z_i), 
    \end{aligned} 
\end{gather}
are popularly used in recent work, e.g., \cite{feng2023towards,granese2021doctor,xia2022augmenting}. Although simple, $SR_{\text{max}}$ can easily beat existing training-based methods~\citep{feng2023towards}. On the other hand, these SR-based score functions generally follow the plug-in principle by assuming that SRs approximate posterior probabilities well~\citep{franc2023optimal}. Unfortunately, this assumption often does not hold in practice, and bridging this approximation gap is a major challenge for confidence calibration~\citep{guo2017calibration,nixon2019measuring}. However, \citet{zhu2022rethinking} reveals that recent calibration methods may even degrade SC performance.  

\begin{algorithm}[tb]
\caption{Non-training-based selective classification}
\label{alg: confidence-based sc}
\begin{algorithmic}[1]
\Require A pretrained classifier $f$; a score function $s$; a small calibration dataset $ Z^{cali} \sim_{iid} \gD^{cali}_{\mc X, \mc Y}$
\State $\forall (\mb x_i, y_i) \in Z^{cali}$, compute $s(\mb x_i)$ and $\ell(f(\mb x_i), y_i)$
\State Determine a threshold $\gamma$ according to the coverage or selection-risk target 
\State Deploy the selector $g_{s, \gamma}$ based on \cref{eq: def selector}.
\end{algorithmic}%
\end{algorithm}

\subsection{SC under distribution shifts: generalized SC}
\label{sec:sc_dist_shift}

In this paper, we consider SC under distribution shifts, or \emph{generalized selective classification}. Shifts between training and deployment distributions are common in practice and can often cause performance drops in deployment~\citep{quinonero2008dataset,rabanser2019failing,koh2021wilds}, raising reliability concerns for high-stakes applications in the real world. In this paper, we use the term \emph{distribution shifts} to cover both covariate and label shifts---perhaps the most prevalent forms of distribution shifts (see the beginning of \cref{Sec: introduction} for their definitions)---jointly. Although the basic set-up for our generalized SC framework remains the same as that of \cref{eq: def of selective classifier,eq: def selector}, we need to modify the definitions for selection risk and coverage in \cref{eq:selection_risk_def} to take into account potential distribution shifts: 
\begin{gather} 
\begin{aligned} 
\text{(\textbf{coverage})} \; \phi_{s, \gamma}  = \bb E_{\mc D'_{\mc X, \mc Y'}} \brac{ g_{s, \gamma}(\mb x)}, \quad \text{and} \quad 
\text{(\textbf{selection risk})} \; R_{s, \gamma}  = \bb E_{\mc D'_{\mc X, \mc Y'}} \brac{\ell(f(\mb x), y) g_{s, \gamma} (\mb x)} / \phi_{s, \gamma}
\label{eq: generalized SC RC}, 
\end{aligned}   
\end{gather}
where $\mc D_{\mc X, \mc Y}$ is the original data distribution, $\mc D'_{\mc X, \mc Y'}$ is the shifted distribution---$\mc Y'$ may not be the same as $\mc Y$ due to potential label shifts.\footnote{We assume no \emph{outliers} in generalized SC---samples that do not follow any specific statistical patterns---during deployment, i.e., they are already detected and removed after separate data preprocessing steps. This allows us to properly define the coverage and selection risk.} 

\begin{algorithm}[tb]
\caption{Typical OOD detection pipeline (e.g., \citet{sun2021react})}
\label{alg: OOD detector}
\begin{algorithmic}[1]
\Require An OOD score function $s_{OOD}$; an In-D calibration dataset $X^{in} \sim_{iid} \gD_{\mc X}$ and an OOD calibration dataset $X^{ood} \sim_{iid} \gD^{OOD}_{\mc X}$

\State $\forall \mb x_i \in X^{in}$ and $\forall \mb x_j \in X^{ood}$, compute $s_{OOD}(\mb x_i)$ and $s_{OOD}(\mb x_j)$.
\State Compute a threshold $\gamma_{\text{OOD}}$ using \cref{eq: def of ood detection} by problem-specific target requirements, e.g., a target TPR (true positive rate) value.
\State Deploy the OOD detector according to \cref{eq: def of ood detection}.
\end{algorithmic}%
\end{algorithm}
\paragraph{Out-of-distribution (OOD) detection as a weak form of generalized SC} 
The goal of OOD detection is to detect and exclude OOD samples~\citep{yang2021generalized}. An ideal OOD detector $G(\mb x)$ should perfectly separate In-D and OOD samples: 
\begin{align} 
    \label{eq: def of ood detection}
    G(\mb x) =
    \begin{cases}
        0 \; (\text{i.e., excluded}) &  \text{if}\; \mb  x  \sim \gD^{\text{OOD}}_{\mc X}  \\
        1 \; (\text{i.e., kept}) &  \text{if} \; \mb x \sim \gD_{\mc X} 
    \end{cases}, 
    \quad \text{which is often realized as} \quad  G(\mb x) = \mathbbm{1}[s_{\text{OOD}}(\mb x) > \gamma]. 
\end{align}
Here, $s_{\text{OOD}}(\cdot)$ is a confidence-score function indicating the likelihood that the input is an In-D sample, and $\gamma$ is again a tunable cutoff threshold. Although by the literal meaning of OOD both covariate and label shifts are covered by $\gD^{\text{OOD}}_{\mc X}$, the literature on OOD detection focuses mainly on detecting \emph{label-shifted} samples, i.e., covariate-shifted $\gD^{\text{OOD}}_{\mc X}$ induced by label shifts~\citep{liu2020energy, sun2021react, wang2022vim, sun2022out}. OOD detection is commonly motivated as an approach to achieving reliable predictions: under the assumption that $\gD^{\text{OOD}}_{\mc X}$ is induced by label shifts only, any OOD samples will cause misclassification and hence should be excluded---clearly aligned with the goal of SC. \cref{alg: OOD detector} shows the typical use case of OOD (label-shift) detectors, and its similarity to SC shown in \cref{alg: confidence-based sc} is self-evident. However, OOD detection clearly aims for less than generalized SC in that: (1) even if the OOD detection is perfect, misclassified samples---either as In-D or due to distribution shifts---by imperfect classifiers are not rejected, and (2) practical OOD detectors may fail to perfectly separate In-D and OOD samples, OOD detected but correctly classified In-D samples are still rejected, hurting the classification performance on the selected samples; see \cref{App: eva OOD} for an illustrative example. Therefore, if we are to achieve reliable predictions by excluding samples that are likely to cause errors, we should directly follow the generalized SC instead of the OOD detection formulation. 

\begin{wrapfigure}{r}{0.3\textwidth}
    \vspace{-1em}
    \includegraphics[width=0.3\textwidth]{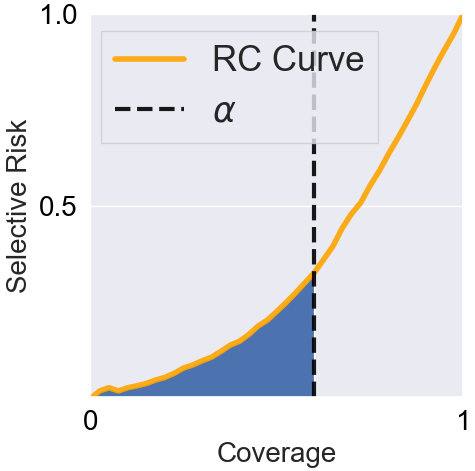}
    \vspace{-2em}
    \caption{Visualization of the \emph{normalized AURC-$\alpha$}---the area in blue divided by the coverage value $\alpha$.}
    \label{Fig: Demo AURC}
    \vspace{-1em}
\end{wrapfigure}

\paragraph{Other related concepts}
\hy{Besides OOD detection, OOD generalization focuses on correctly classifying In-D and covariate-shifted samples, without considering prediction confidence and selection to improve prediction reliability; open-set recognition (OSR) focuses on correctly classifying In-D samples, as well as flagging label-shifted samples; see \citet{geng2020recent} for a comprehensive review. In contrast, generalized SC covers all In-D, label-shifted, and covariate-shifted samples, the widest coverage compared to these related concepts, and targets the most practical and pragmatic metric---classification performance on the selected samples.} 

\paragraph{Prior work on SC with distribution shifts}
Although the existing literature on SC is rich~\citep{zhang2023survey}, research work that considers SC with potential distribution shifts is very recent and focuses only on \emph{label shifts}: \citet{xia2022augmenting,kim2023unified} perform In-D SC and OOD (label shift) detection together with a confidence score that combines an SC score and an OOD score, but they still evaluate the performance of In-D SC and OOD detection \emph{separately}. \hy{\cite{muller2023finding,cattelan2023selective} empirically show that existing OOD scores are not good enough for SC tasks with covariate/label-shifted samples; \cite{cattelan2023selective} proposes ways to refine these scores with the help of additional datasets to optimize performance. \cite{franc2024scod} provides theoretical insights on SC with In-D and label-shifted samples. In contrast, we focus on identifying better confidence scores for generalized SC---that covers both In-D and covariate/label-shifted samples and maximizes the utility of the classifier, and unify the evaluation protocol (see \cref{Subsec: eva SC OOD}). } 

\subsection{Evaluation of generalized SC}  
\label{Subsec: eva SC OOD} 
Since the goal of generalized SC is to identify and exclude misclassified samples,  for performance evaluation at a fixed cutoff threshold $\gamma$, it is natural to report the coverage---the portion of samples accepted, and the corresponding selection risk---``accuracy'' (taken broadly) on accepted samples. It is clear from \cref{eq: def of selective classifier,eq: def selector} that for a given pair of classifier $f$ and confidence-score function $s$, the threshold $\gamma$ can be adjusted to achieve different risk-coverage (RC) tradeoffs. By continuously varying $\gamma$, we can plot a \emph{risk-coverage (RC) curve}~\cite{el2010foundations,franc2023optimal} to profile the SC performance of $(f, s)$ throughout the entire coverage range $\phi_{\gamma} \in [0, 1]$; see \cref{Fig: Demo AURC} for an example. Generally, the lower the RC curve, the better the SC performance. To obtain a summarizing metric, it is natural to use the \emph{area under the RC curve} (AURC)~\citep{el2010foundations,franc2023optimal}. We note that the RC curve and the AURC are also the most widely used evaluation metrics for classical SC---which is not surprising, as the goal of classical SC aligns with that of generalized SC, although generalized SC also allows distribution shifts. 

For typical high-stakes applications, such as medical diagnosis, low selection risks are often prioritized over high coverage levels. So, in addition to RC curves and AURC, we also report several partial AURCs to account for potential different needs---\emph{normalized AURC-$\alpha$}, where $\alpha$ specifies the coverage level, and we normalize the partial area-under-the-curve by the corresponding $\alpha$ so that different partial levels can be cross-compared; see \cref{Fig: Demo AURC} for illustration. 

Note that RC curves, and hence the associated AURCs and normalized AURC-$\alpha$ also, depend on the $(f, s)$ pair. So, if the purpose is to \emph{compare different confidence-score functions}, $f$ should be fixed. \citet{feng2023towards} has recently pointed out the abuse of this crucial point in recent training-based SC methods. Thus, it is worth stressing that we \emph{always take and fix pretrained $f$'s} when making the comparison between different score functions.

\subsection{Few words on implementing \cref{alg: confidence-based sc} in practice}
In the practical implementation of generalized SC for high-stakes applications after \cref{alg: confidence-based sc}, it is necessary to select a cutoff threshold $\gamma$ based on a calibration set to meet the target coverage, or more likely the target risk level. However, in this paper, we follow most existing work on SC and do not touch on issues such as how the calibration set should be constructed and how the threshold should be selected---we leave these for future work. Our evaluation here, again, as most existing SC work, is only about the \emph{potential} of specific confidence-score functions for generalized SC, measured by the RC curve, AUPC, and normalized AURC-$\alpha$'s, directly on test sets that consist of In-D, OOD, and covariate-shifted samples.

\section{Our method---margins as confidence scores for generalized SC}
\label{Sec: geo margin definition}

Our goal is to design effective confidence-score functions for generalized SC. Again, our focus is on non-training-based scores that can work on any pretrained classifier $f$ without access to the training data. 

\subsection{Scale sensitivity of SR-based scores}
\label{subsec: SR-based score is problematic}
As discussed in \cref{subsec: DNN confidence scores}, most manually designed confidence scores focus on DNN models and are based on softmax responses (SRs), assuming that SRs closely approximate true posterior probabilities---closing such approximation gaps is the goal of confidence calibration. However, effective confidence calibration remains elusive~\citep{guo2017calibration,nixon2019measuring}, and the performance of SR-based score functions is sensitive to the scale of raw logits and hence that of SRs, as explained below. 

\begin{figure*}[!htbp]
\centering
\resizebox{1\linewidth}{!}{%
\begin{tabular}{c c c c}

\centering
\includegraphics[width=0.24\textwidth
]{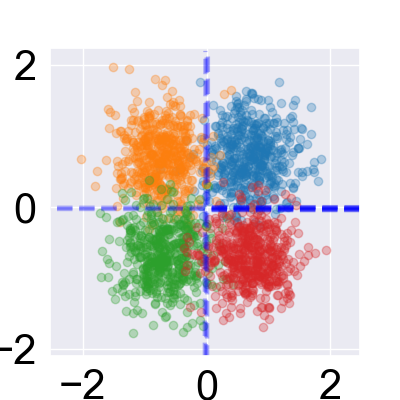}%
&\includegraphics[width=0.24\textwidth]{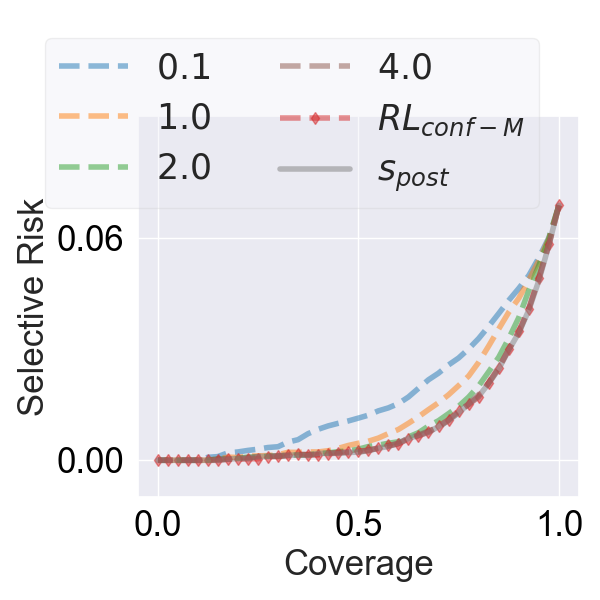}%
&\includegraphics[width=0.24\textwidth]{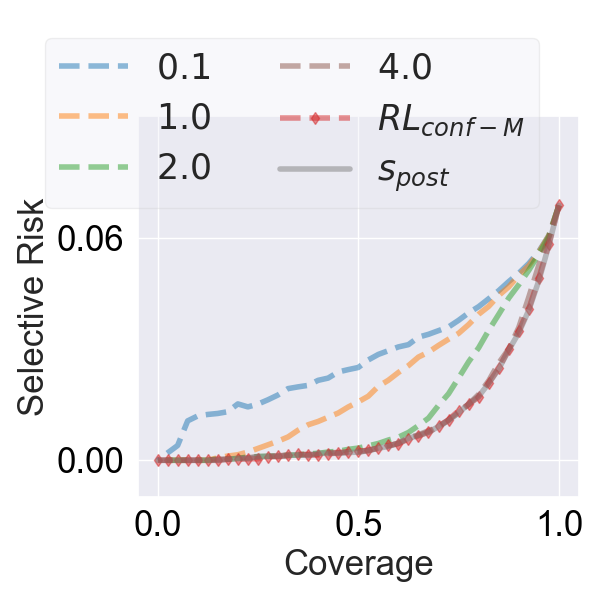}%
&\includegraphics[width=0.24\textwidth]{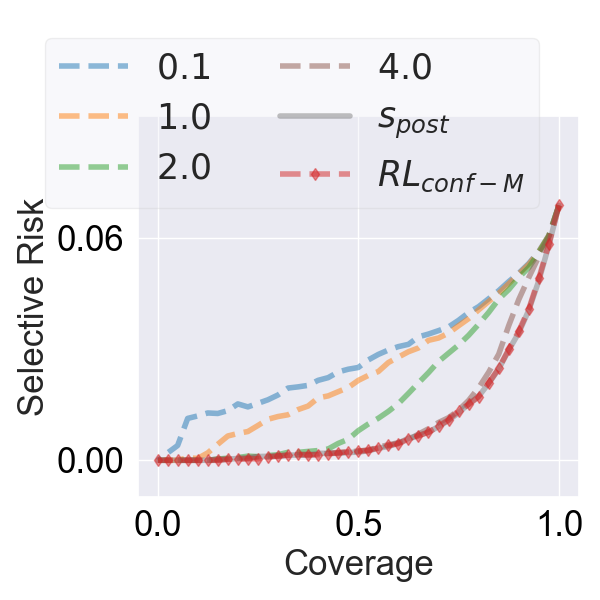}%
\\
\small{\textbf{(a)} Data and classifier visualization}%
&\small{\textbf{(b)} $SR_{\text{max}}$}%
&\small{\textbf{(c)} $SR_{\text{doctor}}$}%
&\small{\textbf{(d)} $SR_{\text{ent}}$}%
\end{tabular}}%
\caption{RC curves for \textbf{(b)} $SR_{\text{max}}$, \textbf{(c)} $SR_{\text{doctor}}$, and \textbf{(d)} $SR_{\text{ent}}$, calculated based on scaled (by factor $0.1$, $1.0$, $2.0$, and $4.0$, respectively) raw logits from the optimal $4$-class linear classifier using data shown in \textbf{(a)}. The RC curves for $RL_{\text{conf-M}}$ and $s_{\text{post}}$ are also plotted for reference, where $RL_{\text{conf-M}}$ is one of our proposed confidence-score functions.}%
\vspace{-0.5em}
\label{Fig: SVM scaled RC}
\end{figure*}

\paragraph{A quick numerical experiment} 

Consider a $4$-component mixture-of-Gaussian distribution with means $\mb w_1 = [\sqrt{2}/2, \sqrt{2}/2]^\T$, $\mb w_2 = [- \sqrt{2}/2, \sqrt{2}/2]^\T$, $\mb w_3 = [- \sqrt{2}/2, - \sqrt{2}/2]^\T$, $\mb w_4 = [\sqrt{2}/2, - \sqrt{2}/2]^\T$, equal variance $0.15\times \mb I$, and equal weight $1/4$. If we treat each component of the mixture as a class and consider the resulting $4$-class classification problem, it is easy to see that the optimal $4$-class linear classifier is $
f(\mb x) = [\mb w_1, \mb w_2, \mb w_3, \mb w_4]^\T \mb x$, with the decision rule $\argmax_{j \in \{1, 2, 3, 4\}} \mb w_j^\T \mb x$; see \cref{Fig: SVM scaled RC} \textbf{(a)} for visualization of the data distribution and decision boundaries (i.e., the lines $\mb x_1 = 0$ and $\mb x_2 = 0$). Moreover, this $f(\mb x)$ is also a Bayes optimal classifier as well as the maximum a posterior (MAP) classifier, for our particular problem here. Now, given any input $\mb x$, we consider scaled raw logits $\lambda f(\mb x)$ for different scale factors $\lambda = 0.1, 1, 2, 4$ and plot the resulting RC curves for $SR_{\text{max}}$, $SR_{\text{doctor}}$, and $SR_{\text{ent}}$, respectively; see \cref{Fig: SVM scaled RC}\textbf{(b)-(d)}. For reference, we also include the RC curves based on the true posterior probabilities (denoted as $s_{\text{post}}$), which are available for our simple data model here. We can observe that for SR-based functions ($SR_{\text{max}}$, $SR_{\text{doctor}}$, and $SR_{
\text{ent}}$), their RC curves and hence the associated AURC's vary as $\lambda$ changes, and these curves approach a common curve ($RL_{\text{conf-M}}$, which we will explain below) as $\lambda$ becomes large. 

\paragraph{Why it happens?} 
The above observations are not incidental. To see why the curves change with respect to $\lambda$, note that for a given test set $\set{\mb x_i}$ and a fixed classifier $f$, the RC curve for any score function $s$ is fully determined by the ordering of $s(\mb x_i)$'s~\citep{franc2023optimal}. But this ordering is sensitive to the scale of the raw logits for all three SR-based score functions: $SR_{\text{max}}$, $SR_{\text{doctor}}$, and $SR_{\text{ent}}$. Take $s = SR_{\text{max}}$ as an example and consider any sample $\mb x$ with its corresponding raw logits $\mb z$ sorted in descending order (i.e., $z^{(1)} \ge z^{(2)} \ge \cdots$) without loss of generality. Then for any scale factor $\lambda > 0$ applied to $\mb z$, we have the score  
\begin{equation}
    SR_{\text{max}} \triangleq e^{\lambda z^{(1)}}/\sum\nolimits_{j} e^{\lambda z^{(j)}} 
    =  1/\sum\nolimits_{j} e^{\lambda (z^{(j)} - z^{(1)})}
    = \exp\paren{-\log \sum\nolimits_{j} e^{\lambda (z^{(j)} - z^{(1)})}}. 
\end{equation}
This means that the score is determined by all the scaled \emph{logit gaps} $\lambda (z^{(j)} - z^{(1)})$'s. Moreover, due to the inner exponential function, small gaps gain more emphasis as $\lambda$ increases, and all gaps receive increasingly more emphasis as $\lambda$ decreases. Such a shifted emphasis can easily change the order of scores for two data samples, depending on how different their raw logits are distributed. Clearly, $e^{\lambda z^{(1)}}/\sum\nolimits_{j} e^{\lambda z^{(j)}}  = 1/K$ as $\lambda = 0$. We can also make similar arguments for $SR_{\text{doctor}}$ and $SR_{\text{ent}}$. Next, for the common asymptotic curve as $\lambda \to \infty$, we can show the following (proof is deferred to \cref{App: sr-based scores limits}): 
\begin{lemma} \label{lem:sr_asymp}
    Consider the raw logits $\mb z$, and without loss of generality assume that they are ordered in descending order without any ties, i.e., $z^{(1)} > z^{(2)} > \cdots $. We have that as $\lambda \to \infty$, 
    \begin{align*}
        SR_{\mathrm{max}} (\lambda\mb z) \sim \exp(- e^{\lambda (z^{(2)}- z^{(1)})}), \quad  SR_{\mathrm{doctor}} (\lambda\mb z) \sim 1- \exp\paren{2 e^{\lambda (z^{(2)} - z^{(1)} )}},  \quad SR_{\mathrm{ent}}(\lambda\mb z) \sim - e^{\lambda(z^{(2)} - z^{(1)})},  
    \end{align*}
    where $\sim$ means asymptotic equivalence. In particular, all the asymptotic functions increase monotonically with respect to $z^{(1)} - z^{(2)}$. 
\end{lemma}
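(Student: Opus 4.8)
The plan is to compute each asymptotic expansion by the same elementary route: factor out the dominant term from the softmax denominator, isolate the next-order contribution governed by the top logit gap $z^{(1)} - z^{(2)}$, and then read off the leading behavior as $\lambda \to \infty$. Since the three scores $SR_{\mathrm{max}}$, $SR_{\mathrm{doctor}}$, $SR_{\mathrm{ent}}$ are all functions of the softmax vector $\sigma(\lambda \mb z)$, the key preliminary observation is that $\sigma(\lambda \mb z)_i = e^{\lambda(z_i - z^{(1)})} / \sum_j e^{\lambda(z_j - z^{(1)})}$, and the denominator is $1 + e^{\lambda(z^{(2)} - z^{(1)})} + o\!\left(e^{\lambda(z^{(2)} - z^{(1)})}\right)$ as $\lambda \to \infty$ because $z^{(1)} > z^{(2)} > z^{(3)} > \cdots$ forces every subsequent term to be exponentially smaller than $e^{\lambda(z^{(2)} - z^{(1)})}$. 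Write $\delta \triangleq z^{(1)} - z^{(2)} > 0$, so $e^{\lambda(z^{(2)} - z^{(1)})} = e^{-\lambda\delta} \to 0$.

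For $SR_{\mathrm{max}}$: we have $SR_{\mathrm{max}}(\lambda \mb z) = 1/\paren{1 + e^{-\lambda\delta} + o(e^{-\lambda\delta})}$. Taking logs, $\log SR_{\mathrm{max}} = -\log\paren{1 + e^{-\lambda\delta} + o(e^{-\lambda\delta})} = -e^{-\lambda\delta} + o(e^{-\lambda\delta})$, hence $SR_{\mathrm{max}}(\lambda\mb z) = \exp\paren{-e^{-\lambda\delta} + o(e^{-\lambda\delta})}$, which is asymptotically equivalent (in the sense of ratio tending to $1$) to $\exp(-e^{-\lambda\delta})$. This is the first claimed expansion with $z^{(2)} - z^{(1)} = -\delta$. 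For $SR_{\mathrm{doctor}}$: note $\norm{\sigma(\lambda\mb z)}_2^2 = \sum_i \sigma_i^2 \ge \sigma_1^2 = SR_{\mathrm{max}}^2 = \exp\paren{-2e^{-\lambda\delta} + o(e^{-\lambda\delta})} = 1 - 2e^{-\lambda\delta} + o(e^{-\lambda\delta})$; one must check the remaining terms $\sum_{i\ge 2}\sigma_i^2$ are $O(e^{-2\lambda\delta}) = o(e^{-\lambda\delta})$, so $\norm{\sigma(\lambda\mb z)}_2^2 = 1 - 2e^{-\lambda\delta} + o(e^{-\lambda\delta})$ and therefore $SR_{\mathrm{doctor}} = 1 - 1/\norm{\sigma(\lambda\mb z)}_2^2 = 1 - \paren{1 + 2e^{-\lambda\delta} + o(e^{-\lambda\delta})} = -2e^{-\lambda\delta} + o(e^{-\lambda\delta})$, matching the stated $1 - \exp\paren{2e^{\lambda(z^{(2)}-z^{(1)})}}$ asymptotically since $1 - \exp(2e^{-\lambda\delta}) = -2e^{-\lambda\delta} + o(e^{-\lambda\delta})$. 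For $SR_{\mathrm{ent}}$: the entropy-type sum $\sum_i \sigma_i \log \sigma_i$ is dominated by $\sigma_1 \log \sigma_1 = (1 - O(e^{-\lambda\delta}))\cdot(-e^{-\lambda\delta} + o(e^{-\lambda\delta})) = -e^{-\lambda\delta} + o(e^{-\lambda\delta})$, plus the $i\ge 2$ terms $\sigma_i \log \sigma_i$; the largest of these is $\sigma_2 \log \sigma_2 = e^{-\lambda\delta}(1 + o(1)) \cdot (-\lambda\delta)(1 + o(1)) = -\lambda\delta\, e^{-\lambda\delta}(1 + o(1))$, which is actually \emph{larger} in magnitude than $\sigma_1\log\sigma_1$. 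This needs care: the dominant term is in fact $-\lambda\delta\,e^{-\lambda\delta}$, not $-e^{-\lambda\delta}$.

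This last point is the main obstacle: the stated asymptotic for $SR_{\mathrm{ent}}$, namely $SR_{\mathrm{ent}}(\lambda\mb z) \sim -e^{\lambda(z^{(2)}-z^{(1)})} = -e^{-\lambda\delta}$, appears to conflict with the naive term-by-term accounting above, which suggests the $\sigma_2\log\sigma_2$ term contributes $-\lambda\delta\,e^{-\lambda\delta}$ at leading order. I would resolve this by a careful bookkeeping of \emph{all} terms: $\sigma_1 \log \sigma_1 + \sum_{i\ge 2}\sigma_i\log\sigma_i$. Using $\log\sigma_i = \lambda(z_i - z^{(1)}) - \log(\text{denominator})$ and $\log(\text{denom}) = e^{-\lambda\delta} + o(e^{-\lambda\delta})$, the sum telescopes into $\paren{\sum_i \sigma_i}\cdot(-\log\text{denom}) + \lambda\sum_i \sigma_i(z_i - z^{(1)}) = -\log\text{denom} + \lambda\sum_{i\ge 2}\sigma_i(z_i - z^{(1)})$. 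The first piece is $-e^{-\lambda\delta} + o(e^{-\lambda\delta})$ and the second piece is $\lambda \sigma_2 (z^{(2)} - z^{(1)})(1 + o(1)) = -\lambda\delta\,e^{-\lambda\delta}(1 + o(1))$, so the genuine leading order is $-\lambda\delta\,e^{-\lambda\delta}$. Thus I expect the honest statement should carry an extra factor of $\lambda\delta$ (or the lemma intends ``$\sim$'' only up to such polynomial-in-$\lambda$ factors, or equivalently means ``$\log$-asymptotic''); in any case, the crucial and robust conclusion — that every one of the three asymptotic expressions is a strictly increasing function of $z^{(1)} - z^{(2)} = \delta$ — follows immediately since $\exp(-e^{-\lambda\delta})$, $1 - \exp(2e^{-\lambda\delta})$, $-e^{-\lambda\delta}$, and $-\lambda\delta\,e^{-\lambda\delta}$ are all manifestly increasing in $\delta$ for $\delta > 0$ and fixed $\lambda > 0$. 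I would close the proof by verifying this monotonicity via a one-line derivative computation for each expression, and note that consequently, in the large-$\lambda$ regime, all three SR-based scores induce the \emph{same} ordering on test samples as the top-two logit gap, explaining the coincidence of RC curves observed numerically in \cref{Fig: SVM scaled RC}.
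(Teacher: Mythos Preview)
Your approach mirrors the paper's closely: both factor out $e^{\lambda z^{(1)}}$ from the softmax denominator, expand $\log(1+e^{-\lambda\delta}+o(e^{-\lambda\delta}))$, and read off the leading term. For $SR_{\mathrm{max}}$ and $SR_{\mathrm{doctor}}$ your computations are essentially identical to the paper's (the paper works with $\log\norm{\sigma}_2^2$ rather than $\norm{\sigma}_2^2$ directly, but this is cosmetic).

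For $SR_{\mathrm{ent}}$ you have in fact caught a slip in the lemma as stated. The paper's own derivation reaches exactly your intermediate expression---it shows $SR_{\mathrm{ent}}(\lambda\mb z)\sim e^{\lambda(z^{(2)}-z^{(1)})}\lambda(z^{(2)}-z^{(1)})/\sum_j e^{\lambda(z^{(j)}-z^{(1)})}$, i.e., $-\lambda\delta\,e^{-\lambda\delta}(1+o(1))$---and then takes logarithms, correctly drops the $\log(\lambda\delta)$ term as lower order than $-\lambda\delta$ \emph{inside the log}, and re-exponentiates to claim $SR_{\mathrm{ent}}\sim -e^{-\lambda\delta}$. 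That last step is not a valid use of $\sim$: asymptotic equivalence of logarithms does not transfer back through exponentiation when the discarded term is unbounded. So the honest asymptotic carries the extra $\lambda\delta$ factor, exactly as you computed. Your telescoping identity $\sum_i\sigma_i\log\sigma_i=-\log(\text{denom})+\lambda\sum_{i\ge2}\sigma_i(z^{(i)}-z^{(1)})$ is also a cleaner route to this than the paper's term-by-term argument. As you note, the monotonicity in $z^{(1)}-z^{(2)}$---and hence the RC-curve conclusion the lemma is used for---is unaffected either way.
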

This implies that the asymptotic RC curve as $\lambda \to \infty$ for all three score functions is fully determined by the score function $z^{(1)} - z^{(2)}$! 

\paragraph{Implications} 
The sensitivity of the RC curves, and hence of the performance, of these SR-based scores to the scale of raw logits is disturbing. \emph{It implies that one can simply change the overall scale of the raw logits---which does not alter the classification accuracy itself---to claim better or worse performance of an SR-based confidence-score function for selective classification, making the comparison of different SR-based scores shaky}. Unfortunately, between the limiting cases $\lambda \to 0$ and $\lambda \to \infty$, there is no canonical scaling. 

\subsection{Our method: margin-based confidence scores}
\label{subsec: DNN margins as confidence scores}
To avoid the scale sensitivity caused by the softmax nonlinearity, it is natural to consider designing score functions directly over the raw logits. To this end, we revisit ideas in support vector machines (SVMs). 

\paragraph{Margins in SVMs}
In linear SVMs for binary classification, the classifier takes the form $f(\mb x) = \sign(\mb w^\T \mb x + b)$ and the confidence in classifying a sample $\mb x$ can be assessed by its distance from the supporting hyperplane~\citep{fumera2002support,franc2023optimal}: $\abs{\mb{w}^\T \mb x + b} / \norm{\mb{w}}_2$, which is called the \emph{geometric margin}; see \cref{App: Multiclass SVM} for a detailed review. We can extend the idea to $K$-class linear SVMs. Following the popular joint multiclass SVM formulation~\citep{crammer2001algorithmic}, we consider a linear classifier $f(\mb x) = \mb W^\T \mb x + \mb b$. Here, $\mb W$ and $\mb b$ induce $K$ hyperplanes, and we can define the signed distance of any sample $\mb x$ to the $i$-th hyperplane as: $\paren{\mb w_i^\T \mb x + b_i}/\norm{\mb w_i}$ ($\mb w_i$ denotes the $i$-th column of $\mb W$ and $b_i$ the $i$-th element of $\mb b$), generalizing the definition for the binary case. However, a single signed distance makes little sense for assessing the classification confidence in multiclass cases, given the typical \texttt{argmax} decision rule---e.g., the largest signed distance can be negative. Instead, comparing the distances to all decision hyperplanes seems more reasonable. Thus, we can consider the following \emph{geometric margin} as a confidence-score function: 
\begin{align}
\label{eq: svm relative geo margin}
    (\mb w_{y'}^\T \mb x + b_{y'})/\norm{\mb w_{y'}}_2 - \max\nolimits_{j \in \set{1, \dots, K}\setminus {y'}} \; (\mb w_j^\T \mb x + b_j)/\norm{\mb w_j}_2 \; \text{,}
\end{align}
where $y' \in \argmax_{j \in \set{1, \dots, K}} (\mb w_j^\T \mb x + b_j)/\norm{\mb w_j}_2$.
In other words, it is the difference between the top two signed distances of $\mb x$ to all $K$ hyperplanes. Intuitively, the larger the geometric margin, the more confident the classifier is in classifying the sample following the largest signed distance---\emph{a clearer winner earns more trust}. Although the interpretation is intuitive, the geometric margin is not popularly used in multiclass SVM formulations, likely due to its non-convexity. Instead, a popular proxy for the geometric margin is the convex \emph{confidence margin}: 
\begin{align}
\label{eq: svm confidence margin}
(\mb w_{y'}^\T \mb x + b_{y'}) - \max\nolimits_{i \in \set{1, \dots, K}\setminus {y'}} \; (\mb w_i^\T \mb x + b_i), 
\end{align}
with the decision rule $y' \in \argmax_{j \in \{1, \cdots, K\}} \mb w_j^\T \mb x + b_j$; see \cref{App: Multiclass SVM}. Despite its numerical convenience, the confidence margin loses geometric interpretability compared to the geometric margin, and it can be sensitive to the scaling of $\mb w_j$. We study both margins in this paper. 

\paragraph{Margins in DNNs}
To extend the idea of margins to a DNN classifier $f_{\mb \theta}(\mb x)$ parameterized by $\mb \theta$, we view all but the final linear layer as a feature extractor, denoted as $f_{\mb \theta}^e$. So, for each sample $\mb x$, the logit output takes the form $\mb z = \mb W^\T f_{\mb \theta}^e(\mb x)+ \mb b$, and thus the signed distance of the representation $f_{\mb \theta}^e(\mb x)$ to each decision hyperplane in the representation space is: $d_j =  (\mb{w}_j^\T f_{\mb \theta}^e(\mb x) + b_j) / \norm{\mb{w}_j}_2\quad  \forall j \in \{1, \dots, K\}$. 
Assume sorted signed distances and logits, i.e., $d^{(1)} \ge d^{(2)} \ge \ldots \ge d^{(K)}$ and $z^{(1)} \ge z^{(2)} \ge \ldots \ge z^{(K)}$. The \emph{geometric margin} and the \emph{confidence margin} are defined as
\begin{align} 
\label{eq: geo_raw margin}
   RL_{\text{geo-M}} \triangleq d^{(1)} - d^{(2)} \quad \text{and} \quad RL_{\text{conf-M}} \triangleq z^{(1)} - z^{(2)}, \text{respectively}. 
\end{align}
Note that both $RL_{\text{geo-M}}$ and $RL_{\text{conf-M}}$ are computed using the \emph{raw logits without softmax normalization}; $z$'s and $d$'s may not have the same ordering due to the scale of $\norm{\mb{w}_j}_2$.
In fact, $RL_{\text{conf-M}}$ is applied in \citet{lecun1989handwritten} to formulate an empirical rejection rule for a handwritten recognition system, although no detailed analysis or discussion is given on why it is effective. Despite the simplicity of these two notions of margins, we have not found prior work that considers them for SC except for \citet{lecun1989handwritten}.  

\paragraph{Scale-invariance property}
An attractive property of margin-based score functions is that their SC performance is \emph{invariant} w.r.t. the scale of raw logits. This is because changing the overall scale of the raw logits does not change the order of scores assigned by either the geometric or the confidence margin. In this regard, margin-based score functions are much more preferred and reliable than SR-based scores for SC. Another interesting point is that the limiting curve depicted in \cref{Fig: SVM scaled RC}\textbf{(b)-(d)} is induced by the confidence margin, as is clear from \cref{lem:sr_asymp} and the discussion following it.

\begin{figure*}[!tb]
\centering
\resizebox{1\linewidth}{!}{%
\begin{tabular}{c c c c}
\cdashline{1-4}
\vspace{-0.9em}
\\
\multicolumn{4}{c}{Case 1}\\
\includegraphics[width=0.22\textwidth]{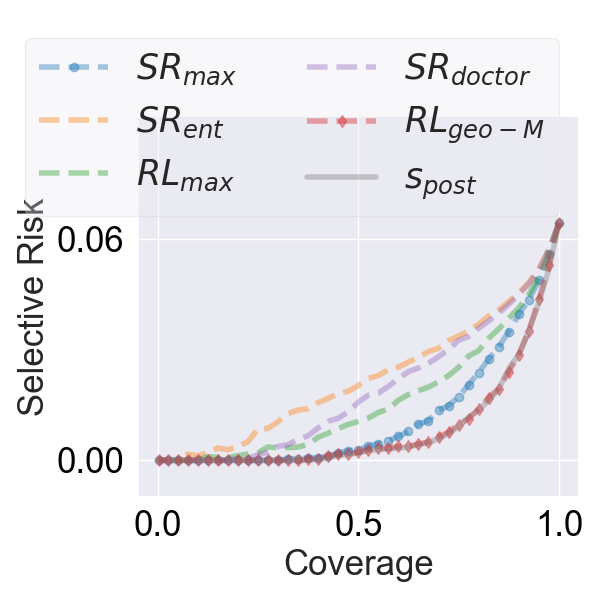}%
&\includegraphics[width=0.22\textwidth]{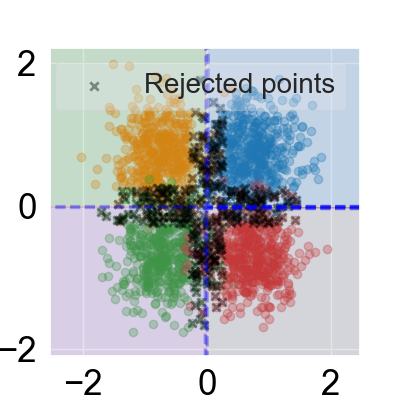}%
&\includegraphics[width=0.22\textwidth]{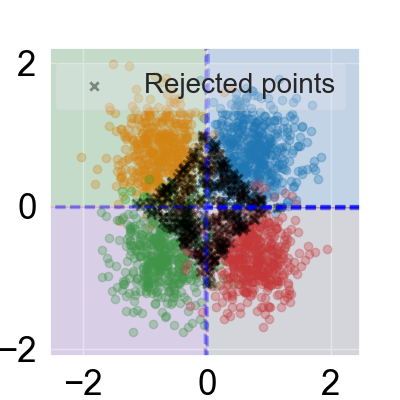}%
&\includegraphics[width=0.22\textwidth]{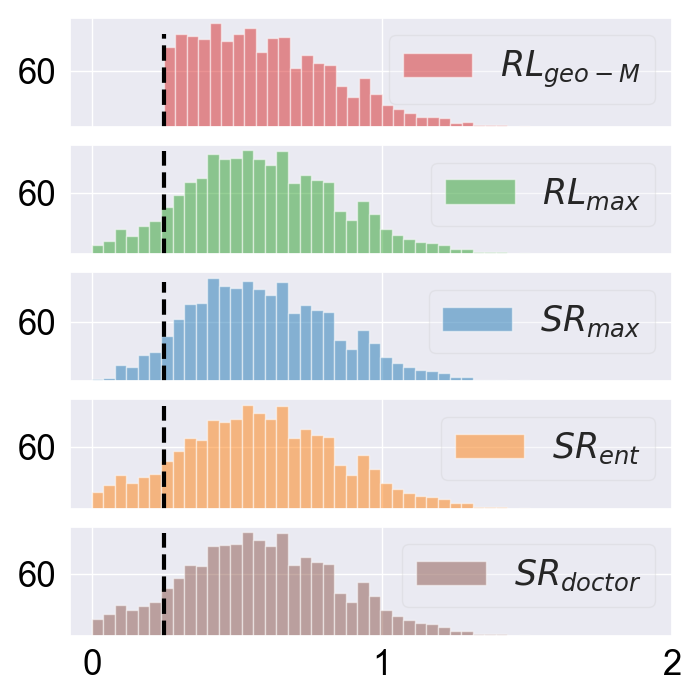}%
\\
\small{{\textbf{(a-1)}}}%
&\small{{\textbf{(b-1)} }}%
&\small{{\textbf{(c-1)} }}%
&\small{{\textbf{(d-1)}}}%
\vspace{0.3em}
\\
\cdashline{1-4}
\vspace{-0.9em}
\\
\multicolumn{4}{c}{\textcolor{orange}{Case 2}}
\vspace{-0.2em}
\\
\includegraphics[width=0.22\textwidth]{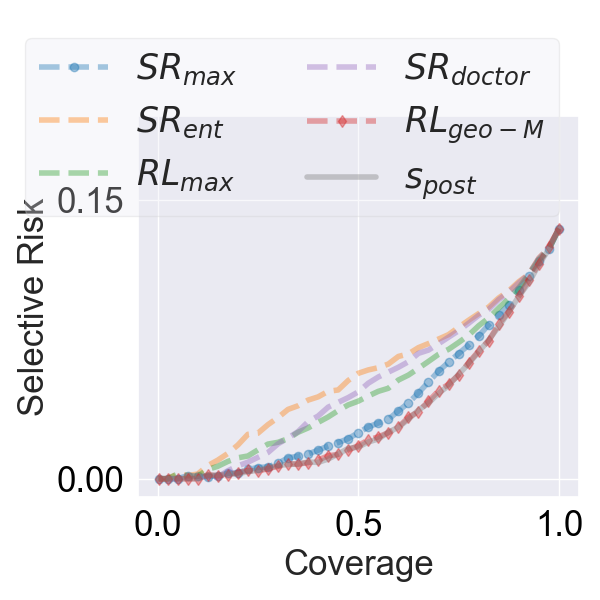}%
&\includegraphics[width=0.22\textwidth]{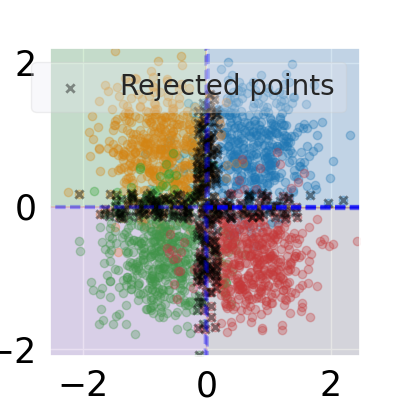}%
&\includegraphics[width=0.22\textwidth]{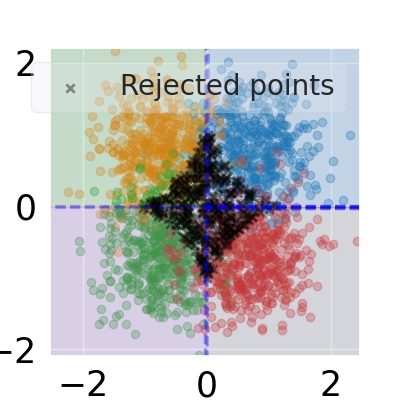}%
&\includegraphics[width=0.22\textwidth]{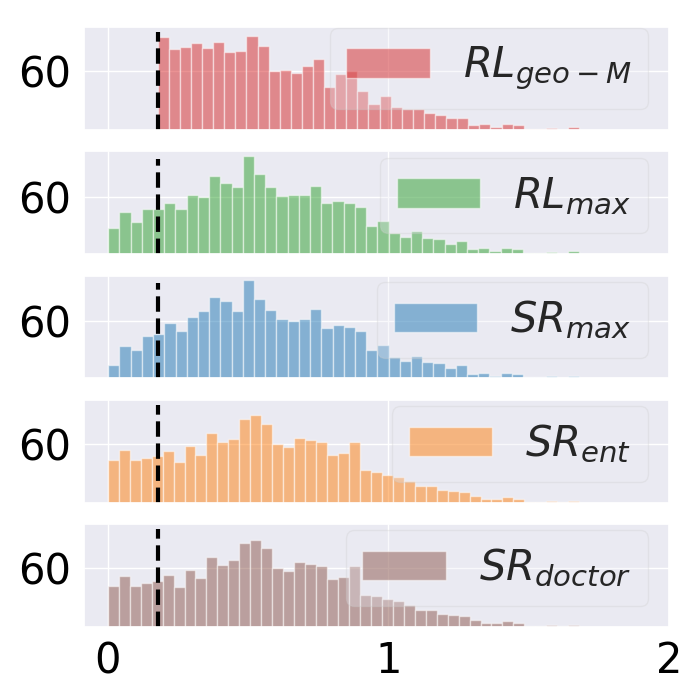}%
\\
\small{\textcolor{orange}{\textbf{(a-2)}}}%
&\small{\textcolor{orange}{\textbf{(b-2)}}}%
&\small{\textcolor{orange}{\textbf{(c-2)}}}%
&\small{\textcolor{orange}{\textbf{(d-2)}}}%
\vspace{0.3em}
\\
\cdashline{1-4}
\vspace{-0.9em}
\\
\multicolumn{4}{c}{\textcolor{blue}{Case 3}}
\vspace{-0.2em}
\\
\includegraphics[width=0.22\textwidth]{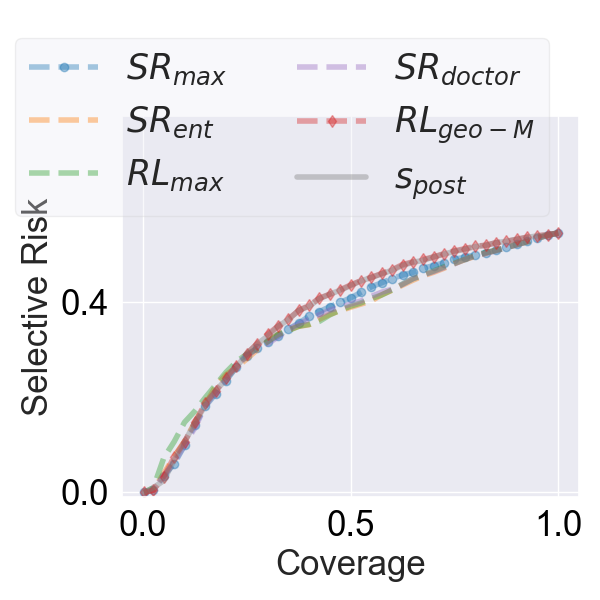}%
&\includegraphics[width=0.22\textwidth]{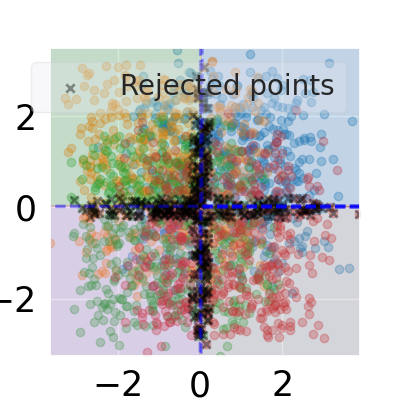}%
&\includegraphics[width=0.22\textwidth]{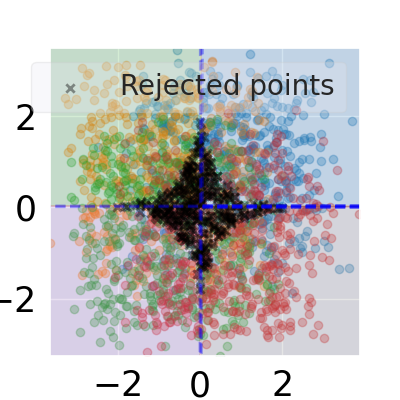}%
&\includegraphics[width=0.22\textwidth]{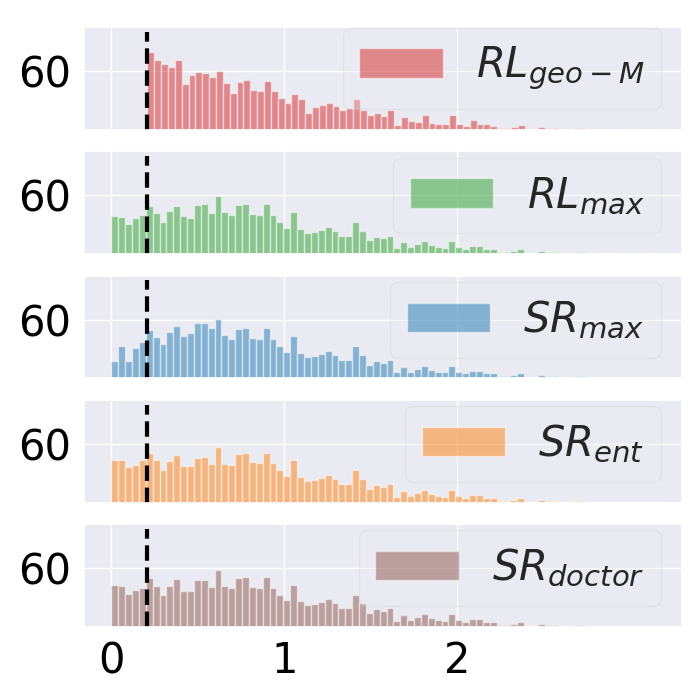}%
\\
\small{\textcolor{blue}{\textbf{(a-3)}}}%
&\small{\textcolor{blue}{\textbf{(b-3)}}}%
&\small{\textcolor{blue}{\textbf{(c-3)}}}%
&\small{\textcolor{blue}{\textbf{(d-3)}}}%
\vspace{0.3em}
\\
\cdashline{1-4}
\vspace{-0.9em}
\\
\end{tabular}}%
\caption{Further analysis of the numerical example in \cref{subsec: SR-based score is problematic}. Case 1, \textcolor{orange}{Case 2}, and \textcolor{blue}{Case 3} correspond to the original dataset in \cref{subsec: SR-based score is problematic}, the dataset after small perturbations, and the dataset after substantial perturbations, respectively. Here, \textbf{(a-)}'s are the RC curves achieved by different selection scores; \textbf{(b-)}'s are visualizations of the samples (one color per class), decision boundaries (dashed blue line) and the rejected samples (black crosses) at coverage $0.8$ by $RL_{\text{geo-M}}$; \textbf{(c-)}'s visualize the rejected samples (black crosses) at coverage $0.8$ by $SR_{\text{max}}$; and \textbf{(d-)}'s present the histogram of the robustness radius of the selected samples in by all score functions.}%
\vspace{-1em}
\label{Fig: SVM Toy Example Fig}
\end{figure*}
\subsection{Analysis of rejection patterns}
\label{Sec: analysis of selective scores}
We continue with the toy example in \cref{subsec: SR-based score is problematic} to show another major difference between the SR-based and the margin-based score functions---they have \emph{different rejection patterns for given coverage levels}. We will see that margin-based score functions induce favorable rejection patterns and can hence be used for reliable rejection even under moderate covariate shifts. For comparison, we also consider the maximum raw logit (denoted as $RL_{\text{max}}$) to show that a single logit in multiclass classification is not a sensible confidence score.  
\begin{description}[leftmargin=1em,nosep] 
    \item[Case 1:]   We use the same setup as in the numerical experiment in \cref{subsec: SR-based score is problematic} (see also \cref{Fig: SVM scaled RC}), and plot in \cref{Fig: SVM Toy Example Fig} \textbf{(a-1)} the RC curves induced by the various confidence-score functions\footnote{For the classifier consideblue, $RL_{\text{geo-M}}$ and $RL_{\text{conf-M}}$ have the same SC performance as $\|w\|_2 = 1$.}. It is clear that $RL_{\text{geo-M}}$ performs the best. To better understand the difference between $RL_{\text{geo-M}}$ and other score functions, we study their rejection patterns: we visualize in \cref{Fig: SVM Toy Example Fig} \textbf{(b-1)\&(c-1)} the samples rejected at $0.8$ coverage for $RL_{\text{geo-M}}$ and $SR_{\text{max}}$, respectively; see visualization of other score functions in \cref{App: SVM example}, whose rejection patterns are similar to that of $SR_{\text
    {max}}$. \emph{An iconic feature of $RL_{\text{geo-M}}$ is that it prioritizes rejecting samples closer to decision boundaries, whereas SR-based scores prioritize rejecting samples close to the origin}. Conceptually, the former rejection pattern is favorable, as the goal of SC is exactly to reject uncertain samples on which classifier's decisions can be shaky. More precisely, the difference in rejection patterns implies at least two things: (1) $RL_{\text{geo-M}}$ could be advantageous when most classification errors occur near the decision boundaries; (2) $RL_{\text{geo-M}}$ may be superior even when test samples have a moderate level of distribution shifts with respect to training. For example, when the test set has a slightly different $\mc D_{\mc X | \mc Y}$ than the training set (see \textbf{\textcolor{orange}{Cases 2} \& \textcolor{blue}{3}} below), mistaken samples due to the shift tend to be close to the decision boundaries and thus can be successfully rejected. \cref{Fig: SVM Toy Example Fig} \textbf{(d-1)} plots the histograms of the \emph{robustness radii} (i.e., the $\ell_2$ distance of a sample to the closest decision boundary) of selected samples at $0.8$ coverage, where the robustness radius quantitatively captures the extent of $\mc D_{\mc X | \mc Y}$ shift SC can tolerate: while the selected samples using $RL_{\text{geo-M}}$ uniformly have nonzero robust radii, all other score functions lead to zero robustness radii for the worst samples, implying sensitivity to $\mc D_{\mc X | \mc Y}$ shifts.\footnote{The intuition on why our notions of margins work for Type B errors is different: there since $\mb x$ assumes a label outside the known set, we expect no clear winner in the raw logits. }

    \item[\textcolor{orange}{Case 2:}] We keep the same setup as Case 1, except that small perturbations are added on all samples. The perturbations are drawn from a uniform distribution within the interval $[-0.5, 0.5]$ on each dimension of $\R^2$; see \cref{Fig: SVM Toy Example Fig} \textbf{\textcolor{orange}{(b-2)}}, where more samples of different classes are intermingled than before the perturbations are added. Although some misclassified samples have moved far into the bulks of other classes, most of them are still close to the decision boundaries. Therefore, $RL_{\text{geo-M}}$ still outperforms other SR-based score functions, as in \cref{Fig: SVM Toy Example Fig} \textbf{\textcolor{orange}{(a-2)}}. 

    \item[\textcolor{blue}{Case 3:}] We continue to increase the magnitudes of perturbations and \cref{Fig: SVM Toy Example Fig} \textbf{\textcolor{blue}{(b-3)}} illustrates the case where the perturbations are drawn from a uniform distribution within the interval $[-2, 2]$. Now that samples from different classes are well mixed in the 2D space, $RL_{\text{geo-M}}$ is no longer superior when the coverage level is high, as shown in \cref{Fig: SVM Toy Example Fig} \textbf{\textcolor{blue}{(a-3)}}. However, we argue that \textcolor{blue}{\textbf{Case 3}} is less concerning in practice---we probably will never consider deploying a classifier that does not work well at all before SC; see the risk achieved at coverage level $1$. Instead of relying on an SC strategy, it is more urgent to improve the base classifier in this case.
\end{description}
\textbf{Summary:}
Using the above examples, we have shown that our proposed margin-based score functions are not sensitive to the scale of the raw logits. When the base classifier is reasonable in classifying in-distribution data samples (i.e.,  achieving low risks at full coverage), margin-based scores are expected to result in good SC performance, even when test samples have low or moderate distribution shifts, as we show empirically in \cref{Sec: Experiments} below.   

\section{Experiments}
\label{Sec: Experiments}

In this section, we experiment with various multiclass classification tasks and recent DNN classifiers to verify the effectiveness of our margin-based score functions for generalized SC.

\subsection{Comparison with nontraining-based score functions using pretrained models}

\paragraph{Setups}
We take different pretrained DNN models in various classification tasks and evaluate SC performance on test datasets composed of In-D and distribution-shifted samples jointly. Specifically, our evaluation tasks include (i) \texttt{ImageNet} \citep{russakovsky2015imagenet}, the most widely used testbed for image classification, with a covariate-shifted version \texttt{ImageNet-C} \citep{hendrycks2018benchmarking} composed of synthetic perturbations, and \texttt{OpenImage-O} \citep{wang2022vim} composed of natural images similar to \texttt{ImageNet} but with disjoint labels, i.e., label-shifted samples; (ii) \texttt{iWIldCam} \citep{beery2020iwildcam} test set provides two subsets of animal images taken at different geo-locations, where one is the same as the training set serving as In-D and the other at different locations as a natural covariate-shifted version; (iii) \texttt{Amazon} \citep{ni2019justifying} test set provides two subsets of review comments by different users, producing In-D and natural covariate-shifted test samples for a language sentiment classification task; (iv) \texttt{CIFAR-10} \citep{krizhevsky2009learning}, a small image classification dataset commonly used in previous training-based SC works, together with \texttt{CIFAR-10-C} (perturbed \texttt{CIFAR-10}) and \texttt{CIFAR-100} (with disjoint labels from \texttt{CIFAR-10}),  popularly used covariate-shifted and label-shifted versions of \texttt{CIFAR-10}. \cref{Tab: model choice,Tab: dataset choice} summarize the pretrained models and datasets.

\paragraph{Confidence-score functions for comparison}
In addition to $SR_\text{max}$, $SR_\text{doctor}$ and $SR_\text{ent}$ introduced in \cref{eq: SR based SC scores} and our proposed margin-based scores $RL_{\text{geo-M}}$ and $RL_{\text{conf-M}}$ in \cref{eq: geo_raw margin}, we also consider several recent post-hoc OOD detection scores\footnote{In OOD detection, scores are usually dependent on the training data. However, these post-hoc scores can also be applied as nontraining-based SC scores as \cref{alg: confidence-based sc}, by replacing $\gD
_{\mc X}$ and $\gD_{\mc X}^{OOD}$ in \cref{alg: OOD detector} with $\gD_{\mc X, \mc Y}^{cali}$.}: (i) $RL_{\text{max}}$: the maximum raw logit~\citep{hendrycks2019scaling}; (ii) Energy: log-sum-exponential aggregation (i.e., smooth approximation to the maximum raw logit) of the raw logits~\citep{liu2020energy}; (iii) KNN: a score composed of the distances from a test data point to the $k$ nearest neighbors of the training set in the raw logit space \citep{sun2022out}; (iv) ViM---a score composed of the residual of a test sample from the principal components estimated in the feature space prior to the raw logits using training data \citep{wang2022vim}; and (v) SIRC---a composite score of the softmax response and OOD detection scores \citep{xia2022augmenting}.
\textbf{We note that} KNN, ViM, and SIRC all contain hyperparameters that are determined by the training data. To minimize the gap with our `nontraining-based' setup, we randomly sample a small number of data points\footnote{Five times the number of classes in each task from \cref{Tab: dataset choice}. We do not sample five points per class, as in practice the calibration set $\gD^{cali}_{\mc X, \mc Y}$ may be imbalanced.} from the In-D test set to tune their hyperparameters, respectively. Also, note that KNN has an additional hyperparameter $k$ that is independent of the statistics of the dataset. Empirically, we find KNN's performance is very sensitive to the choice of $k$, the task, and the classifier. Therefore, in this paper, we use $k=2$ (the empirical best) by default for KNN and provide an ablation analysis for KNN for each experiment in \cref{App: KNN scores ablation}.  

\begin{table}[!htbp]
\centering
\caption{Summary of the pretrained classifiers used for the various classification tasks}
\resizebox{1\linewidth}{!}{%
\begin{tabular}{c c c c c}
\small{\textbf{Task}}
&\small{\textbf{Model Name}}
&\small{\textbf{Source}}
&\small{\textbf{Note}}
\\
\toprule
\toprule
{} & EVA \citep{fang2023eva} &  & Top-1 acc. 88.76 \% \\
\texttt{ImageNet} & ConvNext \citep{liu2022convnet} & \texttt{timm}\tablefootnote{See \cref{APP table: timm model cards} in \cref{App: DNN classifier} for the model card information to retrieve these \texttt{timm} models.} & Top-1 acc. 86.25 \% \\
{} & VOLO \citep{yuan2022volo} & & Top-1 acc. 85.56 \% \\
{} & ResNext \citep{xie2017aggregated} &  & Top-1 acc. 85.54 \% \\
\midrule
\small{\texttt{iWildCam}} & FLYP \citep{goyal2023finetune} & Official source code\tablefootnote{\url{https://github.com/locuslab/FLYP}} & \small{Ranked $1^{st}$ on \texttt{WILDS} \citep{koh2021wilds}} \\
\midrule
\small{\texttt{Amazon}} & LISA \citep{yao2022improving} & Official source code\tablefootnote{\url{https://github.com/huaxiuyao/LISA.git}} & \small{Ranked $1^{st}$ on \texttt{WILDS}}\\
\midrule
\small{\texttt{CIFAR} \& \texttt{ImageNet}} & ScNet \citep{geifman2019selectivenet} & \small{\texttt{PyTorch re-implementation}}\tablefootnote{\url{https://github.com/gatheluck/pytorch-SelectiveNet}} & \small{Training-based SC.}\\
\bottomrule
\bottomrule
\end{tabular}}
\label{Tab: model choice}
\end{table}

\vspace{-2em}
\begin{table}[!htbp]
\centering
\caption{Summary of In-D and distribution-shifted datasets used for our SC evaluation}
\resizebox{1\linewidth}{!}{%
\begin{tabular}{lc c c cc c cc c}
\small{\textbf{Task}}
&
&\small{\textbf{In-D} (split)}
&\small{classes - samples}
&{}
&\small{\textbf{Shift-Cov}}
&\small{samples}
&{}
&\small{\textbf{Shift-Label}}
&\small{samples}
\\
\toprule
\toprule
\small{\texttt{ImageNet}}
&
&\small{\texttt{ILSVRC-2012} ('val')}
&\small{1000 - 50,000}
&{}
&\small{\texttt{ImageNet-C} (severity $3$)}
&\small{50,000 $\times$ 19}
&{}
&\small{\texttt{OpenImage-O}}
&\small{17,256}
\\
{}
&
&{}
&{}
&{}
&\small{*All types of corruptions}
&{}
&{}
&\small{

}
&{}
\\
\midrule
\small{\texttt{iWildCam}}
&
&\small{\texttt{iWildCam} ('id\_test')}
&\small{178 - 8154}
&{}
&\small{\texttt{iWildCam} ('ood\_test')}
&\small{42791}
&{}
&\small{N/A}
&\small{N/A}
\\
\midrule
\small{\texttt{Amazon}}
&
&\small{\texttt{Amazon} ('id\_test')}
&\small{5 - 46,950}
&{}
&\small{\texttt{Amazon} ('test')}
&\small{100,050}
&{}
&\small{N/A}
&\small{N/A}
\\
\midrule
\small{\texttt{CIFAR}}
&
&\small{\texttt{CIFAR-10} ('val')}
&\small{10 - 10,000}
&{}
&\small{\texttt{CIFAR-10-C} (severity $3$)}
&\small{10,000 $\times$ 19}
&{}
&\small{\texttt{CIFAR-100}}
& \small{10,000}
\\
\small{

}
&
&{}
&{}
&{}
&\small{*All types of corruptions}
&{}
&{}
&\small{}
&{}
\\
\bottomrule
\bottomrule
\end{tabular}}
\label{Tab: dataset choice}
\end{table}

\paragraph{Evaluation metrics}
We report both the RC curves and the AURC-$\alpha$ where $\alpha \in \{0.1, 0.5, 1 \}$ as discussed in \cref{Subsec: eva SC OOD}. Note that when plotting the RC curves, we omit $SR_\text{doctor}$ because it almost overlaps with $SR_{\text{max}}$, which is also observed by \citet{xia2022augmenting}.


\paragraph{Results on ImageNet}
We show in \cref{Fig: ImageNet RC curves} the RC curves of the various score functions on the pretrained model \textbf{EVA}, for different combinations of subsets of test data, as summarized in \cref{tab: Partial AURC ImageNet}. The most striking is in \cref{Fig: ImageNet RC curves}(c), which collects the results for evaluation on mixup of In-D and label-shifted samples: except for $RL_{\text{geo-M}}$, $RL_{\text{conf-M}}$ and KNN, the selection risks of other score functions do not follow a monotonic decreasing trend as coverage decreases. As coverage approaches zero, their selection risks spike up, almost to the risk level at full coverage (i.e., error rate on the whole set). This is because the other score functions do not indicate prediction confidence well in this setting and hence fail to sufficiently separate right and wrong predictions---during rejection, both right and wrong predictions are rejected indiscriminately. 
On the other hand, $RL_{\text{geo-M}}$, $RL_{\text{conf-M}}$ are better than KNN in separating correct and wrong predictions when there are no label-shifted samples, as shown in \cref{Fig: ImageNet RC curves} (a)\&(b). As a result, $RL_{\text{geo-M}}$ and $RL_{\text{conf-M}}$ have the best overall performance when In-D, covariate-shifted and label-shifted samples coexist, as shown in \cref{Fig: ImageNet RC curves} (d). Also, see \cref{tab: Partial AURC ImageNet} for numerical confirmation of the above observations, where in all cases $RL_{\text{geo-M}}$ and $RL_{\text{conf-M}}$ are the best or comparable to the best-performing  among all score functions. We present the SC results of other \texttt{ImageNet} models in \cref{App: Extra experimental results}; our margin-based score functions still stand as the best-performing among all. 

\begin{figure}[!htbp]
\centering
\resizebox{1\columnwidth}{!}{%
\begin{tabular}{c c c c c}
\centering 
&\includegraphics[width=0.24\textwidth]{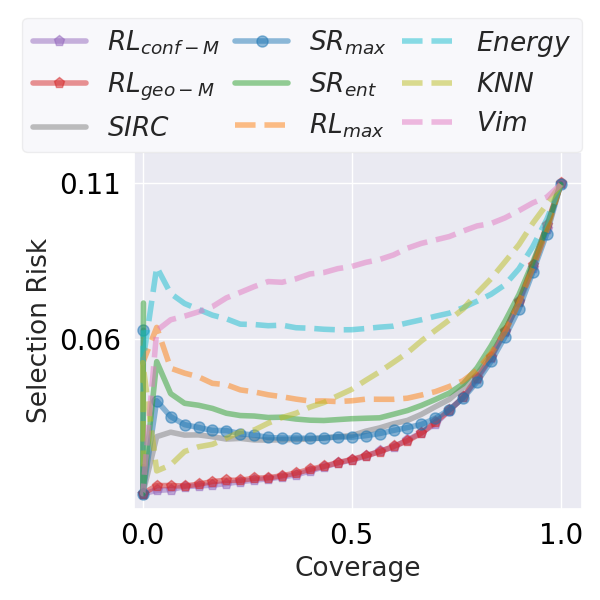}%
&\includegraphics[width=0.24\textwidth]{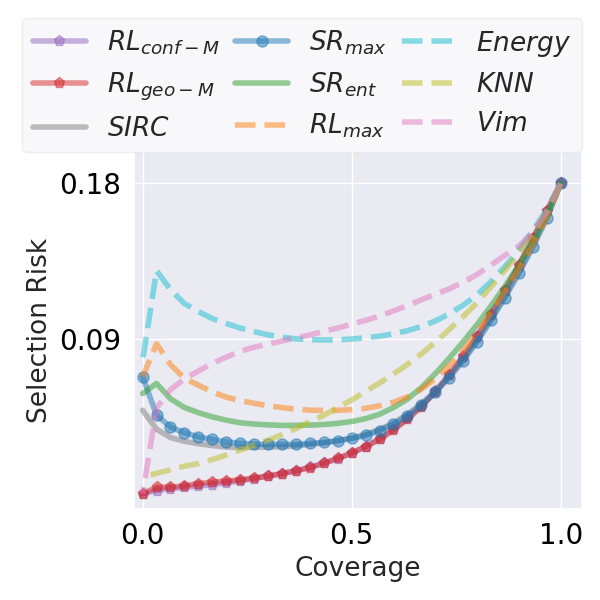}%
&\includegraphics[width=0.24\textwidth]{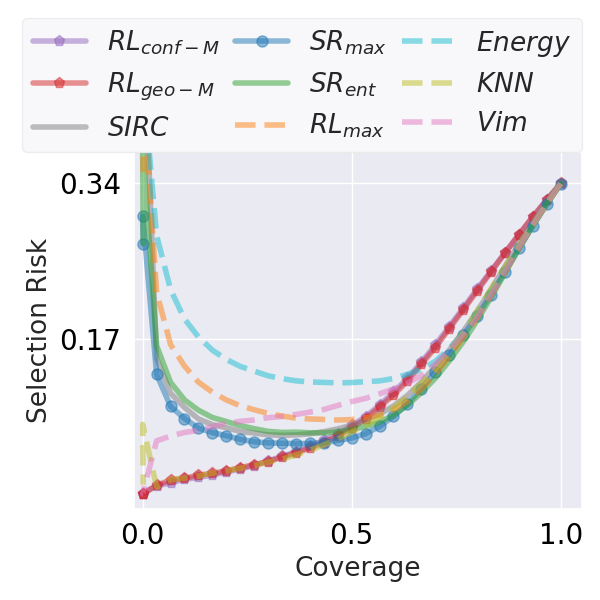}%
&\includegraphics[width=0.24\textwidth]{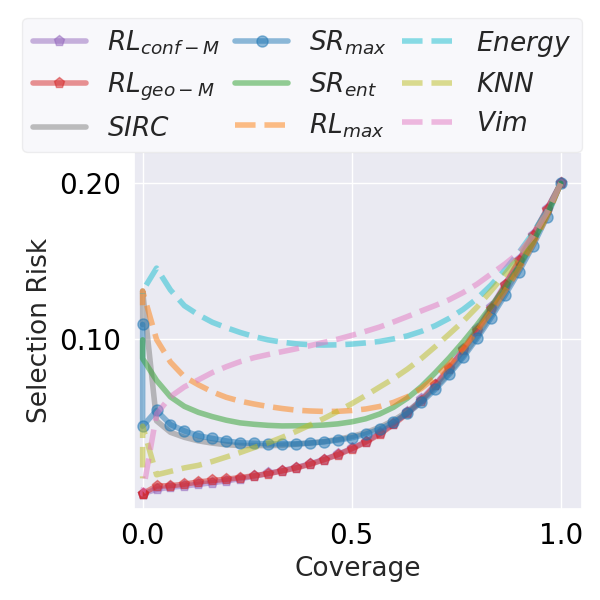}%
\\
&\small{\textbf{(a)} In-D (ImageNet)}%
&\small\small{\textbf{(b)} In-D + Shift (Cov)}%
&\small{\textbf{(c)} In-D + Shift (Label)}%
&\small{\textbf
{(d)} In-D + Shift (both)}%
\end{tabular}}%
\caption{RC curves of different confidence-score functions on the model \textbf{EVA} for ImageNet. \textbf{(a)-(d)} are RC curves evaluated using samples from \textbf{(a)} In-D samples only, \textbf{(b)} In-D and covariate-shifted samples only, \textbf{(c)} In-D and label-shifted samples only, and \textbf{(d)} all samples, respectively. We group the curves by whether they are originally proposed for SC setups (solid lines) or for OOD detection (dashed lines).}
\label{Fig: ImageNet RC curves}
\end{figure}%
\vspace{-1em}

\begin{table*}[!htbp]
\caption{Summary of AURC-$\alpha$ for \cref{Fig: ImageNet RC curves}. The AURC numbers are \emph{on the $10^{-2}$ scale---the lower, the better}. The score functions proposed for SC are highlighted in gray, and the rest are originally for OOD detection. The best AURC numbers for each coverage level are highlighted in bold, and the $2^{nd}$ and $3^{rd}$ best scores are underlined.}
\label{tab: Partial AURC ImageNet}
\centering
\resizebox{1\linewidth}{!}{%
\begin{tabular}{l ccc c ccc c ccc c ccc}
\multicolumn{1}{c}{\small{\texttt{ImageNet} - EVA}} & \multicolumn{3}
{c}{\small{In-D}} & & \multicolumn{3}
{c}{\small{In-D + Shift (Cov)}} &
 & \multicolumn{3}{c}{\small{In-D + Shift (Label)}} &
 & \multicolumn{3}{c}{\small{In-D + Shift (both)}}\\
\midrule
\multicolumn{1}{c}{\small{$\alpha$}} & 0.1 & 0.5 & 1 & & 0.1 & 0.5 & 1 & & 0.1 & 0.5 & 1 & & 0.1 & 0.5 & 1\\
\toprule
\rowcolor{Gray}
$RL_{\text{conf-M}}$ & \textbf{0.16} & \textbf{0.53} & \textbf{2.39} & & \textbf{0.24} & \textbf{0.96} & \textbf{4.77} & & \textbf{1.04} & \underline{3.34} & \underline{11.7} & & \textbf{0.34} & \textbf{1.20} & \textbf{5.43}\\
\rowcolor{Gray}
$RL_{\text{geo-M}}$ & \underline{0.27} & \underline{0.59} & \underline{2.43} & & \underline{0.37} & \underline{1.02} & \underline{4.78} & & \underline{1.20} & \underline{3.35} & \underline{11.6} & & \underline{0.48} & \underline{1.26} & \textbf{5.43}\\
\hdashline
\rowcolor{Gray}
SIRC & 2.23 & \underline{2.07} & \underline{3.36} & & 3.71 & 3.06 & \underline{5.83} & & 15.8 & 8.88 & 13.7 & & 4.61 & 3.53 & 6.52\\
\rowcolor{Gray}
$SR_{\text{max}}$ & 3.20 & 2.36 & 3.38 & & 4.52 & 3.66 & 5.93 & & 13.1 & 7.52 & 12.6 & & 5.21 & 3.75 & 6.56\\

\rowcolor{Gray}
$SR_{\text{ent}}$ & 4.28 & 3.13 & 4.04 & & 6.24 & 4.66 & 7.00 & & 16.0 & 9.19 & 13.4 & & 7.04 & 5.10 & 7.61\\
\rowcolor{Gray}
$SR_{\text{doctor}}$ & 3.22 & 2.38 & 3.40 & & 4.55 & 3.40 & 6.00 & & 13.2 & 7.55 & 12.6 & & 5.24 & 3.78 & 6.61\\

$RL_{\text{max}}$ & 5.53 & 4.05 & 4.57 & & 8.48 & 6.04 & 7.64 & & 21.1 & 11.9 & 14.9 & & 9.53 & 6.59 & 8.33\\
Energy & 8.13 & 6.60 & 6.90 & & 12.8 & 10.3 & 11.1 & & 27.3 & 16.6 & 18.1 & & 14.1 & 11.0 & 11.8\\
KNN & \underline{0.99} & 2.27 & 4.58 & & \underline{1.22} & \underline{2.89} & 6.78 & & \underline{1.18} & \textbf{3.23} & \textbf{10.8} & & \underline{1.24} & \underline{2.98} & \underline{7.16}\\
ViM & 5.48 & 7.11 & 8.31 & & 5.31 & 8.05 & 10.4 & & 5.83 & 7.89 & 13.4 & & 5.35 & 8.12 & 10.7\\
\bottomrule
\end{tabular}}
\end{table*}%

\paragraph{Results on iWildCam \& Amazon}
We report in \cref{Fig: WILDS RC curves} and \cref{tab: Partial AURC WILDS} the SC performance of different score functions on \texttt{iWildCam} and \texttt{Amazon}. Similar to the \texttt{ImageNet} experiment above, scores designed for OOD detection ($RL_{\text{max}}$, Energy, KNN and ViM) do not have satisfactory performance in SC. By contrast, existing SR-based scores ($SIRC$, $SR_{\text{max}}$, $SR_{\text{ent}}$ and $SR_{\text{doctor}}$) all demonstrate better SC potential than OOD score functions, and our margin-based score functions ($RL_{\text{conf-M}}$ and $RL_{\text{geo-M}}$) perform on par with the SR-based scores.
\begin{figure}[!tb]
\centering
\resizebox{1\columnwidth}{!}{%
\begin{tabular}{c c c | c c}
\centering
&\multicolumn{2}{c}{\texttt{iWildCam} - FYLP}
&\multicolumn{2}{c}{\texttt{Amazon} - LISA}
\\
&\includegraphics[width=0.24\textwidth]{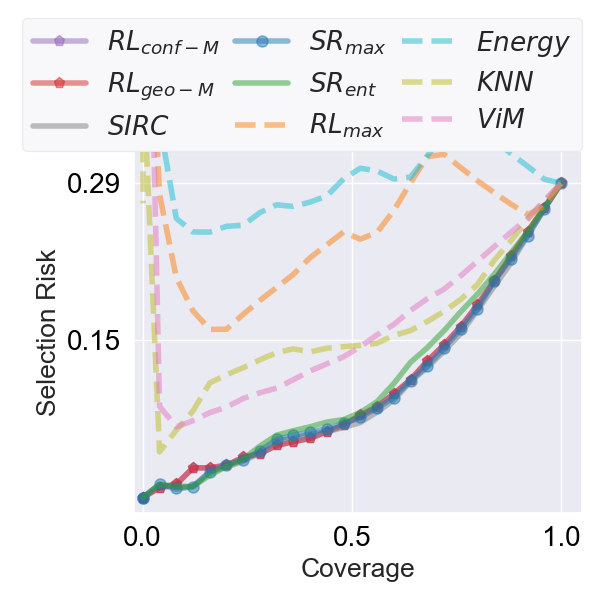}%
&\includegraphics[width=0.24\textwidth]{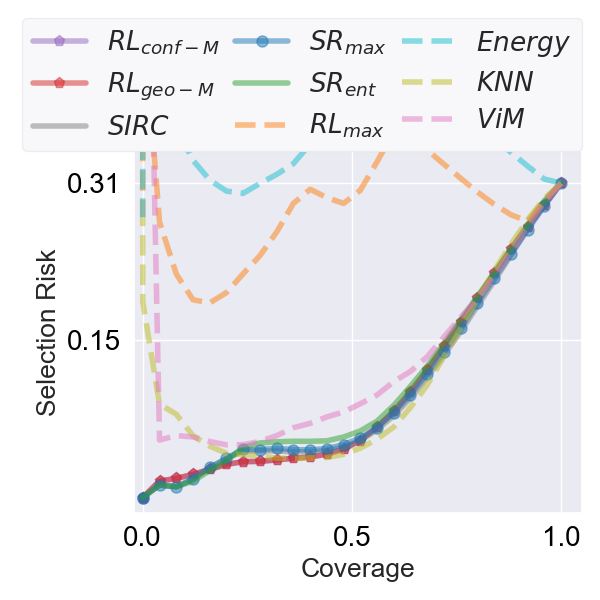}%
&\includegraphics[width=0.24\textwidth]{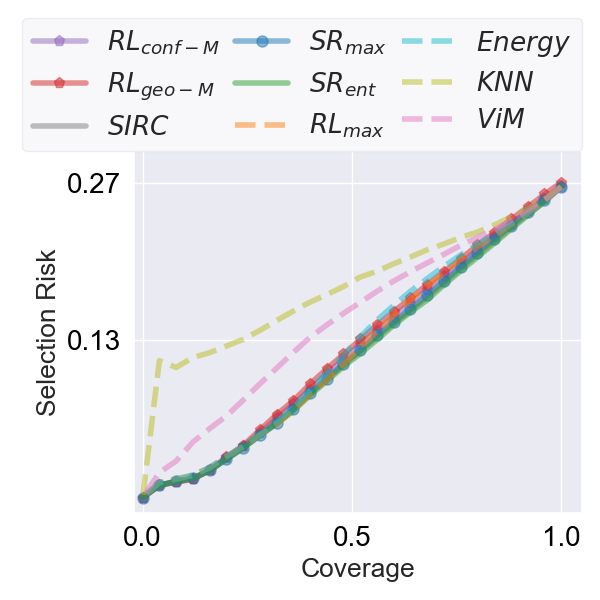}%
&\includegraphics[width=0.24\textwidth]{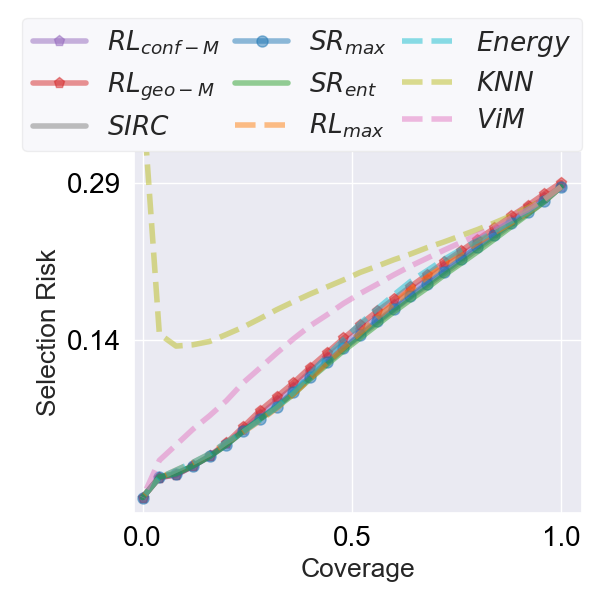}%
\\
&\small{\textbf{(a)} In-D}%
&\small{\textbf{(b)} In-D + Shift (Cov)}%
&\small{\textbf{(c)} In-D}%
&\small{\textbf{(d)} In-D + Shift (Cov)}%
\end{tabular}}%
\caption{RC curves of different confidence-score functions on the model \textbf{FLYP} for \texttt{iWildCam} and the model \textbf{LISA} for \texttt{Amazon}. \textbf{(a)\&(c)} are RC curves evaluated using In-D samples only and \textbf{(b)\&(d)} are RC curves evaluated using both In-D and covariate-shifted samples.}
\label{Fig: WILDS RC curves}
\vspace{-0.5em}
\end{figure}%
\begin{table*}[!tb]
\caption{Summary of AURC-$\alpha$ for \cref{Fig: WILDS RC curves}. The AURC numbers are \emph{on the $10^{-2}$ scale---the lower, the better}. The score functions proposed for SC are highlighted in gray, and the rest are originally for OOD detection. The best AURC numbers for each coverage level are highlighted in bold, and the $2^{nd}$ and $3^{rd}$ best scores are underlined.}
\label{tab: Partial AURC WILDS}
\centering
\resizebox{0.92\linewidth}{!}{%
\begin{tabular}{l ccc c ccc c ccc c ccc}
\multicolumn{1}{c}{} & \multicolumn{7}
{c}{\small{\texttt{iWildCam}- FYLP}} & & \multicolumn{7}{c}{\small{\texttt{Amazon} - LISA}}\\
\cline{2-8}\cline{10-16}
\vspace{-0.9em}
\\
\multicolumn{1}{c}{} & \multicolumn{3}
{c}{\small{In-D}} & & \multicolumn{3}{c}{\small{In-D + Shift (Cov)}} & & \multicolumn{3}{c}{\small{In-D}} & & \multicolumn{3}{c}{\small{In-D + Shift (Cov)}}\\
\midrule
\multicolumn{1}{c}{\small{$\alpha$}} & 0.1 & 0.5 & 1 & & 0.1 & 0.5 & 1 & & 0.1 & 0.5 & 1 & & 0.1 & 0.5 & 1\\
\toprule
\rowcolor{Gray}
$RL_{\text{conf-M}}$ & \underline{1.63} & \underline{3.88} & 10.2 & & \underline{1.84} & \textbf{3.21} & \underline{10.0}& & \textbf{1.11} & 5.31 & 12.5 & & \textbf{1.83} & 6.91 & 14.2\\
\rowcolor{Gray}
$RL_{\text{geo-M}}$ & \underline{1.63} & \underline{3.88} & \underline{10.1} & & \underline{1.84} & \textbf{3.21} & \underline{10.0} & & \underline{1.13} & 5.51 & 12.8 & & \underline{1.86} & 7.15 & 14.6\\
\hdashline
\rowcolor{Gray}
SIRC & \textbf{1.45} & \textbf{3.72} & \textbf{9.84} & & \underline{1.38} & \underline{3.5} & \textbf{9.94} & & \underline{1.14} & \underline{5.09} & \underline{12.2} & & \underline{1.88} & \underline{6.66} & \underline{13.9}\\
\rowcolor{Gray}
$SR_{\text{max}}$ & \textbf{1.45} & \underline{3.87} & \underline{10.0} & & \underline{1.38} & \underline{3.61} & \underline{10.1} & & \underline{1.14} & \underline{5.13} & \underline{12.3} & & \underline{1.88} & \underline{6.70} & \underline{14.0}\\
\rowcolor{Gray}
$SR_{\text{ent}}$ & \underline{1.46} & 4.03 & 10.6 & & \textbf{1.34} & 3.94 & 10.6 & & 1.15 & \textbf{5.06} & \textbf{12.1} & & 1.89 & \textbf{6.61} & \textbf{13.8}\\
\rowcolor{Gray}
$SR_{\text{doctor}}$ & \textbf{1.45} & \underline{3.87} & \underline{10.1} & & \underline{1.38} & 3.62 & \underline{10.1} & & \underline{1.14} & \underline{5.13} & \underline{12.2} & & \underline{1.88} & \underline{6.70} & \underline{13.9}\\

$RL_{\text{max}}$ & 29.1 & 21.4 & 24.7 & & 25.5 & 24.8 & 27.9 & & 1.26 & 5.21 & 12.5 & & 1.98 & 6.88 & 14.4\\
Energy & 35.2 & 28.3 & 29.9 & & 36.1 & 33.2 & 34.4 & & 1.26 & 5.37 & 12.8 & & 1.98 & 6.88 & 14.4\\
KNN & 6.40 & 11.1 & 15.3 & & 8.16 & 5.10 & 10.7 & & 12.1 & 14.3 & 18.2 & & 16.1 & 16.5 & 20.1\\
ViM & 13.4 & 10.7 & 15.7 & & 6.98 & 6.47 & 12.2 & & 2.33 & 8.72 & 15.0 & & 3.55 & 10.4 & 16.7\\
\bottomrule
\end{tabular}}
\end{table*}%

\subsection{Comparison with a training-based confidence-score function}
\label{Subsec: experiment result --- training based scores}
We also compare with a training-based method, ScNet~\citep{geifman2019selectivenet}. ScNet consists of a selection network and a classifier that are structurally \emph{decoupled} and trained together, allowing us to perform a faithful comparison of selection scores with a fixed classifier\footnote{We do not consider training-based score functions such as \citet{liu2019deep,huang2022self} due to the ambiguity in calculating their SR responses. During their training, a virtual class ``abstention'' is added and the softmax normalization is applied on all logits---including that of the virtual class, so it is unfair either simply dropping the abstention logit during test for score calculation or keeping the abstention logit but modifying the score calculation procedure. Retraining a classifier with the same settings but without the abstention logit is also unfair due to the requirement of a fixed classifier. Furthermore, \citet{feng2023towards} reports that the above selection methods \citep{liu2019deep,huang2022self} are not as effective as they claim.}. As shown above, score functions designed for OOD detection perform poorly for generalized SC, so here we focus on comparing our margin-based and SR-based score functions with ScNet. We first train ScNet using the training set of \texttt{CIFAR-10} and \texttt{ImageNet}, respectively; see \cref{App: ScNet details} for training details. After training, we fix both the classification and the selection heads and compute the scores and selection risks using the test setup shown in \cref{Tab: dataset choice}: (i) the ScNet selection score is taken directly from the selection head, and (ii) the margin-based and SR-based scores are computed using the classification head. 

\begin{figure}[!tb]
\centering
\resizebox{1\columnwidth}{!}{%
\begin{tabular}{c c c | c c}
\centering
&\multicolumn{2}{c}{\small{\texttt{CIFAR} - ScNet}}
&\multicolumn{2}{c}{\small{\texttt{ImageNet} - ScNet}}
\\
&\includegraphics[width=0.24\textwidth]{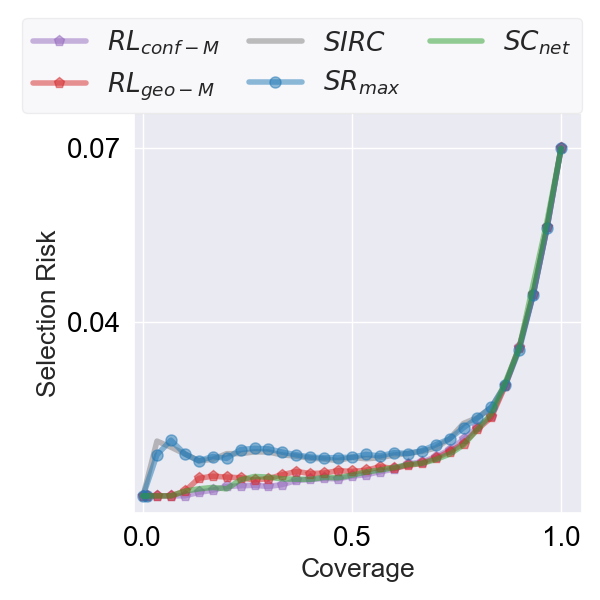}
&\includegraphics[width=0.24\textwidth]{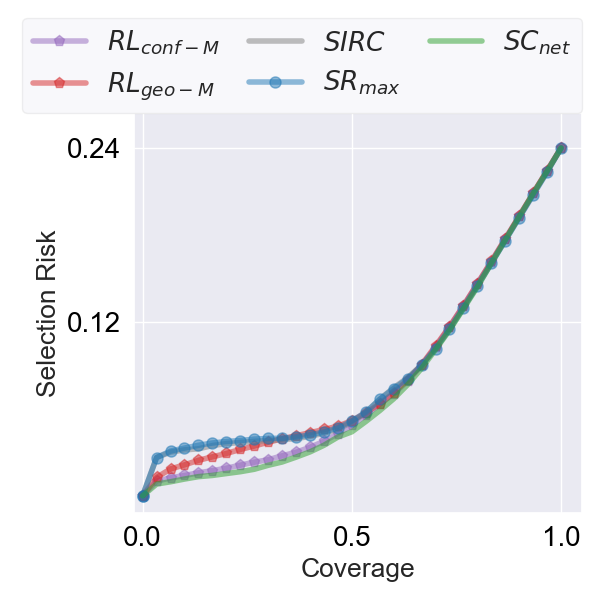}%
&\includegraphics[width=0.24\textwidth]{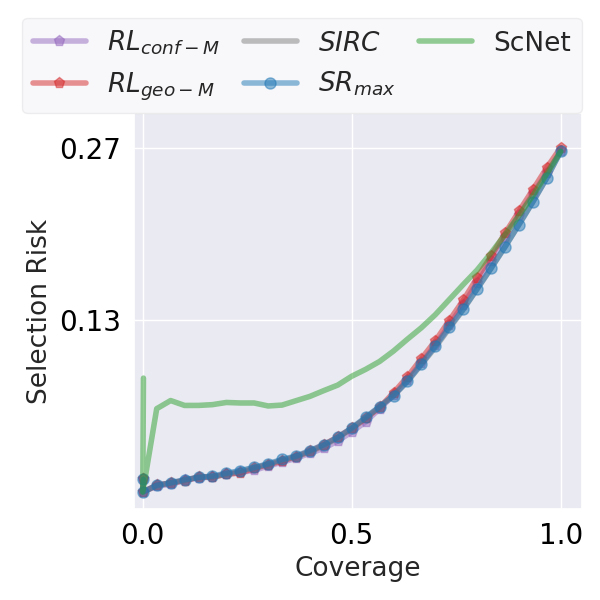}%
&\includegraphics[width=0.24\textwidth]{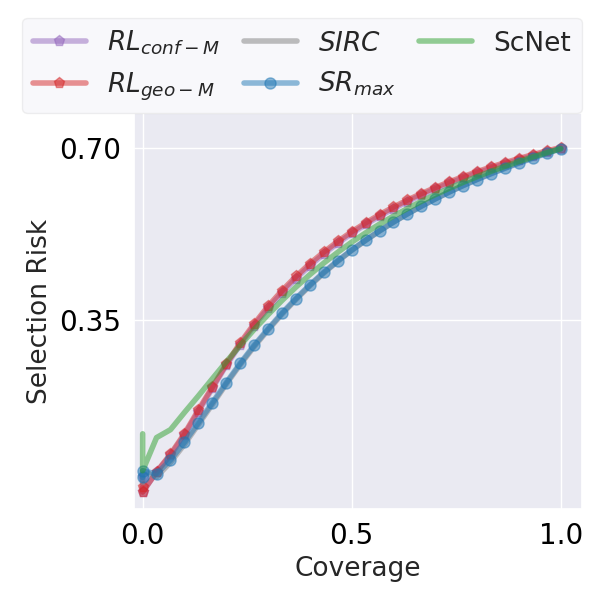}%
\\
&\small{\textbf{(a)} In-D}%
&\small{\textbf{(b)} In-D + Shift (Both)}%
&\small{\textbf{(c)} In-D}%
&\small{\textbf{(d)} In-D + Shift (Both)}%
\end{tabular}}%
\caption{RC curves of different confidence-score functions on the model \textbf{ScNet} for \texttt{CIFAR} and \texttt{ImageNet}. \textbf{(a)\&(c)} are RC curves evaluated using In-D samples only and \textbf{(b)\&(d)} are RC curves evaluated using both In-D and covariate-shifted samples.}
\label{Fig: ScNet RC curves}
\vspace{-0.5em}
\end{figure}%

\paragraph{Results}     
We show in \cref{Fig: ScNet RC curves} the RC curves achieved using ScNet, SR-based, and margin-based scores. For the \texttt{CIFAR} experiment shown in \cref{Fig: ScNet RC curves} (a)\&(b), ScNet and $RL_{\text{conf-M}}$ perform comparably and are better than $SR_{\text{max}}$ and SIRC, whereas for the \texttt{ImageNet} experiment in \cref{Fig: ScNet RC curves} (c)\&(d), $RL_{\text{conf-M}}$, $RL_{\text{geo-M}}$, $SR_{\text{max}}$ and SIRC perform comparably and are better than ScNet.\footnote{Existing training-based SC works so far have only reported SC (In-D) performance on \texttt{CIFAR-10} dataset and have not experimented with \texttt{ImageNet} using the full training set. Our results on \texttt{CIFAR-10} dataset faithfully reproduce the result originally reported in \citet{geifman2019selectivenet}.} Surprisingly, ScNet does not always lead to the best performance, even if it has access to training data. However, our margin-based scores consistently exhibit good SC performance. 

\subsection{Summary of experimental results} 
From all above experiments, we can conclude that (i) existing nontraining-based score functions for OOD detection do not perform well for generalized SC, not helping achieve reliable classification performance after rejecting low-confidence samples, and (ii) our proposed margin-based score functions $RL_{\text{geo-M}}$ and $RL_{\text{conf-M}}$ consistently perform comparably to or better than existing SR-based scores on all DL models we have tested, especially in the low-risk regime, which is of particular interest for high-stakes problems. These confirm the superiority of $RL_{\text{geo-M}}$ and $RL_{\text{conf-M}}$ as effective confidence-score functions for SC even under moderate distribution shifts for risk-sensitive applications. 

In most of our experiments, $RL_{\text{geo-M}}$ and $RL_{\text{conf-M}}$ perform similarly; only in rare cases, e.g. \cref{Fig: WILDS RC curves} (a) and \cref{Fig: ScNet RC curves} (b), $RL_{\text{conf-M}}$ slightly outperforms $RL_{\text{geo-M}}$. However, we do not think it is sufficient to conclude that $RL_{\text{conf-M}}$ is better than $RL_{\text{geo-M}}$, or vise versa. Recall how $RL_{\text{conf-M}}$ and $RL_{\text{geo-M}}$ are defined in \cref{eq: svm relative geo margin,eq: svm confidence margin} and their associated decision rules, the current practice of training DL classifiers is in favor of $RL_{\text{conf-M}}$\footnote{The cross-entropy loss is the most commonly used and minimizing it can be viewed as approximating maximizing the confidence margin. To see this, without loss of generality, assume that the magnitudes of the raw logits are ordered $z_1 > z_2 > \dots > z_K$ and that the true label of the current sample is class $1$. Then the cross-entropy loss for the current sample is $-\log \pqty{e^{z_1}/\sum_i e^{z_i}} = \log \sum_i e^{z_i - z_1} = \log \pqty{1+ \sum_{i \ge 2} e^{z_i - z_1}}$, so $\min -\log (e^{z_1}/\sum_i e^{z_i}) \equiv \min \log (1+ \sum_{i \ge 2} e^{z_i - z_1}) \equiv \min \sum_{i \ge 2} e^{z_i - z_1}$, where the last minimization problem can be approximated by $\min e^{z_2 - z_1} \equiv  \min (z_2 - z_1) \equiv  \max (z_2 - z_1)$, i.e., maximizing the confidence margin, when $e^{z_2- z_1} \gg \sum_{i \ge 3} e^{z_i - z_1}$. }. Thus, understanding the difference in behavior of $RL_{\text{geo-M}}$ and $RL_{\text{conf-M}}$ is likely to also involve investigation of the training process, which we will leave for future work. 

\section{Conclusion and discussion}
\label{Sec: conclusion and discussion}
In this paper, we have proposed \emph{generalized selective classification}, a new selective classification (SC) framework that allows distribution shifts. This is motivated by the pressing need to achieve reliable classification for real-world, risk-sensitive applications where data can come from the wild in deployment. Generalized SC \emph{covers} and \emph{unifies} existing selective classification and out-of-distribution (OOD) detection, and we have proposed two margin-based score functions for generalized SC, $RL_{\text{geo-M}}$ and $RL_{\text{conf-M}}$, which are not based on training: they are compatible for any given pretrained classifiers. Through our extensive analysis and experiments, we have shown the superiority of $RL_{\text{geo-M}}$ and $RL_{\text{conf-M}}$ over numerous recently proposed non-training-based score functions for SC and OOD detection. As the first work that touches on generalized SC, our paper can inspire several lines of future research, including at least: (i) to further improve the SC performance, one can try to align the training objective with our SC confidence-score functions here, i.e., promoting large margins; (ii) in this paper, we only consider the case where all classes are treated equally, while practical generalized SC might entail different rejection weights and costs for different classes, e.g., medical diagnosis of diseases with different levels of health implications; (iii) last but not least, finding better confidence-score functions. We hope that our small step here stimulates further research on generalized SC, bridging the widespread gaps between exploratory AI development and reliable AI deployment for practical high-stakes applications. 



\subsubsection*{Acknowledgments}
Liang H. and Sun J. are partially supported by NIH fund R01NS131314. Peng L. and Sun J. are partially supported by NIH fund R01CA287413. The authors acknowledge the Minnesota Supercomputing Institute (MSI) at the University of Minnesota for providing resources that contributed to the research results reported in this article. The content is solely the responsibility of the authors and does not necessarily represent the official views of the National Institutes of Health. This research is also part of AI-CLIMATE: ``AI Institute for Climate-Land Interactions, Mitigation, Adaptation, Tradeoffs and Economy,'' and is supported by USDA National Institute of Food and Agriculture (NIFA) and the National Science Foundation (NSF) National AI Research Institutes Competitive Award no. 2023-67021-39829.


\bibliography{main}

\begin{thebibliography}{71}
\providecommand{\natexlab}[1]{#1}
\providecommand{\url}[1]{\texttt{#1}}
\expandafter\ifx\csname urlstyle\endcsname\relax
  \providecommand{\doi}[1]{doi: #1}\else
  \providecommand{\doi}{doi: \begingroup \urlstyle{rm}\Url}\fi

\bibitem[Beery et~al.(2020)Beery, Cole, and Gjoka]{beery2020iwildcam}
Sara Beery, Elijah Cole, and Arvi Gjoka.
\newblock The iwildcam 2020 competition dataset.
\newblock \emph{arXiv preprint arXiv:2004.10340}, 2020.

\bibitem[Brown et~al.(2020)Brown, Mann, Ryder, Subbiah, Kaplan, Dhariwal, Neelakantan, Shyam, Sastry, Askell, Agarwal, Herbert-Voss, Krueger, Henighan, Child, Ramesh, Ziegler, Wu, Winter, Hesse, Chen, Sigler, Litwin, Gray, Chess, Clark, Berner, McCandlish, Radford, Sutskever, and Amodei]{brown2020language}
Tom~B. Brown, Benjamin Mann, Nick Ryder, Melanie Subbiah, Jared Kaplan, Prafulla Dhariwal, Arvind Neelakantan, Pranav Shyam, Girish Sastry, Amanda Askell, Sandhini Agarwal, Ariel Herbert-Voss, Gretchen Krueger, Tom Henighan, Rewon Child, Aditya Ramesh, Daniel~M. Ziegler, Jeffrey Wu, Clemens Winter, Christopher Hesse, Mark Chen, Eric Sigler, Mateusz Litwin, Scott Gray, Benjamin Chess, Jack Clark, Christopher Berner, Sam McCandlish, Alec Radford, Ilya Sutskever, and Dario Amodei.
\newblock Language models are few-shot learners.
\newblock \emph{arXiv preprint arXiv:2005.14165}, 2020.

\bibitem[Carlini et~al.(2019)Carlini, Athalye, Papernot, Brendel, Rauber, Tsipras, Goodfellow, Madry, and Kurakin]{carlini2019evaluating}
Nicholas Carlini, Anish Athalye, Nicolas Papernot, Wieland Brendel, Jonas Rauber, Dimitris Tsipras, Ian Goodfellow, Aleksander Madry, and Alexey Kurakin.
\newblock On evaluating adversarial robustness.
\newblock \emph{arXiv preprint arXiv:1902.06705}, 2019.

\bibitem[Cattelan \& Silva()Cattelan and Silva]{cattelan2023selective}
Lu{\'\i}s Felipe~Prates Cattelan and Danilo Silva.
\newblock On selective classification under distribution shift.
\newblock In \emph{NeurIPS 2023 Workshop on Distribution Shifts: New Frontiers with Foundation Models}.

\bibitem[Chow(1970)]{chow1970optimum}
C~Chow.
\newblock On optimum recognition error and reject tradeoff.
\newblock \emph{IEEE Transactions on information theory}, 16\penalty0 (1):\penalty0 41--46, 1970.

\bibitem[Corbi{\`e}re et~al.(2019)Corbi{\`e}re, Thome, Bar-Hen, Cord, and P{\'e}rez]{corbiere2019addressing}
Charles Corbi{\`e}re, Nicolas Thome, Avner Bar-Hen, Matthieu Cord, and Patrick P{\'e}rez.
\newblock Addressing failure prediction by learning model confidence.
\newblock \emph{Advances in Neural Information Processing Systems}, 32, 2019.

\bibitem[Cortes et~al.(2016)Cortes, DeSalvo, and Mohri]{cortes2016boosting}
Corinna Cortes, Giulia DeSalvo, and Mehryar Mohri.
\newblock Boosting with abstention.
\newblock \emph{Advances in Neural Information Processing Systems}, 29, 2016.

\bibitem[Crammer \& Singer(2001)Crammer and Singer]{crammer2001algorithmic}
Koby Crammer and Yoram Singer.
\newblock On the algorithmic implementation of multiclass kernel-based vector machines.
\newblock \emph{Journal of machine learning research}, 2\penalty0 (Dec):\penalty0 265--292, 2001.

\bibitem[Croce et~al.(2020)Croce, Andriushchenko, Sehwag, Debenedetti, Flammarion, Chiang, Mittal, and Hein]{croce2020robustbench}
Francesco Croce, Maksym Andriushchenko, Vikash Sehwag, Edoardo Debenedetti, Nicolas Flammarion, Mung Chiang, Prateek Mittal, and Matthias Hein.
\newblock Robustbench: a standardized adversarial robustness benchmark.
\newblock \emph{arXiv preprint arXiv:2010.09670}, 2020.

\bibitem[Davis \& Goadrich(2006)Davis and Goadrich]{davis2006relationship}
Jesse Davis and Mark Goadrich.
\newblock The relationship between precision-recall and roc curves.
\newblock In \emph{Proceedings of the 23rd international conference on Machine learning}, pp.\  233--240, 2006.

\bibitem[Djurisic et~al.(2022)Djurisic, Bozanic, Ashok, and Liu]{djurisic2022extremely}
Andrija Djurisic, Nebojsa Bozanic, Arjun Ashok, and Rosanne Liu.
\newblock Extremely simple activation shaping for out-of-distribution detection.
\newblock \emph{arXiv preprint arXiv:2209.09858}, 2022.

\bibitem[Dusenberry et~al.(2020)Dusenberry, Jerfel, Wen, Ma, Snoek, Heller, Lakshminarayanan, and Tran]{dusenberry2020efficient}
Michael Dusenberry, Ghassen Jerfel, Yeming Wen, Yian Ma, Jasper Snoek, Katherine Heller, Balaji Lakshminarayanan, and Dustin Tran.
\newblock Efficient and scalable bayesian neural nets with rank-1 factors.
\newblock In \emph{International conference on machine learning}, pp.\  2782--2792. PMLR, 2020.

\bibitem[El-Yaniv et~al.(2010)]{el2010foundations}
Ran El-Yaniv et~al.
\newblock On the foundations of noise-free selective classification.
\newblock \emph{Journal of Machine Learning Research}, 11\penalty0 (5), 2010.

\bibitem[Fang et~al.(2023)Fang, Wang, Xie, Sun, Wu, Wang, Huang, Wang, and Cao]{fang2023eva}
Yuxin Fang, Wen Wang, Binhui Xie, Quan Sun, Ledell Wu, Xinggang Wang, Tiejun Huang, Xinlong Wang, and Yue Cao.
\newblock Eva: Exploring the limits of masked visual representation learning at scale.
\newblock In \emph{Proceedings of the IEEE/CVF Conference on Computer Vision and Pattern Recognition}, pp.\  19358--19369, 2023.

\bibitem[Feng et~al.(2023)Feng, Ahmed, Hajimirsadeghi, and Abdi]{feng2023towards}
Leo Feng, Mohamed~Osama Ahmed, Hossein Hajimirsadeghi, and Amir~H Abdi.
\newblock Towards better selective classification.
\newblock In \emph{The Eleventh International Conference on Learning Representations}, 2023.

\bibitem[Fisch et~al.(2022)Fisch, Jaakkola, and Barzilay]{fisch2022calibrated}
Adam Fisch, Tommi~S Jaakkola, and Regina Barzilay.
\newblock Calibrated selective classification.
\newblock \emph{Transactions on Machine Learning Research}, 2022.

\bibitem[Franc et~al.(2023{\natexlab{a}})Franc, Prusa, and Voracek]{franc2023optimal}
Vaclav Voracek~Vojtech Franc, Daniel Prusa, and Vaclav Voracek.
\newblock Optimal strategies for reject option classifiers.
\newblock \emph{Journal of Machine Learning Research}, 24\penalty0 (11):\penalty0 1--49, 2023{\natexlab{a}}.

\bibitem[Franc et~al.(2023{\natexlab{b}})Franc, Prusa, and Paplham]{franc2023reject}
Vojtech Franc, Daniel Prusa, and Jakub Paplham.
\newblock Reject option models comprising out-of-distribution detection.
\newblock \emph{arXiv preprint arXiv:2307.05199}, 2023{\natexlab{b}}.

\bibitem[Franc et~al.(2024)Franc, Paplham, and Prusa]{franc2024scod}
Vojtech Franc, Jakub Paplham, and Daniel Prusa.
\newblock Scod: From heuristics to theory.
\newblock \emph{arXiv preprint arXiv:2403.16916}, 2024.

\bibitem[Fumera \& Roli(2002)Fumera and Roli]{fumera2002support}
Giorgio Fumera and Fabio Roli.
\newblock Support vector machines with embedded reject option.
\newblock In \emph{Pattern Recognition with Support Vector Machines: First International Workshop, SVM 2002 Niagara Falls, Canada, August 10, 2002 Proceedings}, pp.\  68--82. Springer, 2002.

\bibitem[Gal \& Ghahramani(2016)Gal and Ghahramani]{gal2016dropout}
Yarin Gal and Zoubin Ghahramani.
\newblock Dropout as a bayesian approximation: Representing model uncertainty in deep learning.
\newblock In \emph{international conference on machine learning}, pp.\  1050--1059. PMLR, 2016.

\bibitem[Geifman \& El-Yaniv(2017)Geifman and El-Yaniv]{geifman2017selective}
Yonatan Geifman and Ran El-Yaniv.
\newblock Selective classification for deep neural networks.
\newblock \emph{Advances in neural information processing systems}, 30, 2017.

\bibitem[Geifman \& El-Yaniv(2019)Geifman and El-Yaniv]{geifman2019selectivenet}
Yonatan Geifman and Ran El-Yaniv.
\newblock Selectivenet: A deep neural network with an integrated reject option.
\newblock In \emph{International conference on machine learning}, pp.\  2151--2159. PMLR, 2019.

\bibitem[Geifman et~al.(2018)Geifman, Uziel, and El-Yaniv]{geifman2018bias}
Yonatan Geifman, Guy Uziel, and Ran El-Yaniv.
\newblock Bias-reduced uncertainty estimation for deep neural classifiers.
\newblock \emph{arXiv preprint arXiv:1805.08206}, 2018.

\bibitem[Geng et~al.(2020)Geng, Huang, and Chen]{geng2020recent}
Chuanxing Geng, Sheng-jun Huang, and Songcan Chen.
\newblock Recent advances in open set recognition: A survey.
\newblock \emph{IEEE transactions on pattern analysis and machine intelligence}, 43\penalty0 (10):\penalty0 3614--3631, 2020.

\bibitem[Goyal et~al.(2023)Goyal, Kumar, Garg, Kolter, and Raghunathan]{goyal2023finetune}
Sachin Goyal, Ananya Kumar, Sankalp Garg, Zico Kolter, and Aditi Raghunathan.
\newblock Finetune like you pretrain: Improved finetuning of zero-shot vision models.
\newblock In \emph{Proceedings of the IEEE/CVF Conference on Computer Vision and Pattern Recognition}, pp.\  19338--19347, 2023.

\bibitem[Grandvalet et~al.(2008)Grandvalet, Rakotomamonjy, Keshet, and Canu]{grandvalet2008support}
Yves Grandvalet, Alain Rakotomamonjy, Joseph Keshet, and St{\'e}phane Canu.
\newblock Support vector machines with a reject option.
\newblock \emph{Advances in neural information processing systems}, 21, 2008.

\bibitem[Granese et~al.(2021)Granese, Romanelli, Gorla, Palamidessi, and Piantanida]{granese2021doctor}
Federica Granese, Marco Romanelli, Daniele Gorla, Catuscia Palamidessi, and Pablo Piantanida.
\newblock Doctor: A simple method for detecting misclassification errors.
\newblock \emph{Advances in Neural Information Processing Systems}, 34:\penalty0 5669--5681, 2021.

\bibitem[Guo et~al.(2017)Guo, Pleiss, Sun, and Weinberger]{guo2017calibration}
Chuan Guo, Geoff Pleiss, Yu~Sun, and Kilian~Q Weinberger.
\newblock On calibration of modern neural networks.
\newblock In \emph{International conference on machine learning}, pp.\  1321--1330. PMLR, 2017.

\bibitem[Hendrycks \& Dietterich(2018)Hendrycks and Dietterich]{hendrycks2018benchmarking}
Dan Hendrycks and Thomas Dietterich.
\newblock Benchmarking neural network robustness to common corruptions and perturbations.
\newblock In \emph{International Conference on Learning Representations}, 2018.

\bibitem[Hendrycks \& Gimpel(2016)Hendrycks and Gimpel]{hendrycks2016baseline}
Dan Hendrycks and Kevin Gimpel.
\newblock A baseline for detecting misclassified and out-of-distribution examples in neural networks.
\newblock \emph{arXiv preprint arXiv:1610.02136}, 2016.

\bibitem[Hendrycks et~al.(2019)Hendrycks, Basart, Mazeika, Zou, Kwon, Mostajabi, Steinhardt, and Song]{hendrycks2019scaling}
Dan Hendrycks, Steven Basart, Mantas Mazeika, Andy Zou, Joe Kwon, Mohammadreza Mostajabi, Jacob Steinhardt, and Dawn Song.
\newblock Scaling out-of-distribution detection for real-world settings.
\newblock \emph{arXiv preprint arXiv:1911.11132}, 2019.

\bibitem[Huang et~al.(2022)Huang, Zhang, and Zhang]{huang2022self}
Lang Huang, Chao Zhang, and Hongyang Zhang.
\newblock Self-adaptive training: Bridging supervised and self-supervised learning.
\newblock \emph{IEEE Transactions on Pattern Analysis and Machine Intelligence}, 2022.

\bibitem[Jiang et~al.(2018)Jiang, Kim, Guan, and Gupta]{jiang2018trust}
Heinrich Jiang, Been Kim, Melody Guan, and Maya Gupta.
\newblock To trust or not to trust a classifier.
\newblock \emph{Advances in neural information processing systems}, 31, 2018.

\bibitem[Kim et~al.(2023)Kim, Koo, and Hwang]{kim2023unified}
Jihyo Kim, Jiin Koo, and Sangheum Hwang.
\newblock A unified benchmark for the unknown detection capability of deep neural networks.
\newblock \emph{Expert Systems with Applications}, 229:\penalty0 120461, 2023.

\bibitem[Koh et~al.(2021)Koh, Sagawa, Marklund, Xie, Zhang, Balsubramani, Hu, Yasunaga, Phillips, Gao, et~al.]{koh2021wilds}
Pang~Wei Koh, Shiori Sagawa, Henrik Marklund, Sang~Michael Xie, Marvin Zhang, Akshay Balsubramani, Weihua Hu, Michihiro Yasunaga, Richard~Lanas Phillips, Irena Gao, et~al.
\newblock Wilds: A benchmark of in-the-wild distribution shifts.
\newblock In \emph{International Conference on Machine Learning}, pp.\  5637--5664. PMLR, 2021.

\bibitem[Krizhevsky et~al.(2009)Krizhevsky, Hinton, et~al.]{krizhevsky2009learning}
Alex Krizhevsky, Geoffrey Hinton, et~al.
\newblock Learning multiple layers of features from tiny images.
\newblock 2009.

\bibitem[Lakshminarayanan et~al.(2017)Lakshminarayanan, Pritzel, and Blundell]{lakshminarayanan2017simple}
Balaji Lakshminarayanan, Alexander Pritzel, and Charles Blundell.
\newblock Simple and scalable predictive uncertainty estimation using deep ensembles.
\newblock \emph{Advances in neural information processing systems}, 30, 2017.

\bibitem[LeCun et~al.(1989)LeCun, Boser, Denker, Henderson, Howard, Hubbard, and Jackel]{lecun1989handwritten}
Yann LeCun, Bernhard Boser, John Denker, Donnie Henderson, Richard Howard, Wayne Hubbard, and Lawrence Jackel.
\newblock Handwritten digit recognition with a back-propagation network.
\newblock \emph{Advances in neural information processing systems}, 2, 1989.

\bibitem[Lei(2014)]{lei2014classification}
Jing Lei.
\newblock Classification with confidence.
\newblock \emph{Biometrika}, 101\penalty0 (4):\penalty0 755--769, 2014.

\bibitem[Liang et~al.(2023)Liang, Liang, Peng, Cui, Mitchell, and Sun]{liang2023optimization}
Hengyue Liang, Buyun Liang, Le~Peng, Ying Cui, Tim Mitchell, and Ju~Sun.
\newblock Optimization and optimizers for adversarial robustness.
\newblock \emph{arXiv preprint arXiv:2303.13401}, 2023.

\bibitem[Liang et~al.(2017)Liang, Li, and Srikant]{liang2017enhancing}
Shiyu Liang, Yixuan Li, and Rayadurgam Srikant.
\newblock Enhancing the reliability of out-of-distribution image detection in neural networks.
\newblock \emph{arXiv preprint arXiv:1706.02690}, 2017.

\bibitem[Liu et~al.(2020)Liu, Wang, Owens, and Li]{liu2020energy}
Weitang Liu, Xiaoyun Wang, John Owens, and Yixuan Li.
\newblock Energy-based out-of-distribution detection.
\newblock \emph{Advances in neural information processing systems}, 33:\penalty0 21464--21475, 2020.

\bibitem[Liu et~al.(2022)Liu, Mao, Wu, Feichtenhofer, Darrell, and Xie]{liu2022convnet}
Zhuang Liu, Hanzi Mao, Chao-Yuan Wu, Christoph Feichtenhofer, Trevor Darrell, and Saining Xie.
\newblock A convnet for the 2020s.
\newblock In \emph{Proceedings of the IEEE/CVF Conference on Computer Vision and Pattern Recognition}, pp.\  11976--11986, 2022.

\bibitem[Liu et~al.(2019)Liu, Wang, Liang, Salakhutdinov, Morency, and Ueda]{liu2019deep}
Ziyin Liu, Zhikang Wang, Paul~Pu Liang, Russ~R Salakhutdinov, Louis-Philippe Morency, and Masahito Ueda.
\newblock Deep gamblers: Learning to abstain with portfolio theory.
\newblock \emph{Advances in Neural Information Processing Systems}, 32, 2019.

\bibitem[Maddox et~al.(2019)Maddox, Izmailov, Garipov, Vetrov, and Wilson]{maddox2019simple}
Wesley~J Maddox, Pavel Izmailov, Timur Garipov, Dmitry~P Vetrov, and Andrew~Gordon Wilson.
\newblock A simple baseline for bayesian uncertainty in deep learning.
\newblock \emph{Advances in neural information processing systems}, 32, 2019.

\bibitem[Mohri et~al.(2018)Mohri, Rostamizadeh, and Talwalkar]{mohri2018foundations}
Mehryar Mohri, Afshin Rostamizadeh, and Ameet Talwalkar.
\newblock \emph{Foundations of machine learning}.
\newblock MIT press, 2018.

\bibitem[M{\"u}ller et~al.(2023)M{\"u}ller, Radev, Schmier, Draxler, Rother, and K{\"o}the]{muller2023finding}
Jens M{\"u}ller, Stefan~T Radev, Robert Schmier, Felix Draxler, Carsten Rother, and Ullrich K{\"o}the.
\newblock Finding competence regions in domain generalization.
\newblock \emph{arXiv preprint arXiv:2303.09989}, 2023.

\bibitem[Ni et~al.(2019)Ni, Li, and McAuley]{ni2019justifying}
Jianmo Ni, Jiacheng Li, and Julian McAuley.
\newblock Justifying recommendations using distantly-labeled reviews and fine-grained aspects.
\newblock In \emph{Proceedings of the 2019 Conference on Empirical Methods in Natural Language Processing and the 9th International Joint Conference on Natural Language Processing (EMNLP-IJCNLP)}, 2019.

\bibitem[Nixon et~al.(2019)Nixon, Dusenberry, Zhang, Jerfel, and Tran]{nixon2019measuring}
Jeremy Nixon, Michael~W Dusenberry, Linchuan Zhang, Ghassen Jerfel, and Dustin Tran.
\newblock Measuring calibration in deep learning.
\newblock In \emph{CVPR workshops}, volume~2, 2019.

\bibitem[Park et~al.(2023)Park, Jung, and Teoh]{park2023nearest}
Jaewoo Park, Yoon~Gyo Jung, and Andrew Beng~Jin Teoh.
\newblock Nearest neighbor guidance for out-of-distribution detection.
\newblock In \emph{Proceedings of the IEEE/CVF International Conference on Computer Vision}, pp.\  1686--1695, 2023.

\bibitem[Pietraszek(2005)]{pietraszek2005optimizing}
Tadeusz Pietraszek.
\newblock Optimizing abstaining classifiers using roc analysis.
\newblock In \emph{Proceedings of the 22nd international conference on Machine learning}, pp.\  665--672, 2005.

\bibitem[Quinonero-Candela et~al.(2008)Quinonero-Candela, Sugiyama, Schwaighofer, and Lawrence]{quinonero2008dataset}
Joaquin Quinonero-Candela, Masashi Sugiyama, Anton Schwaighofer, and Neil~D Lawrence.
\newblock \emph{Dataset shift in machine learning}.
\newblock Mit Press, 2008.

\bibitem[Rabanser et~al.(2019)Rabanser, G{\"u}nnemann, and Lipton]{rabanser2019failing}
Stephan Rabanser, Stephan G{\"u}nnemann, and Zachary Lipton.
\newblock Failing loudly: An empirical study of methods for detecting dataset shift.
\newblock \emph{Advances in Neural Information Processing Systems}, 32, 2019.

\bibitem[Radford et~al.(2021)Radford, Kim, Hallacy, Ramesh, Goh, Agarwal, Sastry, Askell, Mishkin, Clark, et~al.]{radford2021learning}
Alec Radford, Jong~Wook Kim, Chris Hallacy, Aditya Ramesh, Gabriel Goh, Sandhini Agarwal, Girish Sastry, Amanda Askell, Pamela Mishkin, Jack Clark, et~al.
\newblock Learning transferable visual models from natural language supervision.
\newblock In \emph{International conference on machine learning}, pp.\  8748--8763. PMLR, 2021.

\bibitem[Russakovsky et~al.(2015)Russakovsky, Deng, Su, Krause, Satheesh, Ma, Huang, Karpathy, Khosla, Bernstein, et~al.]{russakovsky2015imagenet}
Olga Russakovsky, Jia Deng, Hao Su, Jonathan Krause, Sanjeev Satheesh, Sean Ma, Zhiheng Huang, Andrej Karpathy, Aditya Khosla, Michael Bernstein, et~al.
\newblock Imagenet large scale visual recognition challenge.
\newblock \emph{International journal of computer vision}, 115:\penalty0 211--252, 2015.

\bibitem[Saito \& Rehmsmeier(2015)Saito and Rehmsmeier]{saito2015precision}
Takaya Saito and Marc Rehmsmeier.
\newblock The precision-recall plot is more informative than the roc plot when evaluating binary classifiers on imbalanced datasets.
\newblock \emph{PloS one}, 10\penalty0 (3):\penalty0 e0118432, 2015.

\bibitem[Sun et~al.(2021)Sun, Guo, and Li]{sun2021react}
Yiyou Sun, Chuan Guo, and Yixuan Li.
\newblock React: Out-of-distribution detection with rectified activations.
\newblock \emph{Advances in Neural Information Processing Systems}, 34:\penalty0 144--157, 2021.

\bibitem[Sun et~al.(2022)Sun, Ming, Zhu, and Li]{sun2022out}
Yiyou Sun, Yifei Ming, Xiaojin Zhu, and Yixuan Li.
\newblock Out-of-distribution detection with deep nearest neighbors.
\newblock In \emph{International Conference on Machine Learning}, pp.\  20827--20840. PMLR, 2022.

\bibitem[Villmann et~al.(2016)Villmann, Kaden, Bohnsack, Villmann, Drogies, Saralajew, and Hammer]{villmann2016self}
Thomas Villmann, Marika Kaden, Andrea Bohnsack, J-M Villmann, T~Drogies, Sascha Saralajew, and Barbara Hammer.
\newblock Self-adjusting reject options in prototype based classification.
\newblock In \emph{Advances in Self-Organizing Maps and Learning Vector Quantization: Proceedings of the 11th International Workshop WSOM 2016, Houston, Texas, USA, January 6-8, 2016}, pp.\  269--279. Springer, 2016.

\bibitem[Wang et~al.(2022)Wang, Li, Feng, and Zhang]{wang2022vim}
Haoqi Wang, Zhizhong Li, Litong Feng, and Wayne Zhang.
\newblock Vim: Out-of-distribution with virtual-logit matching.
\newblock In \emph{Proceedings of the IEEE/CVF conference on computer vision and pattern recognition}, pp.\  4921--4930, 2022.

\bibitem[Wightman(2019)]{rw2019timm}
Ross Wightman.
\newblock Pytorch image models.
\newblock \url{https://github.com/rwightman/pytorch-image-models}, 2019.

\bibitem[Xia \& Bouganis(2022)Xia and Bouganis]{xia2022augmenting}
Guoxuan Xia and Christos-Savvas Bouganis.
\newblock Augmenting softmax information for selective classification with out-of-distribution data.
\newblock In \emph{Proceedings of the Asian Conference on Computer Vision}, pp.\  1995--2012, 2022.

\bibitem[Xie et~al.(2017)Xie, Girshick, Doll{\'a}r, Tu, and He]{xie2017aggregated}
Saining Xie, Ross Girshick, Piotr Doll{\'a}r, Zhuowen Tu, and Kaiming He.
\newblock Aggregated residual transformations for deep neural networks.
\newblock In \emph{Proceedings of the IEEE conference on computer vision and pattern recognition}, pp.\  1492--1500, 2017.

\bibitem[Yang et~al.(2021)Yang, Zhou, Li, and Liu]{yang2021generalized}
Jingkang Yang, Kaiyang Zhou, Yixuan Li, and Ziwei Liu.
\newblock Generalized out-of-distribution detection: A survey.
\newblock \emph{arXiv preprint arXiv:2110.11334}, 2021.

\bibitem[Yang et~al.(2022)Yang, Wang, Zou, Zhou, Ding, Peng, Wang, Chen, Li, Sun, et~al.]{yang2022openood}
Jingkang Yang, Pengyun Wang, Dejian Zou, Zitang Zhou, Kunyuan Ding, Wenxuan Peng, Haoqi Wang, Guangyao Chen, Bo~Li, Yiyou Sun, et~al.
\newblock Openood: Benchmarking generalized out-of-distribution detection.
\newblock \emph{Advances in Neural Information Processing Systems}, 35:\penalty0 32598--32611, 2022.

\bibitem[Yao et~al.(2022)Yao, Wang, Li, Zhang, Liang, Zou, and Finn]{yao2022improving}
Huaxiu Yao, Yu~Wang, Sai Li, Linjun Zhang, Weixin Liang, James Zou, and Chelsea Finn.
\newblock Improving out-of-distribution robustness via selective augmentation.
\newblock In \emph{International Conference on Machine Learning}, pp.\  25407--25437. PMLR, 2022.

\bibitem[Yuan et~al.(2022)Yuan, Hou, Jiang, Feng, and Yan]{yuan2022volo}
Li~Yuan, Qibin Hou, Zihang Jiang, Jiashi Feng, and Shuicheng Yan.
\newblock Volo: Vision outlooker for visual recognition.
\newblock \emph{IEEE Transactions on Pattern Analysis and Machine Intelligence}, 2022.

\bibitem[Yuan et~al.(2021)Yuan, Chen, Chen, Codella, Dai, Gao, Hu, Huang, Li, Li, et~al.]{yuan2021florence}
Lu~Yuan, Dongdong Chen, Yi-Ling Chen, Noel Codella, Xiyang Dai, Jianfeng Gao, Houdong Hu, Xuedong Huang, Boxin Li, Chunyuan Li, et~al.
\newblock Florence: A new foundation model for computer vision.
\newblock \emph{arXiv preprint arXiv:2111.11432}, 2021.

\bibitem[Zhang et~al.(2023)Zhang, Xie, Li, Mei, and Liu]{zhang2023survey}
Xu-Yao Zhang, Guo-Sen Xie, Xiuli Li, Tao Mei, and Cheng-Lin Liu.
\newblock A survey on learning to reject.
\newblock \emph{Proceedings of the IEEE}, 111\penalty0 (2):\penalty0 185--215, 2023.

\bibitem[Zhu et~al.(2022)Zhu, Cheng, Zhang, and Liu]{zhu2022rethinking}
Fei Zhu, Zhen Cheng, Xu-Yao Zhang, and Cheng-Lin Liu.
\newblock Rethinking confidence calibration for failure prediction.
\newblock In \emph{Computer Vision--ECCV 2022: 17th European Conference, Tel Aviv, Israel, October 23--27, 2022, Proceedings, Part XXV}, pp.\  518--536. Springer, 2022.

\end{thebibliography}
\bibliographystyle{tmlr}


\appendix

\section{Linear SVM and margins}
\label{App: Multiclass SVM}

We first consider binary classification. Assume training set $\set{(\mb x_i, y_i)}_{i \in [N]}$ ($[N] \doteq \set{1, \dots, N}$), where $y_i \in \set{+1, -1}$ and for notational simplicity, we assume that an extra $1$ has been appended to the original feature vectors so that we only need to consider the homogeneous form of the predictor: $f(\mb x) = \mb w^\T \mb x$. The basic idea of SVM is to maximize the worst signed geometric margin, \emph{which makes sense no matter whether the data are separable or not}: 
\begin{align} 
\label{eq:hard_binary_svm_original}
\max_{\mb w} \min_{i \in [N]}  \frac{y_i \mb w^\T \mb x_i}{\norm{\mb w}}. 
\end{align}
Note that the problem is non-convex due to the fractional form $\frac{y_i \mb w^\T \mb x_i}{\norm{\mb w}}$. Moreover, $\frac{y_i \mb w^\T \mb x_i}{\norm{\mb w}}$ is invariant to the rescaling of $\mb w$, which is bad for numerical computation (as this implies that there exist global solutions arbitrarily close to $\mb 0$ and $\infty$).

If the training set is separable, i.e., there exists a $\mb w$ such that $y_i \mb w^\T \mb x_i \ge 0, \forall\; i \in [N]$, there also exists a $\mb w$ so that $\min_{i} y_i \mb w^\T \mb x_i = 1 \; \forall\; i \in [N]$ by a simple rescaling argument. Then \cref{eq:hard_binary_svm_original} becomes 
\begin{align}
\max_{\mb w} \min_{i \in [N]}  \frac{y_i \mb w^\T \mb x_i}{\norm{\mb w}} \quad  &  \st \; \min_{i} y_i \mb w^\T \mb x_i = 1 \; \forall\; i \in [N]   \\ \Longleftrightarrow \max_{\mb w}  \frac{1}{\norm{\mb w}} \quad &  \st \; \min_{i} y_i \mb w^\T \mb x_i = 1 \; \forall\; i \in [N] \\
\Longleftrightarrow \min_{\mb w} \norm{\mb w} \quad &  \st \;  y_i \mb w^\T \mb x_i \ge 1 \; \forall\; i \in [N],  \label{eq:hard_binary_svm_cvx}
\end{align}
where \cref{eq:hard_binary_svm_cvx} is our textbook hard-margin SVM (except for the squared norm often used in the objective). A problem with \cref{eq:hard_binary_svm_cvx} is that the constraint set is infeasible for inseparable training data. To fix this issue, we can allow slight violations in the constraint and penalize these violations in the objective of \cref{eq:hard_binary_svm_cvx}, arriving at 
\begin{align}
    \min_{\mb w} \norm{\mb w}^2 + C\sum_{i \in [N]} \xi_i \quad &  \st \;  y_i \mb w^\T \mb x_i \ge 1 - \xi_i, \xi_i \ge 0 \; \forall\; i \in [N], 
\end{align}
which is our textbook soft-margin SVM.

Now for multiclass classification, let us assume the data space: $\mc X \times \mc Y = \R^d \times \set{1, \dots, K}$ with $K \ge 3$. The classifier takes the form $f(\mb x) = \mb W^\T \mb x$, where $\mb W \in \R^{d \times K}$. We note that from binary SVM, people create the notion of \emph{confidence margin}:
\begin{align}
\mathrm{ConfMargin}(\mb x_i, \mb w) \doteq y_i \mb w^\T \mb x_i, 
\end{align}
which for the binary case is simply the signed geometric margin rescaled by $\norm{\mb w}$. The standard multiclass decision rule is\footnote{The decision rule for the binary case is $\argmax_{y \in \set{+1, -1}} \; y\mb w^\T \mb x$. Therefore, we do not need to worry about the $\mb w$'s scaling. }
\begin{align} 
\label{eq:multiclass_standard_dr}
\argmax_{j \in [K]}  \mb w_j^\T \mb x, 
\end{align}
where $\mb w_j$ is the $j$-th column of $\mb W$. To correctly classify all points, we need 
\begin{align}
   \forall\; i \in [N], \,  y_i = \argmax_{j \in [K]}  \mb w_j^\T \mb x  \Longleftrightarrow \forall\; i \in [N],\; \mb w_{y_i}^\T \mb x_i > \max_{y \in \mc Y \setminus \set{y_i}} \mb w_{y}^\T \mb x_i. 
\end{align}
This motivates the multiclass hard-margin SVM, separability assumed: 
\begin{align}
    \min_{\mb W}  \sum_{j \in [K]} \norm{\mb w_j}^2  \quad \st \; \mb w_{y_i}^\T \mb x_i - \max_{y \in \mc Y \setminus \set{y_i}} \mb w_{y}^\T \mb x_i \ge 1, \; \forall\;  i \in [N], 
\end{align}
where terms $w_{y_i}^\T \mb x_i - \max_{y \in \mc Y \setminus \set{y_i}} \mb w_{y}^\T \mb x_i$ can be viewed as \emph{multiclass confidence margins}, natural generalizations of confidence margins for the binary case. The corresponding soft-margin version is 
\begin{align}
    \min_{\mb W} \;  \sum_{j \in [K]} \norm{\mb w_j}^2 + C \sum_{i \in [N]} \xi_i \; 
    \st \; \mb w_{y_i}^\T \mb x_i - \max_{y \in \mc Y \setminus \set{y_i}} \mb w_{y}^\T \mb x_i \ge 1 - \xi_i, \xi_i \ge 0 \; \forall\;  i \in [N]. 
\end{align}
Both hard- and soft-margin versions are convex and thus more convenient for numerical optimization. 

On the other hand, if we strictly follow the geometric margin interpretation, it seems more natural to formulate multiclass SVM as follows. Consider the decision rule:
\begin{align} 
\label{eq:multiclass_geo_dr}
\argmax_{j \in [K]}  \frac{\mb w_j^\T \mb x}{\norm{\mb w_j}}, 
\end{align}
which would classify all points correctly provided that there exists a $\mb W \in \R^{d \times K}$ satisfying 
\begin{align}
\forall\; i \in [N], \; \frac{\mb w_{y_i}^\T \mb x_i}{\norm{\mb w_{y_i}}} > \max_{y \in \mc Y \setminus \set{y_i}}\frac{\mb w_{y}^\T \mb x_i}{\norm{\mb w_{y}}}. 
\end{align}
This motivates an optimization problem on the worst \emph{geometric margins}: 
\begin{align}  
\label{eq:multiclass_rel_geo_margin_v1}
    \max_{\mb W} \min_{i \in [N]} \paren{\frac{\mb w_{y_i}^\T \mb x_i}{\norm{\mb w_{y_i}}} - \max_{y \in \mc Y \setminus \set{y_i}}\frac{\mb w_{y}^\T \mb x_i}{\norm{\mb w_{y}}}} \text{.}
\end{align}
However, this problem is non-convex and thus not popularly adopted. 

\section{Asymptotic behaviors of $SR_{\text{max}}$, $SR_{\text{doctor}}$, and $SR_{\text{ent}}$}
\label{App: sr-based scores limits}
Recall from mathematical analysis that two functions $f(x)$ and $g(x)$ are \emph{asymptotically equivalent} as $x \to \infty$, written as $f(x) \sim g(x)$ as $x \to \infty$, if and only if $f(x) = g(x) (1+ o(1)) \; \text{as} \; x \to \infty$, where $o(\cdot)$ is the standard small-o notation. Note that $f(x) \sim g(x) \Longleftrightarrow g(x) \sim f(x)$. 
\begin{lemma}
    Consider the raw logits $\mb z$, and without loss of generality assume that they are ordered in descending order without any ties, i.e., $z^{(1)} > z^{(2)} > \cdots $. We have that as $\lambda \to \infty$, 
    \begin{align*}
        SR_{\mathrm{max}} (\lambda\mb z) \sim \exp(- e^{\lambda (z^{(2)}- z^{(1)})}), \quad  SR_{\mathrm{doctor}} (\lambda\mb z) \sim 1- \exp\paren{2 e^{\lambda (z^{(2)} - z^{(1)} )}},  \quad SR_{\mathrm{ent}}(\lambda\mb z) \sim - e^{\lambda(z^{(2)} - z^{(1)})}.   
    \end{align*}
    Moreover, all of the asymptotic functions are monotonically increasing with respect to $z^{(1)} - z^{(2)}$. 
\end{lemma}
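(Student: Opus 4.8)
The plan is to reduce all three scores to functions of the scaled \emph{logit gaps}, isolate the single dominant gap as $\lambda\to\infty$, extract the leading-order behaviour by elementary Taylor expansion, and finally read off monotonicity by a change of variables. Write $\mb z$ in sorted order $z^{(1)}>z^{(2)}>\cdots>z^{(K)}$ and set $\delta_j\triangleq z^{(j)}-z^{(1)}$, so $\delta_1=0$, $\delta_j<0$ for $j\ge 2$, and---where the no-ties hypothesis enters---$\delta_j-\delta_2=z^{(j)}-z^{(2)}<0$ for $j\ge 3$. Dividing the softmax through by $e^{\lambda z^{(1)}}$, put $S_\lambda\triangleq\sum_{j\ge2}e^{\lambda\delta_j}$ and $T_\lambda\triangleq\sum_{j\ge2}e^{2\lambda\delta_j}$. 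The first step is a \emph{dominant-gap principle}: since $\delta_j<\delta_2<0$ for $j\ge 3$, $S_\lambda=e^{\lambda\delta_2}\paren{1+\sum_{j\ge3}e^{\lambda(\delta_j-\delta_2)}}=e^{\lambda\delta_2}(1+o(1))$ and likewise $T_\lambda=e^{2\lambda\delta_2}(1+o(1))$; in particular $S_\lambda,T_\lambda\to 0$ and $T_\lambda=o(S_\lambda)$.

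With this in hand, $SR_{\mathrm{max}}$ and $SR_{\mathrm{doctor}}$ are quick. Since $SR_{\mathrm{max}}(\lambda\mb z)=1/(1+S_\lambda)$, we get $1-SR_{\mathrm{max}}(\lambda\mb z)=S_\lambda/(1+S_\lambda)=e^{\lambda\delta_2}(1+o(1))$, which agrees to leading order with $1-\exp(-e^{\lambda\delta_2})=e^{\lambda\delta_2}(1+o(1))$; dividing the scores themselves then gives $SR_{\mathrm{max}}(\lambda\mb z)\sim\exp(-e^{\lambda\delta_2})$. For DOCTOR, a one-line computation gives $\norm{\sigma(\lambda\mb z)}_2^2=(1+T_\lambda)/(1+S_\lambda)^2$, so $SR_{\mathrm{doctor}}(\lambda\mb z)=1-(1+S_\lambda)^2/(1+T_\lambda)$; expanding and discarding the $o(S_\lambda)$ contributions $S_\lambda^2$ and $T_\lambda$ leaves $(1+S_\lambda)^2/(1+T_\lambda)=1+2S_\lambda+o(S_\lambda)$, hence $SR_{\mathrm{doctor}}(\lambda\mb z)=-2e^{\lambda\delta_2}(1+o(1))$, which matches $1-\exp(2e^{\lambda\delta_2})=-2e^{\lambda\delta_2}(1+o(1))$.

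The entropy score is the delicate one. Set $p_i\triangleq e^{\lambda z^{(i)}}/Z$ with $Z\triangleq\sum_k e^{\lambda z^{(k)}}$, so $SR_{\mathrm{ent}}(\lambda\mb z)=\sum_i p_i\log p_i$ by permutation invariance; using $\log p_i=\lambda z^{(i)}-\log Z$, $\sum_i p_i=1$, and $\log Z=\lambda z^{(1)}+\log(1+S_\lambda)$, the sum collapses to
\[
SR_{\mathrm{ent}}(\lambda\mb z)=\lambda\sum_i p_i z^{(i)}-\log Z=\frac{\lambda\sum_{j\ge2}\delta_j e^{\lambda\delta_j}}{1+S_\lambda}-\log(1+S_\lambda).
\]
Here the argument is not routine, and a careless version goes wrong: the tempting move is to keep only $p_1\log p_1\approx -S_\lambda\approx -e^{\lambda\delta_2}$, but the $j\ge 2$ terms contribute $\lambda\sum_{j\ge2}\delta_j e^{\lambda\delta_j}/(1+S_\lambda)=\lambda\delta_2 e^{\lambda\delta_2}(1+o(1))$, whose magnitude beats $e^{\lambda\delta_2}$ by the \emph{unbounded} factor $\lambda\abs{\delta_2}$. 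An honest accounting therefore gives $SR_{\mathrm{ent}}(\lambda\mb z)=\lambda\delta_2 e^{\lambda\delta_2}(1+o(1))$; to land on exactly $-e^{\lambda\delta_2}$ one must absorb that polynomial-in-$\lambda$ prefactor or interpret $\sim$ as equivalence up to slowly varying factors. What the downstream RC-curve statement actually uses is only that, for all large $\lambda$, $SR_{\mathrm{ent}}(\lambda\mb z)$ is increasing in $z^{(1)}-z^{(2)}$, which follows from $\lambda\delta_2 e^{\lambda\delta_2}=-\lambda t e^{-\lambda t}$ with $t\triangleq z^{(1)}-z^{(2)}>0$, whose $t$-derivative $e^{-\lambda t}(\lambda t-1)$ is positive once $\lambda t>1$.

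Monotonicity of the three stated asymptotic functions is then immediate: writing $t\triangleq z^{(1)}-z^{(2)}>0$, the quantity $e^{\lambda(z^{(2)}-z^{(1)})}=e^{-\lambda t}$ is strictly decreasing in $t$, so $-e^{-\lambda t}$ increases and hence $\exp(-e^{-\lambda t})$ increases; $2e^{-\lambda t}$ decreases, so $1-\exp(2e^{-\lambda t})$ increases; and $-e^{-\lambda t}$ increases---all by composition of monotone maps. The main obstacle is precisely the entropy case above: unlike $SR_{\mathrm{max}}$ and $SR_{\mathrm{doctor}}$, whose limits are governed entirely by the single exponential $e^{\lambda\delta_2}$, the negative entropy receives a non-negligible---indeed dominant---contribution from the non-maximal logits, so the bookkeeping of which terms survive must be done with care; everything else is a routine expansion about $S_\lambda=T_\lambda=0$.
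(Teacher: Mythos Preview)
Your argument for $SR_{\mathrm{max}}$ and $SR_{\mathrm{doctor}}$ is correct and, while organized differently from the paper, lands in the same place. The paper takes logarithms first---e.g., $\log SR_{\mathrm{max}}(\lambda\mb z)=-\log\sum_j e^{\lambda\delta_j}\sim -e^{\lambda\delta_2}$, then exponentiates---whereas you expand $1/(1+S_\lambda)$ and $(1+S_\lambda)^2/(1+T_\lambda)$ directly about $S_\lambda=T_\lambda=0$. Your route is arguably cleaner for these two scores, since both limits are finite and nonzero (or, for $SR_{\mathrm{max}}$, equal to $1$), so there is no need to pass through logs; the paper's log-then-exponentiate strategy is equally valid here because on the log scale the error is genuinely $o(1)$, which does survive exponentiation.

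On $SR_{\mathrm{ent}}$ you are in fact more careful than the paper. The paper reaches the same intermediate point you do, namely $-SR_{\mathrm{ent}}(\lambda\mb z)\sim -\lambda\delta_2\,e^{\lambda\delta_2}$, but then argues on the log scale that $\log(-SR_{\mathrm{ent}})\sim\lambda\delta_2$ and exponentiates to conclude $SR_{\mathrm{ent}}\sim -e^{\lambda\delta_2}$. As you correctly flag, under the paper's own definition of $\sim$ (ratio tending to $1$), that last step is not justified: $\log f\sim\log g$ with $\log g\to-\infty$ does \emph{not} imply $f\sim g$, and indeed $-SR_{\mathrm{ent}}(\lambda\mb z)/e^{\lambda\delta_2}\to\infty$ like $\lambda\abs{\delta_2}$. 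Your proposed resolution---either reinterpret $\sim$ up to slowly varying factors, or observe that the only downstream use is monotonicity in $z^{(1)}-z^{(2)}$, which survives because $t\mapsto -\lambda t e^{-\lambda t}$ is increasing once $\lambda t>1$---is the right fix and improves on the paper's argument.
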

\begin{proof}
First, for $SR_{\text{max}}$, we have 
\begin{align}
    \log SR_{\text{max}} (\lambda \mb z) 
    = \log \frac{e^{\lambda z^{(1)}}}{\sum_i e^{\lambda z^{(i)}}} 
    =  -\log \sum_i e^{\lambda (z^{(i)}- z^{(1)})} 
    \sim  -\log (1 +  e^{\lambda (z^{(2)}- z^{(1)})}) 
\end{align}
as $\lambda \to \infty$, because $\sum_{i \ge 3} e^{\lambda (z^{(i)}- z^{(1)})}/(1 +  e^{\lambda (z^{(2)}- z^{(1)})}) \to 0$ as $\lambda \to \infty$. Moreover, as $\lambda \to \infty$, 
\begin{align}
    e^{\lambda (z^{(2)}- z^{(1)})} \to 0 
    \Longrightarrow -\log (1 +  e^{\lambda (z^{(2)}- z^{(1)})}) \sim  - e^{\lambda (z^{(2)}- z^{(1)})},  
\end{align} 
as $\log(1+x) \sim x$ when $x \to 0$. So we conclude that 
\begin{align}
    SR_{\text{max}} (\lambda\mb z) \sim \exp\paren{- e^{\lambda (z^{(2)}- z^{(1)})}}  \; \text{as} \; \lambda \to \infty. 
\end{align}

Now consider $SR_{\text{doctor}}$. Applying a similar argument as above, we have 
\begin{align}
   \log \norm{\sigma(\lambda\mb z)}_2^2 
   & = \log \sum_{i} \frac{e^{2\lambda z^{(i)}}}{(\sum_j e^{\lambda z^{(j)}})^2}
   =  \log \sum_{i} \frac{e^{2\lambda (z^{(i)} - z^{(1)})}}{(\sum_j e^{\lambda (z^{(j)} - z^{(1)} )})^2} \nonumber \\
   & = -2 \log \sum_j e^{\lambda (z^{(j)} - z^{(1)} )} + \log \sum_i e^{2\lambda (z^{(i)} - z^{(1)})}  \\ 
   & \sim -2 \log \paren{1+ e^{\lambda (z^{(2)} - z^{(1)} )}} + \log \paren{1+ e^{2\lambda (z^{(2)} - z^{(1)})} } \\
   & \sim -2 e^{\lambda (z^{(2)} - z^{(1)} )} +  e^{2\lambda (z^{(2)} - z^{(1)})} \\
   \label{eq: last SR doctor}
   & \sim -2 e^{\lambda (z^{(2)} - z^{(1)} )}
\end{align}
as $\lambda \to \infty$, where \cref{eq: last SR doctor} holds as $e^{2\lambda (z^{(2)} - z^{(1)})}$ is lower order than $-2 e^{\lambda (z^{(2)} - z^{(1)} )}$ when $z^{(2)} - z^{(1)} < 0$ so that $e^{\lambda (z^{(2)} - z^{(1)} )} < 1$. Therefore, as $\lambda \to \infty$, 
\begin{align}
    SR_{\text{doctor}} (\lambda\mb z) = 1 - \norm{\sigma(\lambda\mb z)}_2^{-2}   \sim 1- \exp\paren{2 e^{\lambda (z^{(2)} - z^{(1)} )}}. 
\end{align}
Finally, for $SR_{\text{ent}}$, we have that when $\lambda \to \infty$, 
\begin{align}
    SR_{\text{ent}}(\lambda\mb z) 
    & = \sum_i \frac{e^{\lambda z^{(i)}}}{\sum_{j} e^{\lambda z^{(j)}}} \log \frac{e^{\lambda z^{(i)}}}{\sum_{j} e^{\lambda z^{(j)}}} 
    = \sum_i \frac{e^{\lambda (z^{(i)} -z^{(1)} )}}{\sum_{j} e^{\lambda (z^{(j)} - z^{(1)})}} \log \frac{e^{\lambda (z^{(i)} - z^{(1)})}}{\sum_{j} e^{\lambda (z^{(j)} - z^{(1)})}} \\
    & = \frac{1}{\sum_{j} e^{\lambda (z^{(j)} - z^{(1)})}} \sum_i e^{\lambda (z^{(i)} -z^{(1)} )} \paren{\lambda (z^{(i)} - z^{(1)}) - \log \sum_{j} e^{\lambda (z^{(j)} - z^{(1)})}} \\
    & \sim \frac{1}{\sum_{j} e^{\lambda (z^{(j)} - z^{(1)})}} \sum_i \brac{e^{\lambda (z^{(i)} -z^{(1)} )} \lambda (z^{(i)} - z^{(1)})}  \\
    & \hspace{10em} (\text{as $\log \sum_{j} e^{\lambda (z^{(j)} - z^{(1)})}/(\lambda (z^{(i)} - z^{(1)})) \in o(1)$ when $\lambda \to \infty$}) \nonumber \\
    \label{eq: ent_last}
    & \sim \frac{1}{\sum_{j} e^{\lambda (z^{(j)} - z^{(1)})}} \brac{e^{\lambda (z^{(2)} -z^{(1)} )} \lambda (z^{(2)} - z^{(1)})}, 
\end{align} 
where \cref{eq: ent_last} holds because $\sum_{i \ge 3} e^{\lambda (z^{(i)} -z^{(1)} )} \lambda (z^{(i)} - z^{(1)})/(e^{\lambda (z^{(2)} -z^{(1)} )} \lambda (z^{(2)} - z^{(1)})) = \sum_{i \ge 3} e^{\lambda (z^{(i)} -z^{(2)} )} \paren{z^{(i)} - z^{(1)}}/\paren{z^{(2)} - z^{(1)}} \in o(1)$ as $\lambda \to \infty$. Continuing the above argument, we further have that as $\lambda \to \infty$, 
\begin{align}
    \log(- SR_{\text{ent}}(\lambda\mb z)) 
    & \sim - \log  \sum_{j} e^{\lambda (z^{(j)} - z^{(1)})} + \lambda(z^{(2)} - z^{(1)}) + \log (-\lambda(z^{(2)} - z^{(1)})).
    \label{eq: something}
\end{align}
\hy{Let's write $x \triangleq -(z^{(2)} - z^{(1)})$. The last two terms in \cref{eq: something} can be re-written as $-x + \log (x)$. Since $\lim_{x \to \infty}\frac{\log (x)}{x} = 0$, we thus have $-x + \log (x) = -x (1 + o(1))$ as $x \to \infty$, and hense $-x + \log (x) \sim -x$ by the definition of the asymptotic equivalence. Therefore, we have:}
\begin{align}
    \log(- SR_{\text{ent}}(\lambda\mb z))
    & \sim - \log  (1 +  e^{\lambda (z^{(2)} - z^{(1)})}) + \lambda(z^{(2)} - z^{(1)})\\
    & \sim - e^{\lambda (z^{(2)} - z^{(1)})} + \lambda(z^{(2)} - z^{(1)}) \sim \lambda(z^{(2)} - z^{(1)}).  
\end{align}

So we conclude that 
\begin{align}
    SR_{\text{ent}}(\lambda\mb z) \sim - \exp\paren{\lambda(z^{(2)} - z^{(1)})} \quad \; \text{as} \; \lambda \to \infty,  
\end{align}
completing the proof.  
\end{proof}

\section{Evaluation metrics for OOD detection vs. evaluation metrics for generalized SC}
\label{App: eva OOD}
\begin{wraptable}{r}{0.4\textwidth}
    \vspace{-1em}
    \centering 
    \caption{Evaluation of $s_{1}$ and $s_{2}$ using popular OOD metrics. The better numbers are highlighted in bold.}
    \label{tab: teaser OOD evaluation}
    \begin{tabular}{lcc}
     & & \\
    \multicolumn{1}{l}{OOD metric} & \multicolumn{1}{c}{$s_1$} & \multicolumn{1}{c}{$s_2$} \\
    \toprule
    AUROC ($\uparrow$) & 0.765 & \textbf{0.944}\\
    AUPR ($\uparrow$) & 0.987 & \textbf{0.997}\\
    FPR@TPR=0.95 ($\downarrow$) & 0.816 & \textbf{0.279}\\
    \bottomrule
    \end{tabular}
    \vspace{-1em}
\end{wraptable}
\hy{The commonly used evaluation metrics for OOD detection do not reflect the classification performance~\citep{franc2023reject}. Here we provide a quantitative supporting example, in comparison with the RC curve for generalized SC.}

OOD (mostly label-shift) detection as formulated in \cref{eq: def of ood detection} can be viewed as a binary classification problem: selected and rejected samples form the two classes. So pioneer work on OOD detection, such as \citet{hendrycks2016baseline}, proposes to evaluate OOD detection in a manner similar to that of binary classification, e.g., using the Area Under the Receiver Operating Characteristic (AUROC) curve~\citep{davis2006relationship} and Area Under the Precision-Recall curve (AUPR)~\citep{saito2015precision} to measure the separability of In-D and OOD samples.\footnote{A single-point metric, False Positive Rate (FPR) at $0.95$ True Positive Rate (TPR), is also popularly used as a companion~\citep{liang2017enhancing, wang2022vim, liu2020energy, djurisic2022extremely, sun2022out, yang2022openood}.} 
However, two important aspects are missing in OOD detection, and hence also its performance evaluation, if we are to focus on the performance on the accepted samples:
\begin{enumerate}[leftmargin=1em,nosep]
    \item Pretrained classifiers do not always make wrong predictions on label-shifted samples, and hence these OOD samples should not be blindly rejected; 
    \item In-D samples that might have been correctly classified can be rejected due to poor separation of In-D and OOD samples, leading to worse classification performance on the selected part.
\end{enumerate} 

To demonstrate our points quantitatively, we take the pretrained model \textbf{EVA}\footnote{See \cref{App: DNN classifier} for model card information. This model is also used in the experiments of \cref{Sec: Experiments}.} from \texttt{timm}~\citep{rw2019timm} that achieves $> 88\%$ top 1 accuracy on the ImageNet validation set. We then mix \texttt{ImageNet} validation set (In-D samples) with \texttt{ImageNet-O} (OOD samples, label shifted)~\citep{hendrycks2018benchmarking}, and evaluate two score functions $s_1$ and $s_2$\footnote{$s_1$ is our proposed $RL_\text{conf-M}$ and $s_2$ is ViM.} using both generalized SC formulation (via RC curves) and OOD detection (via AUROC and AUPR). 

%
\begin{figure}[!htbp]
\centering 
\vspace{-1em}
\resizebox{1\linewidth}{!}{%
\begin{tabular}{c c c}
\centering
\includegraphics[width=0.3\textwidth]{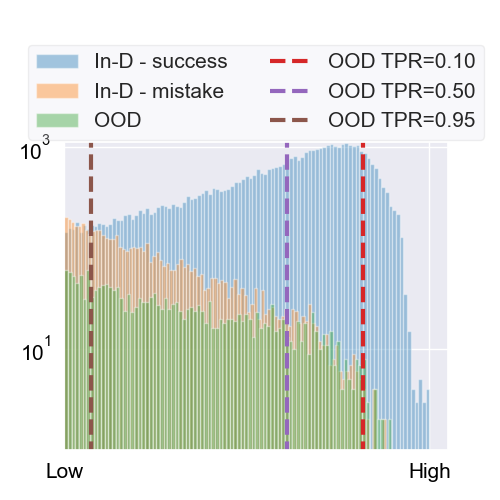}%
&\includegraphics[width=0.3\textwidth]{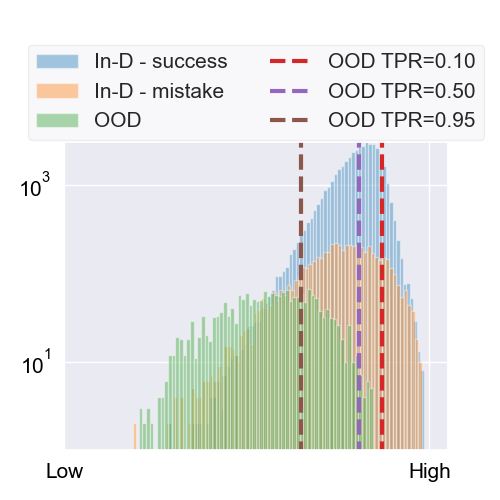}%
&\includegraphics[width=0.3\textwidth]{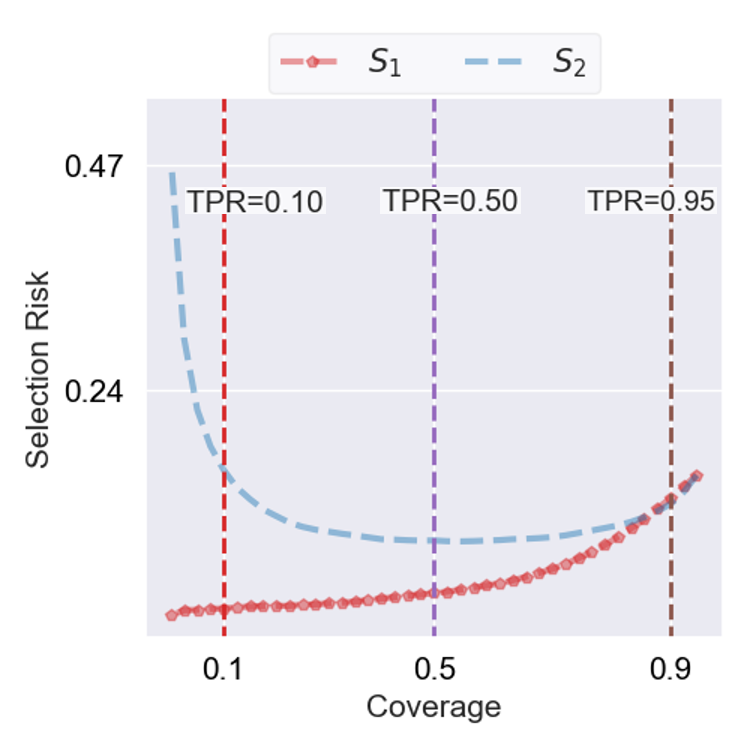}%

\\
\small{\textbf{(a)} $s_1$ score distributions}
&\small{\textbf{(b)} $s_2$ score distributions}
&\small{\textbf{(c)} RC curves}
\end{tabular}}%
\caption{Score distributions of $s_1$ and $s_2$ (a)-(b) and their RC curves (c). In (a) and (b), In-D samples that are \emph{correctly} classified by \textbf{EVA} are shown in blue, while In-D samples that are \emph{incorrectly} classified are shown in orange; OOD samples (label-shifted) are shown in green. The vertical dashed lines in (a)-(c) corresponds to different True-Positive-Rate cutoffs in the AUROC metric in OOD detection.}%
\label{Fig: teaser OOD and SC}
\vspace{-1em}
\end{figure}

According to \cref{tab: teaser OOD evaluation}, $s_2$ is considered superior to $s_1$ by all metrics for OOD detection. Correspondingly, from \cref{Fig: teaser OOD and SC}(a) and (b), we observe that the scores of the label-shifted samples (green) and those of the In-D samples (blue and orange) are more separated by $s_2$ than by $s_1$. However, we can also quickly notice one issue: In-D samples are not completely separated from OOD samples---a threshold intended to reject label-shifted samples will inevitably reject a portion of In-D samples at the same time, even though a large portion of In-D samples have been correctly classified (blue); In-D samples that can be correctly classified (blue) are less separated from those misclassified ones (orange) by $s_2$ than by $s_1$. This problem cannot be revealed by the OOD metrics in \cref{tab: teaser OOD evaluation}, but is captured by the RC curves in \cref{Fig: teaser OOD and SC}(c) where the selection risk of $s_2$ (blue) increases as more OOD samples are rejected (TPR from $0.95$ to $0.1$ as indicated by the vertical dashed lines). In contrast, the more samples rejected by $s_1$ (smaller coverage), the lower the selection risk, implying that $s_1$ serves SC better.

\section{Rejection patterns of different score functions}
\label{App: SVM example}
We plot in \cref{Fig: App - Rejection Heatmaps} the heatmap of the score values for each score function. During SC, samples located in the darker areas (with low score values) will be rejected before those located in the brighter areas (with high score values).
\begin{figure}[!htbp]
\centering
\vspace{-1em}
\resizebox{1\columnwidth}{!}{%
\begin{tabular}{c c c c c}
\centering
\includegraphics[width=0.19\textwidth]{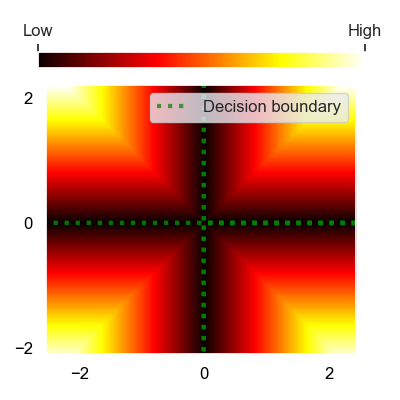}
&\includegraphics[width=0.19\textwidth]{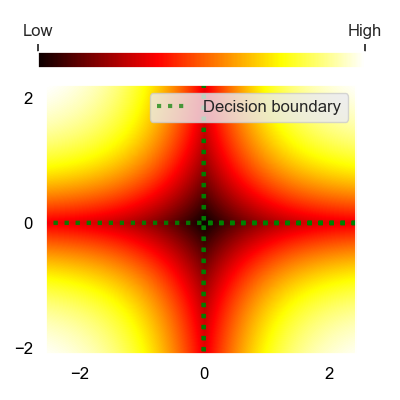}
&\includegraphics[width=0.19\textwidth]{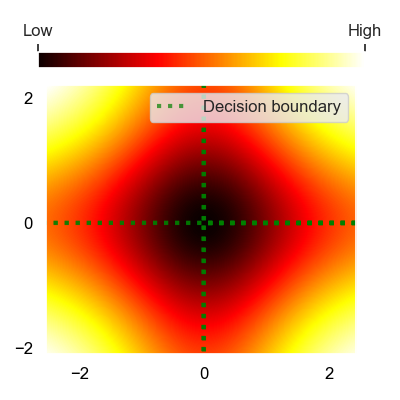}%
&\includegraphics[width=0.19\textwidth]{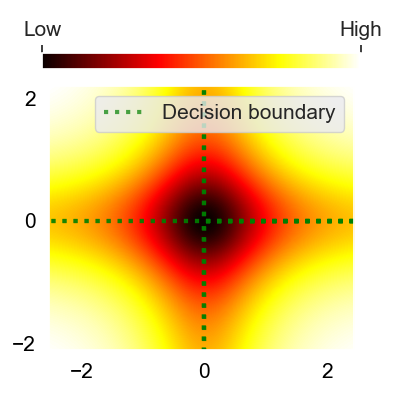}%
&\includegraphics[width=0.19\textwidth]{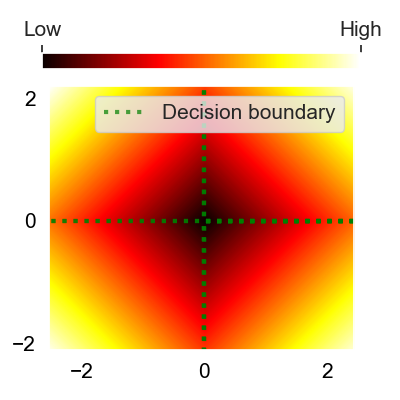}%
\\
\small{\textbf{(a)} $RL_{\text{geo-M}}$}%
&\small{\textbf{(a)} $SR_{\text{max}}$}%
&\small{\textbf{(b)} $SR_{\text{ent}}$}%
&\small{\textbf{(c)} $SR_{\text{doctor}}$}%
&\small{\textbf{(d)} $RL_{\text{max}}$}%
\end{tabular}}%
\caption{Heatmaps of rejection patterns (distribution of scores). Note that because we rescale the scores for good visualization, the colors are not cross-comparable between different score functions.}
\label{Fig: App - Rejection Heatmaps}
\vspace{-0.5em}
\end{figure}%

\section{Timm model cards}
\label{App: DNN classifier}
\begin{table}[!htbp]
\caption{Names of model cards in library \texttt{timm} to retrieve the models for  \texttt{ImageNet}}
\centering
\resizebox{0.9\linewidth}{!}{%
\begin{tabular}{c c c c c}
\label{APP table: timm model cards}
\textbf{Dataset}
&{}
&\textbf{Model name}
&\textbf{Model card name}
&\textbf{Top-1 Acc. ($\%$
)}
\\
\toprule
\toprule
{}
&{}
&{EVA (ViT)}
&{eva\_giant\_patch14\_224.clip\_ft\_in1k}
&{88.76}
\\
\cline{3-5}
\vspace{-0.8em}
\\
\texttt{ImageNet}
&{}
&{ConvNext}
&{convnextv2\_base.fcmae\_ft\_in22k\_in1k}
&{86.25}
\\
\cline{3-5}
\vspace{-0.8em}
\\
{}
&{}
&{VOLO}
&{volo\_d4\_224.sail\_in1k}
&{85.56}
\\
\cline{3-5}
\vspace{-0.8em}
\\

&{}
&{
ResNext}

&{seresnextaa101d\_32x8d.sw\_in12k\_ft\_in1k}
&{85.94}
\\

\bottomrule
\bottomrule
\end{tabular}}
\end{table}
\cref{APP table: timm model cards} shows the names of the model cards used to retrieve the pretrained models for \texttt{ImageNet} from the \texttt{timm} library. Our considerations for choosing these models are as follows: \textbf{(i)} the models should cover a wide range of recent and popular architectures, and \textbf{(ii)} they should achieve high top-$1$ accuracy to represent recent advances of image classification. 

\section{Training details for ScNet}
\label{App: ScNet details}
We use the unofficial \texttt{PyTorch} implementation\footnote{\url{https://github.com/gatheluck/pytorch-SelectiveNet}} of the original SelectiveNet \citep{geifman2019selectivenet} due to the out-of-date \texttt{Keras} environment of the original repository\footnote{\url{https://github.com/anonygit32/SelectiveNet}}. The \texttt{PyTorch} implementation follows the training method proposed in \citet{geifman2019selectivenet} and faithfully reproduces the results of \texttt{CIFAR-10} experiment reported in the original paper. We add the \texttt{ImageNet} experiment on top of the \texttt{PyTorch} code, as it is not included in the original code or the paper. \cref{Tab: ScNet Trainig detail} summarizes the key hyperparameters to produce the results reported in this paper.
\begin{table}[!htbp]
\centering
\caption{Key hyperparameters for the ScNet training used in this paper}
\resizebox{1\linewidth}{!}{%
\begin{tabular}{c c c c c c c c }
\small{\textbf{Dataset}}
&\small{\textbf{Model architecture}}
&\small{\textbf{Dropout prob.}}
&\small{\textbf{Target coverage}}
&\small{\textbf{Batch size}}
&\small{\textbf{Total epochs}}
&\small{\textbf{Lr (base)}}
&\small{\textbf{Scheduler}}
\\
\toprule
\toprule
\small{\texttt{CIFAR-10}} & \small{VGG} & \small{0.3} & \small{0.7} &\small{128} & \small{300} & \small{0.1} & \small{StepLR}\\
\midrule
\small{\texttt{ImaegNet-1k}} & \small{resnet34} & \small{N/A} & \small{0.7} & \small{768} & \small{250} & \small{0.1} & \small{CosineAnnealingLR}\\
\bottomrule
\bottomrule
\end{tabular}}
\label{Tab: ScNet Trainig detail}
\end{table}

\section{Additional \texttt{ImageNet} experiments}
\label{App: Extra experimental results}
We report in \cref{APPFig: ImageNet RC curves} the RC curves of different score functions on models \texttt{ConvNext}, \texttt{ResNext}, and \texttt{VOLO} for \texttt{ImageNet}, and summarize their AURC statistics in \cref{app-tab: Partial AURC ImageNet}. 

\begin{figure}[!tb]
\centering
\resizebox{1\columnwidth}{!}{%
\begin{tabular}{c c c c c}
\centering
&\multicolumn{4}{c}{\texttt{ImageNet} - CovNext}
\\
&\includegraphics[width=0.24\textwidth]{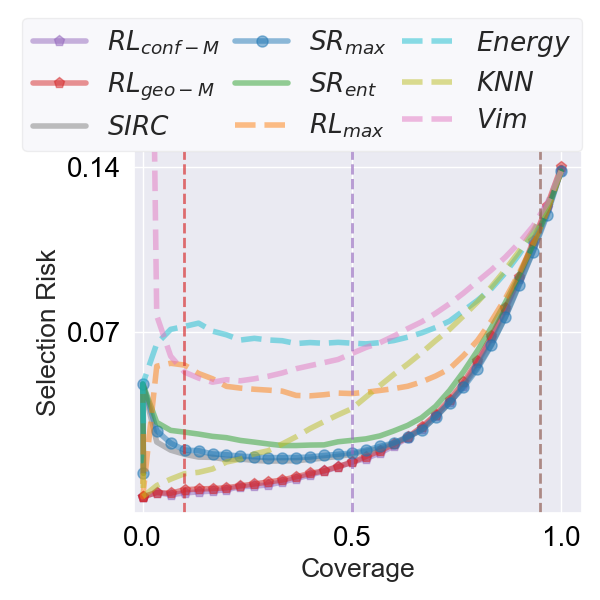}%
&\includegraphics[width=0.24\textwidth]{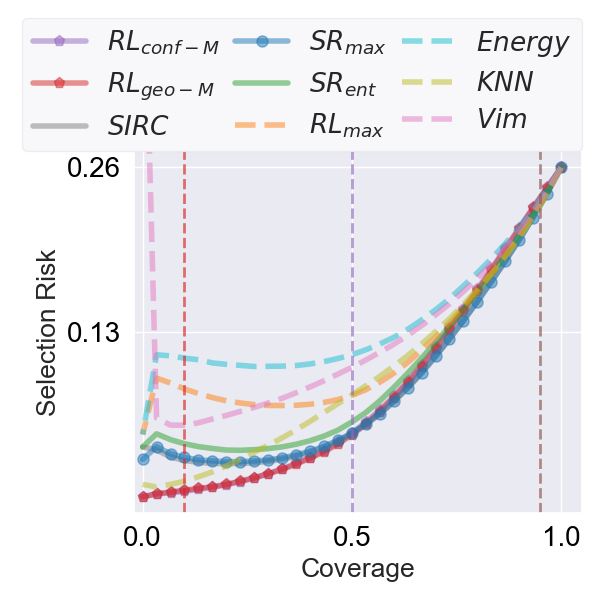}%
&\includegraphics[width=0.24\textwidth]{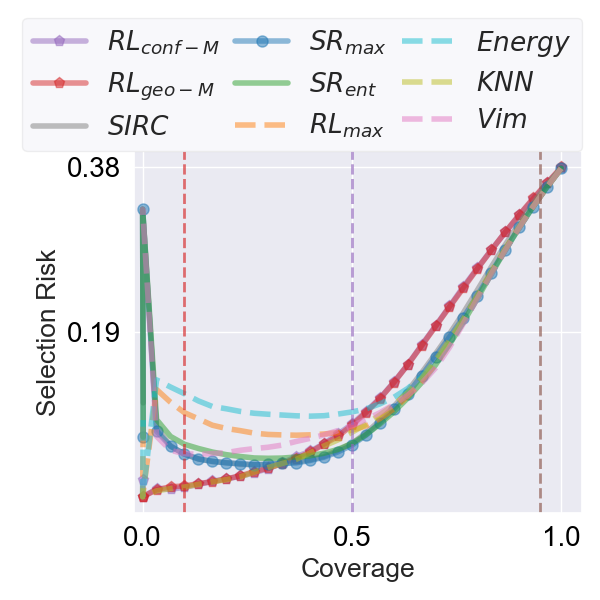}%
&\includegraphics[width=0.24\textwidth]{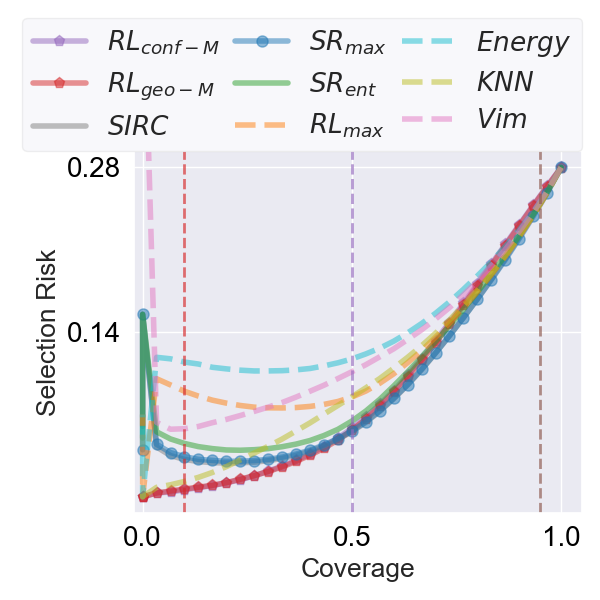}%
\\
&\multicolumn{4}{c}{\texttt{ImageNet} - ResNext}
\\
&\includegraphics[width=0.24\textwidth]{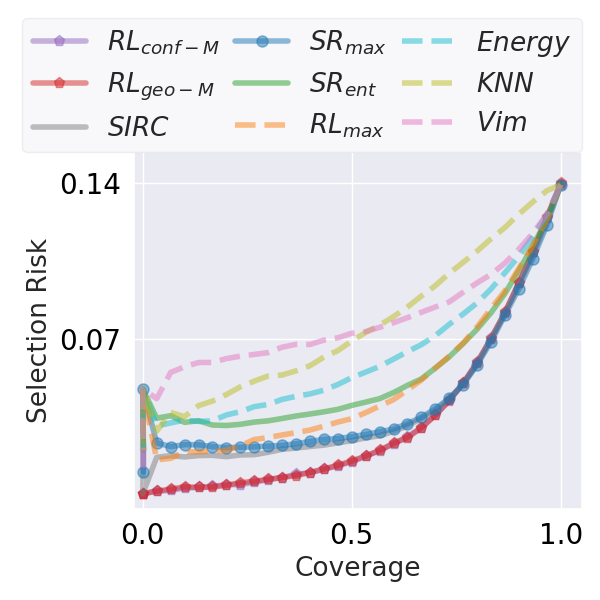}%
&\includegraphics[width=0.24\textwidth]{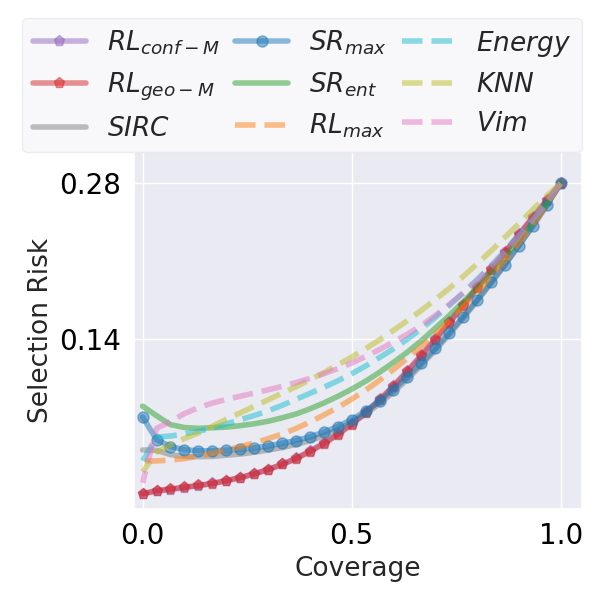}%
&\includegraphics[width=0.24\textwidth]{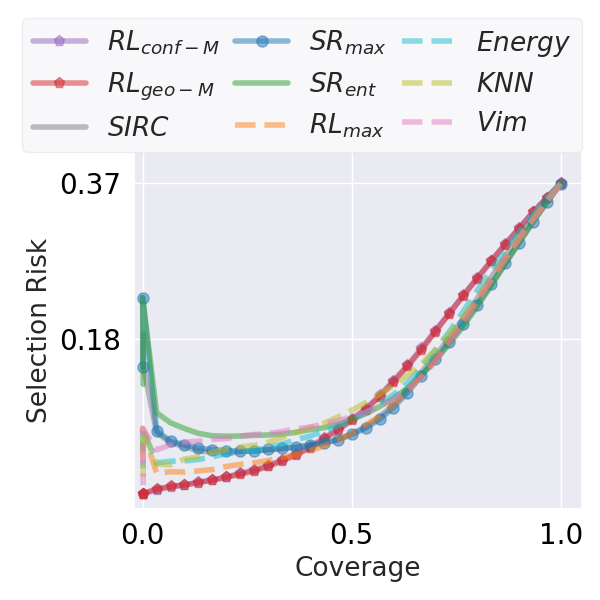}%
&\includegraphics[width=0.24\textwidth]{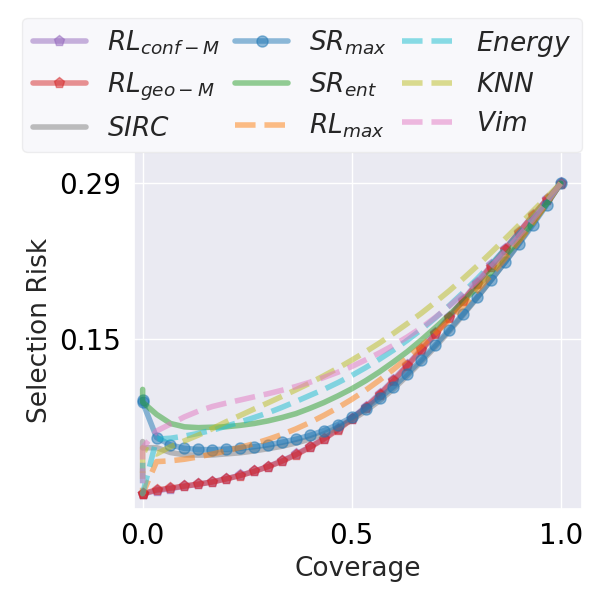}%
\\
&\multicolumn{4}{c}{\texttt{ImageNet} - VOLO}
\\
&\includegraphics[width=0.24\textwidth]{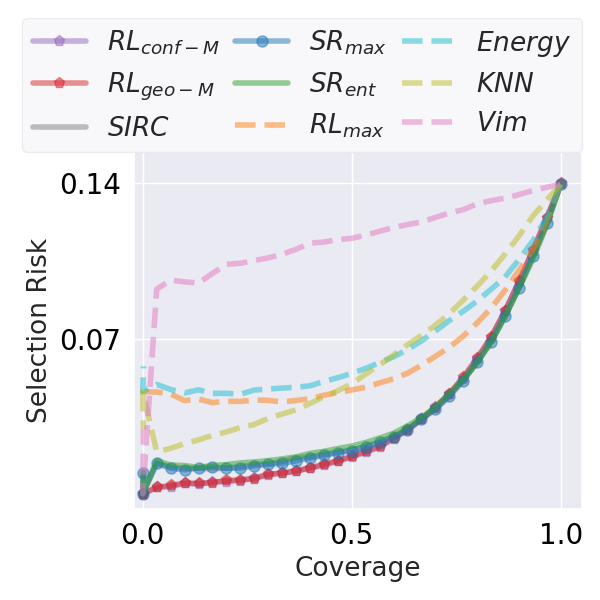}%
&\includegraphics[width=0.24\textwidth]{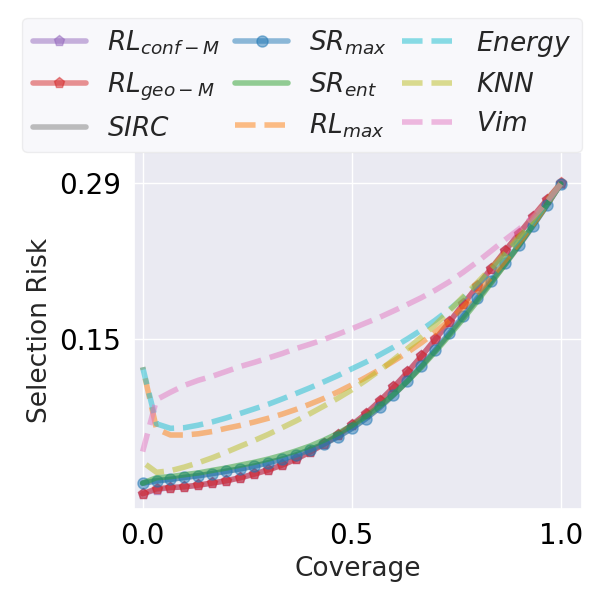}%
&\includegraphics[width=0.24\textwidth]{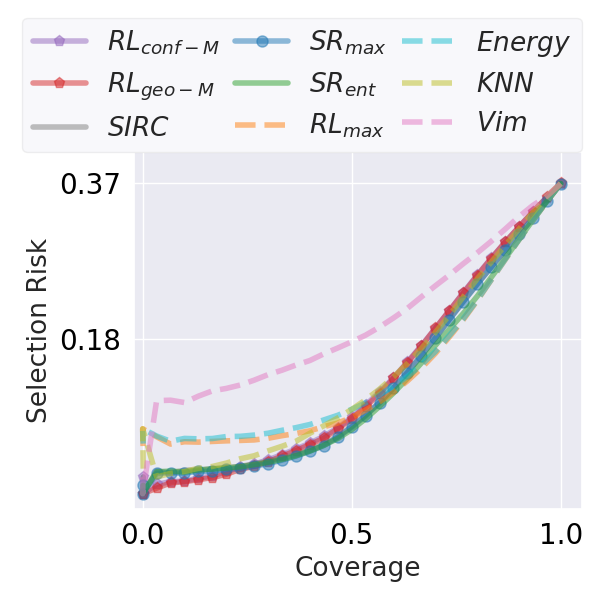}%
&\includegraphics[width=0.24\textwidth]{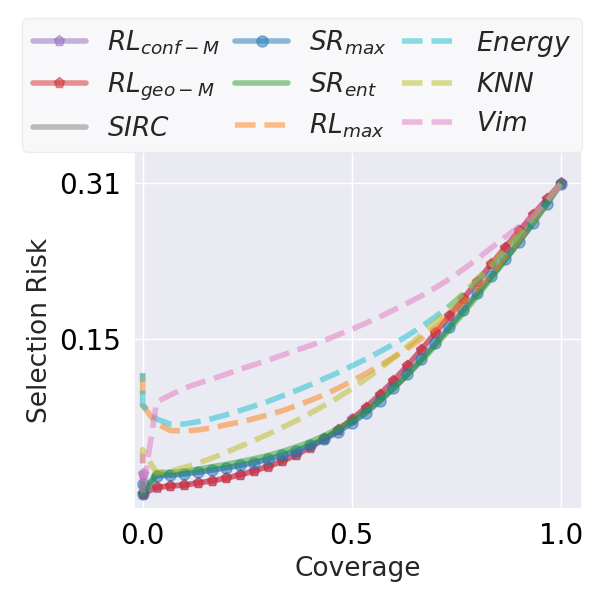}%
\\

&\small{In-D (ImageNet)}%
&\small{In-D + Shift (Cov)}%
&\small{In-D + Shift (Label)}%
&\small{In-D + Shift (both)}%
\end{tabular}}%
\caption{RC curves of different confidence-score functions on models \texttt{ConvNext}, \texttt{ResNext} and \texttt{VOLO} from \texttt{timm} for ImageNet. The four columns are RC curves evaluated using samples from In-D only, In-D and covariate-shifted only, In-D and label-shifted only, and all, respectively. We group the curves by whether they are originally proposed for SC (solid lines) or for OOD detection (dashed lines).}
\label{APPFig: ImageNet RC curves}
\vspace{-0.5em}
\end{figure}%
\begin{table*}[!tb]
\caption{Summary of AURC-$\alpha$ for \cref{APPFig: ImageNet RC curves}. The AURC numbers are \emph{on the $10^{-2}$ scale---the lower, the better}. The score functions proposed for SC are highlighted in gray, and the rest are originally for OOD detection. The best AURC numbers for each coverage level are highlighted in bold, and the $2^{nd}$ and $3^{rd}$ best scores are underlined.}
\label{app-tab: Partial AURC ImageNet}
\centering
\resizebox{1\linewidth}{!}{%
\begin{tabular}{l ccc c ccc c ccc c ccc}
\multicolumn{1}{c}{\small{\texttt{ImageNet} - ConvNext}} & \multicolumn{3}
{c}{\small{In-D}} & & \multicolumn{3}{c}{\small{In-D + Shift (Cov)}} & & \multicolumn{3}{c}{\small{In-D + Shift (Label)}} &
 & \multicolumn{3}{c}{\small{In-D + Shift (both)}}\\
\midrule
\multicolumn{1}{c}{\small{$\alpha$}} & 0.1 & 0.5 & 1 & & 0.1 & 0.5 & 1 & & 0.1 & 0.5 & 1 & & 0.1 & 0.5 & 1\\
\toprule
\rowcolor{Gray}
$RL_{\text{conf-M}}$ & \textbf{0.10} & \textbf{0.53} & \textbf{3.02} & & \textbf{0.26} & \underline{1.76} & \underline{8.20} & & \textbf{0.58} & \textbf{2.51} & 11.8 & & \textbf{0.34} & \underline{1.99} & \underline{8.88}\\
\rowcolor{Gray}
$RL_{\text{geo-M}}$ & \underline{0.15} & \underline{0.59} & \underline{3.10} & & \underline{0.31} & \textbf{1.75} & \textbf{8.14} & & \underline{0.75} & \underline{2.54} & 11.8 & & \underline{0.38} & \textbf{1.97} & \textbf{8.81}\\
\hdashline
\rowcolor{Gray}
SIRC & 1.96 & \underline{1.70} & \underline{3.59} & & 3.44 & \underline{3.23} & \underline{8.60} & & 5.94 & \underline{4.03} & \underline{11.5} & & 3.76 & \underline{3.46} & \underline{9.18}\\
\rowcolor{Gray}
$SR_{\text{max}}$ & 2.26 & 1.86 & 3.66 & & 3.73 & 3.40 & 8.70 & & 5.86 & 4.05 & \underline{11.4} & & 4.04 & 3.62 & 9.26\\

\rowcolor{Gray}
$SR_{\text{ent}}$ & 2.77 & 2.44 & 4.19 & & 4.78 & 4.33 & 9.54 & & 6.83 & 4.85 & 11.6 & & 5.13 & 4.56 & 10.1\\
\rowcolor{Gray}
$SR_{\text{doctor}}$ & 2.26 & 1.86 & 3.67 & & 3.73 & 3.41 & 8.74 & & 5.86 & 4.06 & \textbf{11.3} & &4.04 & 3.63 & 9.29\\

$RL_{\text{max}}$ & 5.43 & 4.77 & 5.81 & & 9.05 & 7.89 & 11.6 & & 10.5 & 7.73 & 13.2 & & 9.45 & 8.13 & 12.1\\
Energy & 6.66 & 6.70 & 7.54 & & 10.9 & 10.7 & 13.9 & & 11.9 & 9.78 & 14.6 & & 11.3 & 10.9 & 14.3\\
KNN & \underline{1.01} & 2.37 & 5.72 & & \underline{1.29} & 4.54 & 10.6 & & \underline{1.11} & 3.66 & 12.0 & & \underline{1.31} & 4.59 & 11.0\\
ViM & 15.1 & 9.84 & 9.49 & & 16.2 & 11.9 & 14.3 & & 14.1 & 9.57 & 14.5 & & 16.2 & 11.9 & 14.7\\
\toprule
\multicolumn{1}{c}{\small{\texttt{ImageNet} - ResNext}} & \\
\toprule
\rowcolor{Gray}
$RL_{\text{conf-M}}$ & \textbf{0.12} & \textbf{0.59} & \textbf{3.17} & & \textbf{0.29} & \underline{2.15} & \underline{9.38} & & \textbf{0.59} & \underline{3.22} & \underline{12.8} & & \textbf{0.38} & \underline{2.50} & \underline{10.2}\\
\rowcolor{Gray}
$RL_{\text{geo-M}}$ & \underline{0.17} & \underline{0.60} & \underline{3.18} & & \underline{0.34} & \textbf{2.14} & \textbf{9.33} & & \underline{0.65} & \textbf{3.16} & \underline{12.7} & & \underline{0.43} & \textbf{2.49} & \textbf{10.1}\\
\hdashline
\rowcolor{Gray}
SIRC & 1.71 & \underline{1.91} & \underline{3.94} & & 3.96 & \underline{4.18} & \underline{9.99} & & 7.77 & 5.88 & 13.1 & & \underline{4.47} & 4.57 & \underline{10.7}\\
\rowcolor{Gray}
$SR_{\text{max}}$ & 2.28 & 2.26 & 4.11 & & 4.88 & 4.69 & 10.3 & & 7.44 & 5.88 & 12.9 & & 5.36 & 5.06 & 11.0\\

\rowcolor{Gray}

$SR_{\text{ent}}$ & 3.38 & 3.42 & 5.37 & & 6.92 & 6.94 & 12.2 & & 9.46 & 7.70 & 13.9 & & 7.47 & 7.36 & 12.8\\
\rowcolor{Gray}
$SR_{\text{doctor}}$ & 2.29 & 2.28 & 4.17 & & 4.92 & 4.75 & 10.4 & & 7.47 & 5.92 & \underline{12.8} & & 5.39 & 5.12 & 11.1\\

$RL_{\text{max}}$ & \underline{1.57} & 2.34 & 4.79 & & \underline{2.98} & 4.82 & 10.9 & & \underline{2.37} & \underline{3.83} & \textbf{11.9} & & \underline{3.06} & 5.00 & 11.4\\
Energy & 3.08 & 3.90 & 6.17 & & 5.13 & 7.20 & 12.7 & & 3.68 & 5.34 & 13.2 & & 5.19 & 7.37 & 13.2\\
KNN & 3.23 & 4.84 & 7.61 & & 4.12 & 7.65 & 13.6 & & 3.40 & 5.85 & 13.5 & & 4.14 & 7.77 & 14.0\\
ViM & 4.68 & 6.13 & 7.79 & & 6.18 & 8.81 & 13.6 & & 5.09 & 6.82 & 13.6 & & 6.23 & 8.92 & 14.1\\
\toprule
\multicolumn{1}{c}{\small{\texttt{ImageNet} - VOLO}} & \\
\toprule
\rowcolor{Gray}
$RL_{\text{conf-M}}$ & \textbf{0.31} & \textbf{0.79} & \textbf{3.44} & & \textbf{0.46} & \underline{2.24} & 9.72 & & \underline{1.30} & \underline{3.79} & 13.3 & & \underline{0.68} & \underline{2.67} & \underline{10.6}\\
\rowcolor{Gray}
$RL_{\text{geo-M}}$ & \underline{0.37} & \underline{0.81} & \underline{3.46} & & \underline{0.50} & \textbf{2.23} & 9.73 & & \textbf{0.94} & \textbf{3.56} & 13.1 & & \textbf{0.66} & \textbf{2.64} & \underline{10.6}\\
\hdashline
\rowcolor{Gray}
SIRC & \underline{1.27} & 1.44 & 3.74 & & 1.35 & \underline{2.82} & \underline{9.56} & & 2.68 & 3.97 & \underline{12.9} & & 1.90 & 3.37 & \underline{
10.5}\\
\rowcolor{Gray}
$SR_{\text{max}}$ & 1.31 & \underline{1.42} & 3.72 & & \underline{1.33} & \underline{2.82} & \underline{9.59} & & 2.54 & \underline{3.78} & \underline{12.7} & & \underline{1.86} & \underline{3.36} & \underline{10.5}\\

\rowcolor{Gray}
$SR_{\text{ent}}$ & 1.47 & 1.59 & 3.83 & & 1.58 & 3.13 & 9.72 & & 2.71 & 3.87 & \textbf{12.4} & & 2.13 & 3.69 & \underline{10.6}\\
\rowcolor{Gray}
$SR_{\text{doctor}}$ & 1.31 & \underline{1.42} & \underline{3.71} & & \underline{1.33} & \underline{2.82} & \textbf{9.55} & & 2.54 & \underline{3.78} & \underline{12.7} & & \underline{1.86} & \underline{3.36} & \textbf{10.4}\\

$RL_{\text{max}}$ & 4.92 & 4.51 & 6.18 & & 6.32 & 7.13 & 12.5 & & 6.37 & 6.82 & 13.8 & & 7.07 & 7.84 & 13.4\\
Energy & 5.21 & 4.99 & 6.84 & & 6.88 & 8.24 & 13.5 & & 6.70 & 7.37 & 14.3 & & 7.62 & 8.95 & 14.4\\
KNN & 2.18 & 3.29 & 6.23 & & 2.10 & 5.03 & 11.7 & & \underline{2.27} & 4.85 & 13.7 & & 2.15 & 5.26 & 12.3\\
ViM & 9.38 & 10.7 & 11.9 & & 9.04 & 12.0 & 16.5 & & 10.4 & 13.5 & 21.1 & & 9.22 & 12.4 & 17.3\\
\bottomrule
\end{tabular}}
\end{table*}%

\section{Ablation experiments for the KNN score}
\label{App: KNN scores ablation}
We show in \cref{AppFig: ScNet RC curves} the SC performance of the KNN score on models \texttt{EVA}, \texttt{ConvNext}, \texttt{ResNext}, and \texttt{VOLO}, respectively, on \texttt{ImageNet} with all In-D and distribution-shifted samples. We can observe that (i) the SC performance of KNN is sensitive to the choice of hyperparameter $k$, and (ii) our selection $k=2$ achieves the best SC performance for KNN score on our \texttt{ImageNet} task. 
\begin{figure}[!htbp] 
\centering
\resizebox{1\columnwidth}{!}{%
\begin{tabular}{c c c c c}
\centering
&\small{\texttt{EVA}}
&\small{\texttt{ConvNext}}
&\small{\texttt{ResNext}}
&\small{\texttt{VOLO}}
\\
&\includegraphics[width=0.24\textwidth]{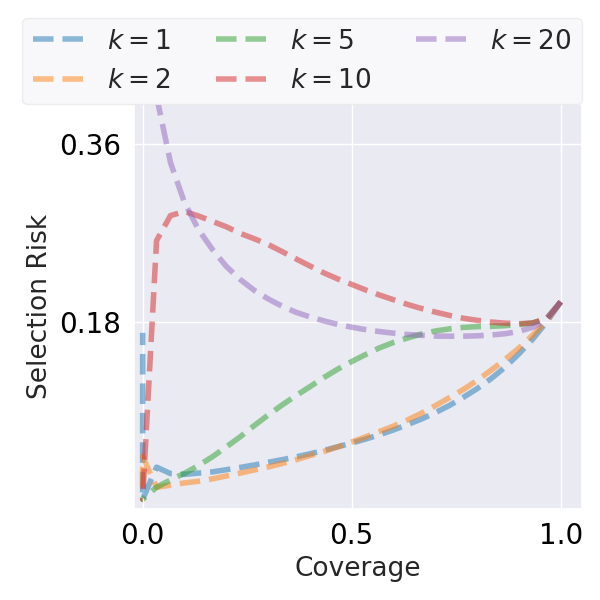}
&
\includegraphics[width=0.24\textwidth]{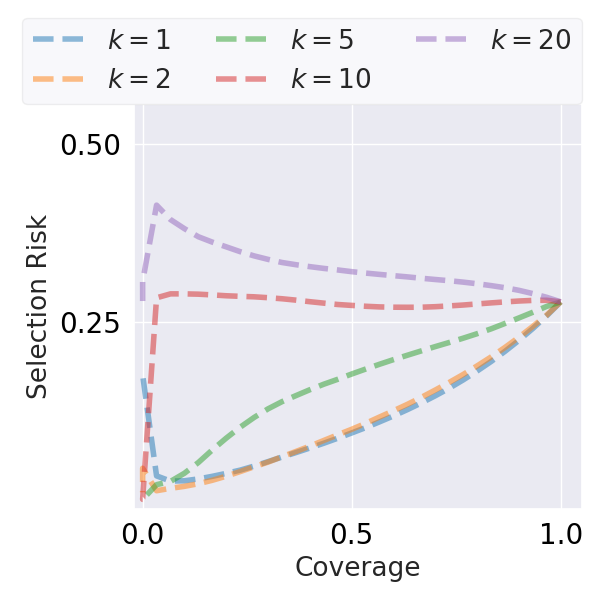}%
&\includegraphics[width=0.24\textwidth]{
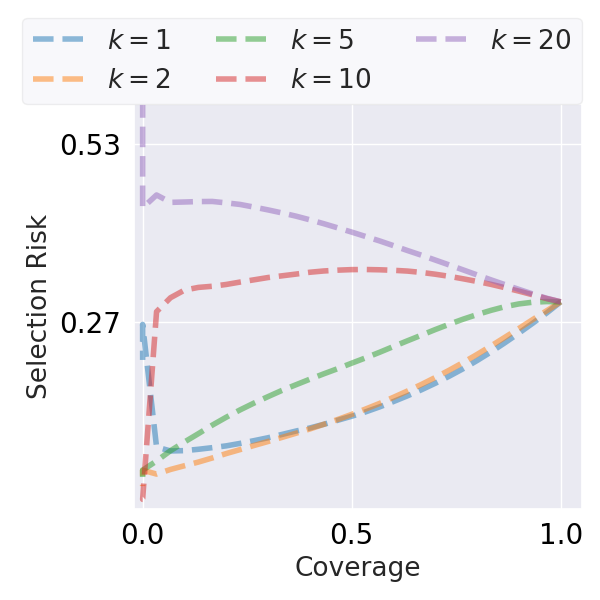}%
&\includegraphics[width=0.24\textwidth]{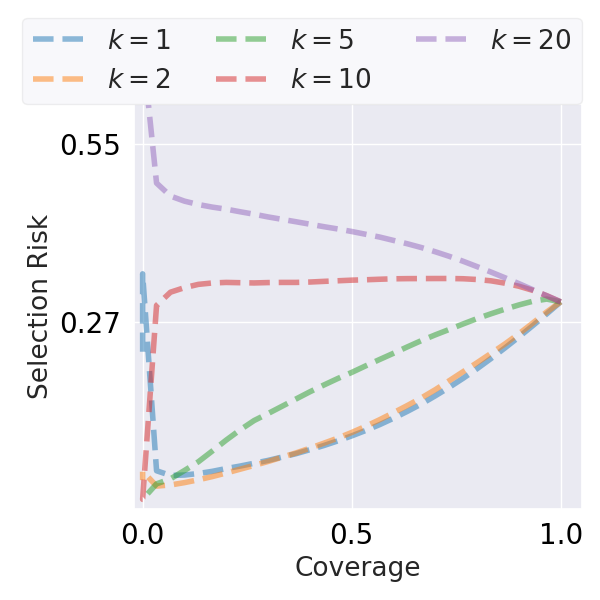}%

\end{tabular}}%
\caption{RC curves achieved by the KNN score with different $k$ on \texttt{ImageNet}}
\label{AppFig: ScNet RC curves}
\vspace{-0.5em}
\end{figure}%

\end{document}